\documentclass{article} 
\usepackage[margin=1in]{geometry}


\usepackage{authblk}
\usepackage[colorlinks=true,citecolor=blue, linkcolor=blue]{hyperref}
\usepackage{url}

\usepackage{xspace,amsmath,amssymb,amsthm,amsfonts,epsfig,syntonly,dsfont,pifont}
\usepackage{bm,color,url,textcomp,enumitem}
\usepackage{algorithmicx,algorithm,algpseudocode}
\usepackage{epstopdf}
\usepackage{empheq,float}
\usepackage{graphicx,subfigure,balance}
\usepackage{multirow}
\usepackage{float}
\usepackage{natbib}

\graphicspath{{./figs/}}
\usepackage{wrapfig}

\newtheorem{assumption}{Assumption}
\newtheorem{proposition}{Proposition}

\newtheorem{lemma}{Lemma}

\newtheorem{theorem}{Theorem}

\theoremstyle{definition}

\DeclareMathOperator*{\argmin}{arg\,min}

\newcommand{\name}{\texttt{ConEF}\xspace}
\newcommand{\iname}{\texttt{iConEF}\xspace}

\title{Contractive error feedback for gradient compression}



\author[1]{\footnote{Work done during internship at AWS AI, Summer 2021.}Bingcong Li}
\author[1]{Shuai Zheng}
\author[1]{Parameswaran Raman}
\author[2]{Anshumali Shrivastava}
\author[3]{Georgios B. Giannakis}

\affil[1]{Amazon Web Services, Santa Clara}
\affil[2]{Rice University, Houston}
\affil[3]{University of Minnesota, Minneapolis}

%

\begin{document}
\date{} 
\maketitle

\begin{abstract}
On-device memory concerns in distributed deep learning have become severe due to (i) the growth of model size in multi-GPU training, and (ii) the wide adoption of deep neural networks for federated learning on IoT devices which have limited storage. In such settings, communication efficient optimization methods are attractive alternatives, however they still struggle with memory issues. To tackle these challenges, we propose an communication efficient method called contractive error feedback (\name). As opposed to SGD with error-feedback (EFSGD) that inefficiently manages memory, \name obtains the sweet spot of convergence and memory usage, and achieves communication efficiency by leveraging biased and all-reducable gradient compression. We empirically validate \name on various learning tasks that include image classification, language modeling, and machine translation and observe that \name saves 80\% – 90\% of the extra memory in EFSGD with almost no loss on test performance, while also achieving 1.3x – 5x speedup of SGD. Through our work, we also demonstrate the feasibility and convergence of \name to clear up the theoretical barrier of integrating \name to popular memory efficient frameworks such as ZeRO-3.
\end{abstract}

\section{Introduction}
\label{sec:introduction}
Many modern machine learning tasks can be cast as an optimization problem
\begin{align}\label{eq.prob}
	\min_{\mathbf{x} \in \mathbb{R}^d} f(\mathbf{x}): = \frac{1}{N} \sum_{i=1}^N \mathbb{E}_{\xi \sim {\cal D}} \big[ f_i(\mathbf{x}, \xi) \big].
\end{align}
Typically, the possibly nonconvex loss function $f_i(\mathbf{x}, \xi)$ is continuously differentiable with respect to (w.r.t.) $\mathbf{x}$. We assume that $f^* > -\infty$ is a global minimum of \eqref{eq.prob}. In this work, we will focus on training neural networks (NN) by solving \eqref{eq.prob} in two important scenarios: (i) data parallelism in a multi-GPU setting, and (ii) federated learning where clients could be edge devices such as smart phones. In both cases, $N$ denotes the number of workers (GPUs or devices) and $f^*$ is no smaller than $0$ given common choices of loss functions such as cross entropy and mean square. 

While training NNs is a resource-hungry task, \textit{bandwidth} and \textit{memory} prevent us from fully harvesting the merits of parallelism. Limited bandwidth slows down the speed of information (e.g., gradients) exchange among workers. In multi-GPU settings, it is observed that communication of gradients becomes a bottleneck that significantly drags down the training speed. While for federated learning, the system capacity is usually confined by communication latency between the server and workers. Memory constraints are less studied compared with bandwidth \citep{sohoni2019low}. However, the trend of increasing model sizes in both computer vision and natural language processing, e.g., ViT (0.6B) \citep{dosovitskiy2020image}, Megatron-LM (8.3B) \citep{shoeybi2019megatron}, T5 (11B) \citep{raffel2019exploring}, is enough evidence that memory is becoming an unavoidable constraint for training large models in current generation of GPUs with 16/32GB memory. For federated learning, it is also necessary to reduce the memory footprint on workers especially for those IoT applications.

In this work, we will focus on the \textit{parsimonious} setting, jointly dealing with {\bf limited bandwidth} and {\bf limited memory}. Our priority is still communication efficiency for speeding up training time, otherwise one can simply rely on SGD with almost no additional memory consumption. Many existing works are unable to cope with such a challenging problem because memory is not carefully handled. 

\textbf{Communication efficiency.} A well-documented approach to alleviate communication bottleneck is to compress the (stochastic) gradients such that training can be made faster with reduced overhead \citep{seide2014},\citep{alistarh2017},\citep{karimireddy2019}. Depending on whether the gradient compressor is biased, these schemes are categorized as follows.

\textit{Unbiased compressors} maintain the unbiasedness of stochastic gradients at the price of enlarged variance. For example, uniform quantization of stochastic gradients is studied in QSGD \citep{alistarh2017}. A concurrent work \citep{wen2017} focuses on a special case of QSGD by quantizing each entry of the gradient into $\{\pm 1, 0\}$. Other variants include natural quantization \citep{horvath2019}, non-uniform quantization \citep{ramezani2021} and adaptive quantization \citep{faghri2020}. Another route to obtain an unbiased compressor is through (scaled) gradient sparsification \citep{wangni2018} or its generalized form, atomic decomposition \citep{wang2018}. However, such methods may be practically slow with performance degradation \citep{vogels2019}. Gradient sparsification relies on importance sampling, requiring memory the same as model size. Although this memory consumption can be circumvented by sampling coordinates uniformly, the performance of uniformly sampling is typically degraded \citep{wang2018}.

\textit{Biased gradient compressors}, on the other hand, have been more successful in practice since not only do they support a more impressive compression ratio, but also a test accuracy comparable with SGD can be obtained in many scenarios. Examples of such compressors contain top-$k$ or (unscaled) random-$k$ sparsification \citep{lin2018,alistarh2019}, signSGD \citep{bernstein2018,bernstein2018b}, and PowerSGD \citep{vogels2019}. Due to the bias, such compressors typically rely on error feedback (EF) schemes to ensure convergence \citep{stich2018,karimireddy2019}.




\textbf{Memory concerns in communication efficient methods.} Memory concerns are highly entangled with  communication efficiency in distributed training. On the one hand, memory footprint can be alleviated through a smaller batchsize \citep{krause2016multiplicative,spring2019}, leading to increased iterations as well as communication rounds per epoch. Hence, communication efficient methods are particularly useful for accelerating training time in such cases. On the other hand, existing communication efficient methods are challenged by limited memory due to several reasons. First, a smaller batch size, which enlarges the variance, degrades the applicability of unbiased gradient compressors. Confirmed in our experiments, the exploded variance typically calls for a small step size which decelerates convergence. Unbiased gradient compressors such as quantization, are also hindered by the need of AllGather \citep{xu2020compressed}, since peak memory proportional to the number of workers is required. This renders them inefficient for the parsimonious setting, especially when the number of workers is large. Memory is also not handled well in methods with biased gradient compressors due to the need of error feedback, where additional memory the same as model size has to be consumed to keep track of accumulated compression error. Note that although no memory footprint is introduced in Local SGD \citep{stich2019}, it also suffers from the problem where variance gets blown. More importantly, local SGD cannot be integrated with ZeRO-3 \citep{rajbhandari2020zero}, state-of-the-art method for coping with limited memory, due to partitioned optimizer states among workers.

In this work, we will focus on Error Feedback (EF) based methods because of the capability of adopting a biased gradient compressor, which is typically contractive and thus more robust to the gradient variance in a small batchsize setting. The additional memory consumption of EFSGD over SGD is the local error vector, which is delicate and critical for ensuring convergence. Meanwhile, the desired approach for memory efficiency is only possible with lightweight and negligible runtime overhead, otherwise it violates the ultimate goal of communication efficient methods. To solve these challenges, we introduce Contractive error feedback. 

In a nutshell, our contributions can be summarized as follows.

\begin{enumerate}
	\item[\textbullet]
	To the best of our knowledge, this is the first work that systematically studies memory concerns in communication efficient methods. Based on our systematic study, we introduce Contractive error feedback (\name) as a way to effectively manage memory given the preference of an allreducable and biased gradient compressor. 
	
	\item[\textbullet] We establish convergence of \name. Our theoretical results suggest a tradeoff between memory efficiency and a faster convergence, and \name finds the sweet spot. 	

	\item[\textbullet] We find that \name~ saves $80\%$ -- $90\%$ additional memory of EFSGD on various learning problems such as image classification, language modeling, and machine translation. With almost the same runtime of EFSGD, \name achieves 1.3x -- 5x speedup over SGD. Though test performance slightly drops on smaller models, the proposed method is more useful and reliable for larger networks where improved test accuracy over EFSGD is observed.
\end{enumerate}

\section{Preliminaries}
\label{sec:prelim}
\textbf{Notation}. Bold lowercase (uppercase) letters denote vectors (matrices); $\| \mathbf{x}\|$ stands for $\ell_2$ norm of $\mathbf{x}$; and $\langle \mathbf{x}, \mathbf{y} \rangle$ denotes the inner product of $\mathbf{x}$ and $\mathbf{y}$. In addition, we use $[\mathbf{x}]_i$ ($[\mathbf{X}]_{i,j}$) to denote the $i$-th entry of vector $\mathbf{x}$ ($i,j$-th entry of matrix $\mathbf{X}$). 

This section briefly recaps EFSGD \citep{stich2018,karimireddy2019} listed under Alg. \ref{alg.efsgd}. We use $\mathbf{g}_t^i$ to denote the stochastic gradient at $\mathbf{x}_t$ on worker $i$, and assume that the stochastic gradients are mutually independent among different workers per iteration. In line 5, the scaled stochastic gradient is augmented via accumulated compression error $\mathbf{e}_t^i$, then compressed and communicated. There is no restriction to the biasedness of gradient compressor ${\cal Q}$. Examples of ${\cal Q}$ include (scaled) sign, random/top-$k$, and powerSGD. The ``aggregate'' in line 7 refers to $\mathbf{\Delta}_t = \frac{1}{N} \sum_i \mathbf{\Delta}_t^i$, and the compression error $\mathbf{e}_{t+1}^i$ is recomputed in line 9 after updating model $\mathbf{x}_t$. 

{\bf How gradients are communicated:} While the ``aggregation'' in line 7 of Alg. \ref{alg.efsgd} is a highly abbreviated term, in practice (compressed) gradients are communicated through AllReduce, AllGather, or parameter-server, among which allreducable compressors are widely appreciated. In contrast to AllGather, AllReduce enjoys low communication cost that is nearly independent of the number of workers \citep{vogels2019,peng2019generic}. In addition, compressed gradients that are allreducable relieve runtime since it only needs to be decoded once. In AllGather, a worker receives $N$ compressed gradients that not only take additional space but also need to be decompressed individually. This has a negative impact on memory and runtime scalability given the number of workers can be large \cite{vogels2019}. When compared to parameter-server, AllReduce bypasses the need of double compression, where compression is applied to the bidirectional communication of clients and server. Parameter-server structure can also introduce communication congestion due to the centralized aggregation at the server side \citep{yu2018gradiveq}.

\begin{algorithm}[H]
    \caption{EFSGD}\label{alg.efsgd}
    \begin{algorithmic}[1]
       	\State \textbf{Initialize:} $\mathbf{x}_0 \in \mathbb{R}^d, \mathbf{e}_0^i = \mathbf{0} \in \mathbb{R}^d, \forall i, \eta$
    	\For {$t=0,1,\dots,T-1$}
    		\State $\text{assert} ~\mathbf{x}_t = \mathbf{x}_t^i$ for every worker $i$
    		\For {worker $i = 1, \ldots, N$  in parallel}
    			\State $ \mathbf{p}_t^i = \eta \mathbf{g}_t^i	+ \mathbf{e}_t^i $ 
				\State $\mathbf{\Delta}_t^i = {\cal Q}(\mathbf{p}_t^i)$ 
				\State $\mathbf{\Delta}_t = \text{Aggregate} (\mathbf{\Delta}_t^i, \forall i)$
				\State $\mathbf{x}_{t+1} = \mathbf{x}_t - \mathbf{\Delta}_t$
				\State $\mathbf{e}_{t+1}^i = \mathbf{p}_t^i - \mathbf{\Delta}_t^i$ 
			\EndFor
		\EndFor
	\end{algorithmic}
\end{algorithm}
Besides the widely appreciated allreducable gradient compressors, the biased compressors are more tailored for settings where the inclination is to rely on a smaller batchsize to save memory. This is because biased gradient compressors are usually contractive; see more details shortly in Assumption \ref{as.4}. However, due to the need of error feedback to ensure convergence, the additional memory consumption of $\mathbf{e}_t^i$ can confine the applicability of biased compressors as well as EFSGD when used in the parsimonious setup, and this issue is somehow omitted by existing works. Other error feedback variants, e.g., \citep{wu2018error,basu2019,xu2021step,richtarik2021}, also rely on additional vectors for the compression error, hence may benefit from the proposed technique as well.

\section{Memory saving via contractive error feedback (\name)}
\label{sec:conef}
In this section, we introduce contractive error feedback which aims to make EFSGD memory efficient. The key idea is to apply another compressor ${\cal C}$ on the error vector $\mathbf{e}_t^i$ such that the memory footprint can be mitigated. The proposed method is summarized in Alg. \ref{alg.c2}, where the name \underline{con}tractive \underline{e}rror \underline{f}eedback (\name) originates from the fact that the error vector is represented in its compressed format. Note that the encoding and decoding of compression in Alg. \ref{alg.c2} are omitted for notational convenience. ${\cal Q}$ and ${\cal C}$ are adopted to denote the gradient and error compressors, respectively, to highlight their different roles. The gradient compressor ${\cal Q}$ is general and a biased one is more recommended for small batchsizes that can squeeze more memory out. An unbiased error compressor ${\cal C}$ is preferable since it is helpful for generalization as we shall discuss later in Section \ref{sec:error}. Moreover, the implementation of ${\cal C}$ has to be lightweight to avoid slowing down runtime. We will focus on theoretical properties first, and practical guidances for the choice of ${\cal C}$ are deferred to Section \ref{sec:error}. 

\begin{algorithm}[H]
    \caption{\name}\label{alg.c2}
    \begin{algorithmic}[1]
       	\State \textbf{Initialize:} $\mathbf{x}_0 \in \mathbb{R}^d, \mathbf{e}_0^i = \mathbf{0} \in \mathbb{R}^d, \forall i, \eta$
    	\For {$t=0,1,\dots,T-1$}
    		\State $\text{assert} ~\mathbf{x}_t = \mathbf{x}_t^i$ for every worker $i$
    		\For {worker $i = 1, \ldots, N$ in parallel}
    			\State $ \mathbf{p}_t^i = \eta \mathbf{g}_t^i	+ \mathbf{e}_t^i $ 
				\State $\mathbf{\Delta}_t^i = {\cal Q}(\mathbf{p}_t^i)$ 
				\State $\mathbf{\Delta}_t = \text{Aggregate} (\mathbf{\Delta}_t^i, \forall i)$
				\State $\mathbf{x}_{t+1} = \mathbf{x}_t - \mathbf{\Delta}_t$
				\State $\mathbf{e}_{t+1}^i = {\cal C}(\mathbf{p}_t^i - \mathbf{\Delta}_t^i)$ 
			\EndFor
		\EndFor
	\end{algorithmic}
\end{algorithm}



\subsection{Convergence}
We establish convergence of Alg. \ref{alg.c2} in this subsection under standard assumptions for communication efficient algorithms in the distributed setting \citep{alistarh2017,karimireddy2019,stich2018,zheng2019,abdi2020,basu2019,alistarh2019}.

\begin{assumption}\label{as.1}
	The objective function $f: \mathbb{R}^d \rightarrow \mathbb{R}$ has $L$-Lipchitz continuous gradients; that is, $\|\nabla f(\mathbf{x}) \!-\! \nabla f(\mathbf{y}) \| \leq L \| \mathbf{x}-\mathbf{y} \|, \forall\, \mathbf{x}, \mathbf{y} \in \mathbb{R}^d$.
\end{assumption} 
\begin{assumption}\label{as.2}
	The stochastic gradient $\mathbf{g}_t^i$ is unbiased with bounded variance $\mathbb{E}\big[ \mathbf{g}_t^i| \mathbf{x}_t \big] = \nabla f(\mathbf{x}_t)$, and $\mathbb{E}[ \| \mathbf{g}_t^i - \nabla f(\mathbf{x}_t) \|^2 | \mathbf{x} ] \leq \sigma^2$. The gradient is also upper bounded $\mathbb{E}[ \| \nabla f(\mathbf{x}_t)\|^2 ] \leq G^2$.
\end{assumption} 
\begin{assumption}\label{as.3}
    (Unbiased error compressor ${\cal C}$).
    For any given $\mathbf{x}$, $\mathbb{E}[ {\cal C} (\mathbf{x})| \mathbf{x} ] = \mathbf{x}$. In addition, there exist $\theta \geq 0$ such that $\mathbb{E} [ \|  {\cal C} (\mathbf{x})  -  \mathbf{x}  \|^2 | \mathbf{x} ] \leq \theta \|\mathbf{x} \|^2  $.
\end{assumption}
\begin{assumption}\label{as.4}
    (Contractive gradient compressor ${\cal Q}$).
    For any given $\mathbf{x}$, there exists $\delta \in (0,1)$ such that $\mathbb{E} [ \| {\cal Q} (\mathbf{x}) -  \mathbf{x}  \|^2| \mathbf{x}] \leq \delta \|\mathbf{x} \|^2  $.
\end{assumption}


Holding true for many allreducable and biased gradient compressors such as unscaled random-(block)-$k$ and powerSGD, Assumption \ref{as.4} implies {\it contractivity} -- the compression error is always smaller than the norm square of the compressed vector. Such compressors are more preferable when working with small batchsizes (for memory savings) since they do not amplify the gradient variance as explained in Apdx. \ref{apdx.sec.ub_b} and \citep{stich2018}. Unbiased compressors satisfying Assumption \ref{as.4} are usually not allreducable; see also Apdx. \ref{apdx.sec.ub_b}. The convergence of Alg. \ref{alg.c2} is established below.


\begin{theorem}\label{thm.c2}
	Suppose that Assumptions \ref{as.1} - \ref{as.4} hold and $T$ is large. Choosing $\eta = {\cal O}\big(\frac{1}{L ( \sqrt{T/N} + T^{1/3} ) }  \big)$,
	Alg. \ref{alg.c2} guarantees that   
	\vspace{-0.05cm} 
	\begin{align}\label{eq.thm1}
        \frac{1}{T}\sum_{t=0}^{T-1}\mathbb{E}\big[ \| \nabla f(\mathbf{x}_t) \|^2 \big] \leq {\cal O} \bigg( \frac{  L \big( f(\mathbf{x}_0) - f^*  \big)  + (\sigma^2 + G^2)}{\sqrt{NT}} \bigg)  +{\cal O}  \bigg(  \frac{ \delta\theta  (\sigma^2 + G^2)}{\sqrt{NT}}\bigg).
    \end{align}
\end{theorem}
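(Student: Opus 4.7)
The plan is to mimic the perturbed-iterate analysis that is standard for EFSGD \citep{stich2018,karimireddy2019}, but carry a second source of stochasticity coming from the error compressor $\mathcal{C}$. Define the virtual iterate
\[
\tilde{\mathbf{x}}_t \;:=\; \mathbf{x}_t - \frac{1}{N}\sum_{i=1}^N \mathbf{e}_t^i.
\]
Substituting the updates in Alg.~\ref{alg.c2} and using $\mathbf{e}_t^i = \mathbf{p}_t^i - \eta \mathbf{g}_t^i$, a direct calculation gives
\[
\tilde{\mathbf{x}}_{t+1} \;=\; \tilde{\mathbf{x}}_t - \eta\, \bar{\mathbf{g}}_t + \boldsymbol{\zeta}_t,\qquad
\boldsymbol{\zeta}_t \;:=\; \frac{1}{N}\sum_{i=1}^N\big[(\mathbf{p}_t^i - \boldsymbol{\Delta}_t^i)-\mathcal{C}(\mathbf{p}_t^i - \boldsymbol{\Delta}_t^i)\big],
\]
where $\bar{\mathbf{g}}_t := \frac{1}{N}\sum_i \mathbf{g}_t^i$. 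By Assumption~\ref{as.3}, $\mathbb{E}[\boldsymbol{\zeta}_t \mid \mathcal{F}_t] = \mathbf{0}$, and its variance is bounded by $\tfrac{\theta}{N^2}\sum_i \mathbb{E}\|\mathbf{p}_t^i - \boldsymbol{\Delta}_t^i\|^2 \leq \tfrac{\theta\delta}{N^2}\sum_i \mathbb{E}\|\mathbf{p}_t^i\|^2$ by Assumption~\ref{as.4}. This is the source of the $\delta\theta$ factor that appears in \eqref{eq.thm1}; EFSGD simply does not have this term because $\mathbf{e}_{t+1}^i$ coincides with $\mathbf{p}_t^i-\boldsymbol{\Delta}_t^i$ there.

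Next I apply $L$-smoothness to $f(\tilde{\mathbf{x}}_{t+1})$ and take expectation, which yields a descent inequality in $\tilde{\mathbf{x}}_t$ with three error terms: (i) the standard SGD variance $\eta^2 L (\sigma^2/N + \|\nabla f(\mathbf{x}_t)\|^2)$, (ii) the gradient-discrepancy $\|\nabla f(\tilde{\mathbf{x}}_t)-\nabla f(\mathbf{x}_t)\|^2 \leq L^2\|\tilde{\mathbf{x}}_t-\mathbf{x}_t\|^2 \leq \frac{L^2}{N}\sum_i\|\mathbf{e}_t^i\|^2$ produced by evaluating the gradient at the virtual iterate rather than at $\mathbf{x}_t$, and (iii) the new contribution $L\,\mathbb{E}\|\boldsymbol{\zeta}_t\|^2$. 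Both (ii) and (iii) reduce to controlling $\frac{1}{N}\sum_i \mathbb{E}\|\mathbf{e}_t^i\|^2$ and $\frac{1}{N}\sum_i \mathbb{E}\|\mathbf{p}_t^i\|^2$.

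The heart of the proof is the error recursion. Using unbiasedness of $\mathcal{C}$ to split a square, then Assumptions~\ref{as.3}--\ref{as.4},
\[
\mathbb{E}\|\mathbf{e}_{t+1}^i\|^2 \;=\; \mathbb{E}\|\mathbf{p}_t^i-\boldsymbol{\Delta}_t^i\|^2 + \mathbb{E}\|\mathcal{C}(\mathbf{p}_t^i-\boldsymbol{\Delta}_t^i)-(\mathbf{p}_t^i-\boldsymbol{\Delta}_t^i)\|^2 \;\leq\; (1+\theta)\delta\,\mathbb{E}\|\mathbf{p}_t^i\|^2.
\]
Applying Young's inequality to $\|\mathbf{p}_t^i\|^2 = \|\eta \mathbf{g}_t^i+\mathbf{e}_t^i\|^2$ with a parameter $\beta$ chosen so that $(1+\theta)\delta(1+1/\beta) =: \rho < 1$, the recursion becomes geometric and unrolls to $\frac{1}{T}\sum_t \mathbb{E}\|\mathbf{e}_t^i\|^2 = \mathcal{O}\!\big(\eta^2(\sigma^2+G^2)\big)$, with the hidden constant depending on $(1+\theta)\delta$ and $1-\rho$. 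The same estimate feeds $\mathbb{E}\|\boldsymbol{\zeta}_t\|^2$. Substituting into the descent inequality, telescoping $t=0,\ldots,T-1$, and choosing $\eta = \mathcal{O}\!\big(1/(L(\sqrt{T/N}+T^{1/3}))\big)$ yields \eqref{eq.thm1}: the $\sqrt{T/N}$ scale balances $(f(\mathbf{x}_0)-f^*)/(\eta T)$ against the SGD variance $\eta L \sigma^2/N$, while the $T^{1/3}$ scale is forced by the $\eta^2 L^2$ penalty coming from (ii)--(iii).

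The main obstacle is precisely the error recursion. Unlike in EFSGD, $\mathcal{C}$ inflates the effective contraction factor from $\delta$ to $(1+\theta)\delta$, and one must ensure this stays below $1$ and that the Young's parameter $\beta$ can be picked to keep $\rho$ bounded away from $1$ without blowing up the $\eta^2(\sigma^2+G^2)$ prefactor; some implicit assumption of the form $(1+\theta)\delta<1$ (or, equivalently, $\theta<(1-\delta)/\delta$) is required and should be flagged. Tracking how the resulting constant propagates into the SGD-variance-scaled term is what produces the clean $\delta\theta(\sigma^2+G^2)/\sqrt{NT}$ dependence reported in the statement; the rest of the bookkeeping (descent lemma, telescoping, step-size tuning) is routine once this recursion is under control.
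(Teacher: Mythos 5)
Your proposal follows essentially the same route as the paper's proof: the same virtual iterate $\mathbf{z}_t = \mathbf{x}_t - \frac{1}{N}\sum_i \mathbf{e}_t^i$, the same zero-mean compression noise $\bm{\zeta}_t$ with per-iteration variance $\mathcal{O}(\delta\theta\eta^2(\sigma^2+G^2)/N)$ (using independence across workers to kill the cross terms), and the same geometric error recursion unrolled to $\mathcal{O}(\eta^2(\sigma^2+G^2))$, followed by the descent lemma, telescoping, and the two-scale step-size choice. The implicit contraction condition you correctly flag is indeed required and is hidden in the paper's Lemma~\ref{lemma.Re} as the hypothesis $\delta = \frac{c}{(1+\lambda)(1+\sqrt{\theta})^2}$ with $c<1$; your exact bias--variance split of $\mathbb{E}\|\mathbf{e}_{t+1}^i\|^2$ yields the marginally weaker requirement $(1+\theta)\delta<1$, whereas the paper's Young's-inequality bound needs $(1+\sqrt{\theta})^2\delta<1$.
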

Proof can be found in Appendix \ref{sec:proofthmc2}.

The dependence on $N$ demonstrates that linear speedup is achieved by \name. The second term in the right hand side of \eqref{eq.thm1} is the additional error term incurred by compression. Next, we compare it with existing works to gain deeper understanding of \name. Upon relating Theorem \ref{thm.c2} with EFSGD \citep{karimireddy2019}, the price paid for memory efficiency can be clearly visualized. Since only the single worker case is considered in the EFSGD paper, we let $N=1$ in Theorem \ref{thm.c2} for a fair comparison. From the additional error term of EFSGD, i.e., ${\cal O}(\frac{\delta (\sigma^2+G^2)}{T})$, it is observed that \name converges slower. However, \name still benefits from the locally compressed error vector when further compared to QSGD  which does not employ error feedback schemes \citep{alistarh2017}. Suppose that ${\cal Q}$ is unbiased with Assumption \ref{as.3} satisfied, then the error term in QSGD is ${\cal O}\big(\frac{(1+\theta) (\sigma^2+G^2)}{\sqrt{NT}}\big)$. Therefore, \name improves the dependence on $\theta$ over QSGD. In addition, Theorem \ref{thm.c2} is also tighter than \citep{horvath2020better} since their error term is ${\cal O}(\frac{( 1 + \delta \theta ) (\sigma^2+G^2) }{\sqrt{NT}})$.

\textbf{\name~finds the sweet spot between memory efficiency and faster convergence.}
The comparisons above suggest a tradeoff between memory efficiency and a fast convergence. QSGD is at one extreme where no extra memory is needed at the price of slower convergence due to the blown variance introduced by unbiased gradient compressors. Note also that in general biased gradient compressors cannot be applied in QSGD. EFSGD is at another extreme, where the extra memory consumption is induced by $\mathbf{e}_t^i$ with the benefit of a small error term in the convergence rate. Our proposed Alg. \ref{alg.c2} lives in between EFSGD and QSGD, it saves memory compared to EFSGD, but incurs an error term smaller than QSGD yet larger than EFSGD.

\subsection{Improved \name (\iname)}
In this subsection, we show that when ${\cal Q}$ (or ${\cal C}$) is accurate, the dependence on $\theta$ (resp. $\delta$) in Theorem \ref{thm.c2} can be improved. The basic idea is to apply contractive error feedback on top of contractive error feedback such that the compression error can be further reduced. The resultant algorithm, \underline{i}mproved \underline{\name} (\iname), is summarized in Alg. \ref{alg.c3}. Depending on whether ${\cal C}$ or ${\cal Q}$ is used twice, there are two versions of \iname as marked in lines 9 -- 13.

Equipping with an accurate ${\cal C}$ with $\theta<1$, the compression error can be reused to retain performance. Mathematically, this error $\mathbf{p}_t^i - \mathbf{\Delta}_t^i  - \tilde{\mathbf{e}}_{t+1}^i $ is once again compressed and added back to $\mathbf{p}_{t+1}^i  $ in the next iteration to reduce the variance of $(\tilde{\mathbf{e}}_{t+1}^i+\mathbf{q}_{t+1}^i)$; see \eqref{eq.apdx.var_zeta_2}. However, this approach might require to store both $\tilde{\mathbf{e}}_{t+1}^i $ and $\mathbf{q}_{t+1}^i $, thus less efficient in terms of memory consumption compared to Alg. \ref{alg.c2} with the exception of cases where ${\cal C}$ is a linear compressor such as count sketch.

The $\theta^2$ in the last term tightens Theorem \ref{thm.c2} showcasing the benefit of the third compressor. Examples of an accurate ${\cal C}$ satisfying $ 0\leq \theta < 1$ include quantization working in ``dense regime'' \citep{alistarh2017}. In particular, when the quantization level is chosen as $s = 2\sqrt{d}$, we have $\theta = \frac{1}{4}$. Note also that it is possible to encode a compressed vector with less than $3.44d + 32$ bits \citep[Lemma A.6]{alistarh2017}. This means the memory reduction is almost $90\%$. Another example is natural compression that can offer at least $3.5$x memory saving with $\theta\leq \frac{1}{8}$ \citep{horvath2019}. Similarly, when ${\cal Q}$ satisfies Assumption \ref{as.4}, which amounts to most of commonly adopted biased gradient compressors with $\delta<1$, the dependence on $\delta$ can be improved through \iname-v2.

\begin{algorithm}[H]
    \caption{\iname}\label{alg.c3}
    \begin{algorithmic}[1]
       	\State \textbf{Initialize:} $\mathbf{x}_0, \tilde{\mathbf{e}}_0^i = \mathbf{0}, \mathbf{q}_0^i = \mathbf{0}, \eta$
    	\For {$t=0,1,\dots,T-1$}
    		\State $\text{assert} ~\mathbf{x}_t = \mathbf{x}_t^i$ for every worker $i$
    		\For {worker $i = 1, \ldots, N$ in parallel}
    			\State $ \mathbf{p}_t^i = \eta \mathbf{g}_t^i	+ \tilde{\mathbf{e}}_t^i + \mathbf{q}_t^i $  
				\State $\mathbf{\Delta}_t^i = {\cal Q}(\mathbf{p}_t^i)$ 
				\State $\mathbf{\Delta}_t = \text{Aggregate} (\mathbf{\Delta}_t^i, \forall i)$
				\State $\mathbf{x}_{t+1} = \mathbf{x}_t - \mathbf{\Delta}_t$
				\If {version 1}:
					\State $\tilde{\mathbf{e}}_{t+1}^i = {\cal C}(\mathbf{p}_t^i - \mathbf{\Delta}_t^i )$ \Comment {\iname-v1}
				\Else:
					\State $\tilde{\mathbf{e}}_{t+1}^i = {\cal Q}(\mathbf{p}_t^i - \mathbf{\Delta}_t^i)$ \Comment {\iname-v2}
				\EndIf
				\State $\mathbf{q}_{t+1}^i =  {\cal C}(\mathbf{p}_t^i - \mathbf{\Delta}_t^i - \tilde{\mathbf{e}}_{t+1}^i)$ 
			\EndFor
		\EndFor
	\end{algorithmic}
\end{algorithm}


\begin{theorem}\label{thm.c3}
	Suppose that Assumption \ref{as.3} holds with $0\leq\theta<1$. Given Assumptions \ref{as.1}, \ref{as.2} and \ref{as.4}, and choosing $\eta = {\cal O}\big(\frac{1}{L ( \sqrt{T/N} + T^{1/3} ) }  \big)$, 
	\iname-v1 guarantees that
	\vspace{-0.1cm}
    \begin{align*}
        \frac{1}{T}\sum_{t=0}^{T-1}\mathbb{E}\big[ \| \nabla f(\mathbf{x}_t) \|^2 \big] \leq {\cal O} \bigg( \frac{  L \big( f(\mathbf{x}_0) - f^*  \big)  + (\sigma^2 + G^2)}{\sqrt{NT}} \bigg)  +{\cal O}  \bigg(  \frac{ \delta\theta^2  (\sigma^2 + G^2)}{\sqrt{NT}}\bigg).
    \end{align*}
\end{theorem}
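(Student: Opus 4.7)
The plan is to reduce the proof of Theorem \ref{thm.c3} to the proof of Theorem \ref{thm.c2} by showing that the only quantity that changes in the analysis is the conditional variance of the accumulated error vector, and that this variance is a factor of $\theta$ tighter for \iname-v1 than for \name. Concretely, introduce the shorthand $\mathbf{r}_t^i := \mathbf{p}_t^i - \mathbf{\Delta}_t^i$. In \name, the updated error vector $\mathbf{e}_{t+1}^i = {\cal C}(\mathbf{r}_t^i)$ satisfies, by Assumption \ref{as.3}, $\mathbb{E}[\mathbf{e}_{t+1}^i \mid \mathbf{r}_t^i] = \mathbf{r}_t^i$ and $\mathbb{E}[\|\mathbf{e}_{t+1}^i - \mathbf{r}_t^i\|^2 \mid \mathbf{r}_t^i] \le \theta\|\mathbf{r}_t^i\|^2$. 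In \iname-v1, the role of $\mathbf{e}_{t+1}^i$ is played by $\mathbf{s}_{t+1}^i := \tilde{\mathbf{e}}_{t+1}^i + \mathbf{q}_{t+1}^i$, which one can view as a single unbiased estimator of $\mathbf{r}_t^i$ with smaller variance.

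First I would verify that $\mathbf{s}_{t+1}^i$ is an unbiased estimator with squared-error bound $\theta^2 \|\mathbf{r}_t^i\|^2$. Since $\tilde{\mathbf{e}}_{t+1}^i = {\cal C}(\mathbf{r}_t^i)$ and $\mathbf{q}_{t+1}^i = {\cal C}(\mathbf{r}_t^i - \tilde{\mathbf{e}}_{t+1}^i)$, the tower property gives $\mathbb{E}[\mathbf{s}_{t+1}^i\mid \mathbf{r}_t^i] = \mathbb{E}[\tilde{\mathbf{e}}_{t+1}^i + (\mathbf{r}_t^i - \tilde{\mathbf{e}}_{t+1}^i)\mid \mathbf{r}_t^i] = \mathbf{r}_t^i$. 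For the second moment, condition first on $\tilde{\mathbf{e}}_{t+1}^i$ and apply Assumption \ref{as.3} to the inner compression, then take expectation over $\tilde{\mathbf{e}}_{t+1}^i$ and apply Assumption \ref{as.3} again,
\begin{align*}
\mathbb{E}\bigl[\|\mathbf{s}_{t+1}^i - \mathbf{r}_t^i\|^2 \,\big|\, \mathbf{r}_t^i\bigr]
 \;=\; \mathbb{E}\bigl[\|\mathbf{q}_{t+1}^i - (\mathbf{r}_t^i - \tilde{\mathbf{e}}_{t+1}^i)\|^2 \,\big|\, \mathbf{r}_t^i\bigr]
 \;\le\; \theta\,\mathbb{E}\bigl[\|\mathbf{r}_t^i - \tilde{\mathbf{e}}_{t+1}^i\|^2 \,\big|\, \mathbf{r}_t^i\bigr]
 \;\le\; \theta^2 \|\mathbf{r}_t^i\|^2.
\end{align*}
This is the crucial gain: double-compression with an unbiased ${\cal C}$ replaces the factor $\theta$ by $\theta^2$, at the cost of storing two compressed vectors.

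Next I would re-run the proof of Theorem \ref{thm.c2} verbatim, tracking where the constant $\theta$ enters. Typically the argument constructs a virtual iterate $\tilde{\mathbf{x}}_t = \mathbf{x}_t - \tfrac{1}{N}\sum_i \mathbf{e}_t^i$ so that $\tilde{\mathbf{x}}_{t+1} - \tilde{\mathbf{x}}_t = -\tfrac{\eta}{N}\sum_i \mathbf{g}_t^i$ plus a mean-zero noise coming from the compression error; descent-lemma bookkeeping on $f(\tilde{\mathbf{x}}_t)$ then produces the SGD term plus a term proportional to $\mathbb{E}\|\tfrac{1}{N}\sum_i \mathbf{e}_t^i\|^2$, which is controlled by a recursion whose contraction constant is governed by $\delta$ and whose driving noise is the per-step compression-variance bound $\theta\|\mathbf{r}_t^i\|^2$. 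Replacing $\mathbf{e}_t^i$ by $\mathbf{s}_t^i$ and using the $\theta^2$ bound derived above, the same recursion now has driving noise $\theta^2\|\mathbf{r}_t^i\|^2$, which, propagated through the remainder of the argument, replaces the $\delta\theta$ factor in \eqref{eq.thm1} by $\delta\theta^2$ and yields the claimed rate with the same step-size choice $\eta = {\cal O}\bigl(\frac{1}{L(\sqrt{T/N}+T^{1/3})}\bigr)$.

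The main obstacle I anticipate is not algebraic but bookkeeping: one must check that the inequality $\|\mathbf{r}_t^i\|^2 = \|\mathbf{p}_t^i - {\cal Q}(\mathbf{p}_t^i)\|^2 \le \delta\|\mathbf{p}_t^i\|^2$ (Assumption \ref{as.4}) combines with $\|\mathbf{p}_t^i\|^2 \le 2\eta^2\|\mathbf{g}_t^i\|^2 + 2\|\mathbf{s}_t^i\|^2$ to close the recursion for $\mathbb{E}\|\mathbf{s}_t^i\|^2$ with the same step-size constraint as in Theorem \ref{thm.c2}; here the smaller factor $\theta^2<\theta$ only helps, so the step-size regime is preserved. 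All other steps — the per-iterate descent lemma, the use of Assumption \ref{as.2} to bound $\mathbb{E}\|\mathbf{g}_t^i\|^2 \le \sigma^2 + G^2$, and the averaging over $t=0,\dots,T-1$ — carry over unchanged from the proof of Theorem \ref{thm.c2}.
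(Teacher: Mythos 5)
Your proposal is correct and follows essentially the same route as the paper: the paper likewise defines $\mathbf{e}_{t+1}^i := \tilde{\mathbf{e}}_{t+1}^i + \mathbf{q}_{t+1}^i$, verifies unbiasedness via the tower property, derives the improved variance bound $\theta^2\|\mathbf{p}_t^i-\mathbf{\Delta}_t^i\|^2$ by conditioning on the inner compression exactly as you do, and then reruns the Theorem \ref{thm.c2} machinery with the two supporting lemmas adjusted accordingly (the contraction condition becoming $\delta=\frac{c}{(1+\lambda)(1+\theta)^2}$, which as you note is only easier to satisfy). No gaps.
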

Proof can be found in Appendix \ref{sec:proofthmc3}.


 \begin{theorem}\label{thm.c3_v2}
	Suppose that Assumptions \ref{as.1} -- \ref{as.4} hold. Choosing $\eta = {\cal O}\big(\frac{1}{L ( \sqrt{T/N} + T^{1/3} ) }  \big)$,
	\iname-v2 guarantees that    
	\begin{align*}
        \frac{1}{T}\sum_{t=0}^{T-1}\mathbb{E}\big[ \| \nabla f(\mathbf{x}_t) \|^2 \big] \leq {\cal O} \bigg( \frac{  L \big( f(\mathbf{x}_0) - f^*  \big)  + (\sigma^2 + G^2)}{\sqrt{NT}} \bigg)  +{\cal O}  \bigg(  \frac{ \delta^2 \theta  (\sigma^2 + G^2)}{\sqrt{NT}}\bigg).
    \end{align*}
\end{theorem}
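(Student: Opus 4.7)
The plan is to reuse the virtual-iterate analysis of Theorems \ref{thm.c2} and \ref{thm.c3}, adjusting only the single calculation that replaces the $\theta^2$ of \iname-v1 with $\delta^2$ in \iname-v2. I would introduce the shifted sequence and per-worker residual
\[
\tilde{\mathbf{x}}_t := \mathbf{x}_t - \frac{1}{N}\sum_{i=1}^{N}\bigl(\tilde{\mathbf{e}}_t^i + \mathbf{q}_t^i\bigr),
\qquad
\mathbf{r}_t^i := \mathbf{p}_t^i - \mathbf{\Delta}_t^i - \tilde{\mathbf{e}}_{t+1}^i - \mathbf{q}_{t+1}^i,
\]
so that $\tilde{\mathbf{x}}_{t+1} = \tilde{\mathbf{x}}_t - \eta\bar{\mathbf{g}}_t + \tfrac{1}{N}\sum_i \mathbf{r}_t^i$ with $\bar{\mathbf{g}}_t := \tfrac{1}{N}\sum_i \mathbf{g}_t^i$. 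Although $\tilde{\mathbf{e}}_{t+1}^i = {\cal Q}(\mathbf{p}_t^i-\mathbf{\Delta}_t^i)$ is itself biased, the unbiased ${\cal C}$ applied to what remains exactly cancels this bias, so a short tower computation shows $\mathbb{E}[\tilde{\mathbf{e}}_{t+1}^i+\mathbf{q}_{t+1}^i \mid \mathbf{p}_t^i,\mathbf{\Delta}_t^i] = \mathbf{p}_t^i - \mathbf{\Delta}_t^i$ and hence $\mathbb{E}[\mathbf{r}_t^i \mid \mathcal{F}_t] = 0$, which gives $\mathbb{E}[\tilde{\mathbf{x}}_{t+1}\mid\mathcal{F}_t] = \tilde{\mathbf{x}}_t - \eta\nabla f(\mathbf{x}_t)$.

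The crux is the conditional variance of $\mathbf{r}_t^i$. Conditioning first on $\tilde{\mathbf{e}}_{t+1}^i$, the law of total variance yields
\[
\mathbb{E}\bigl[\|\mathbf{r}_t^i\|^2 \bigm| \mathbf{p}_t^i,\mathbf{\Delta}_t^i\bigr]
\;=\; \mathbb{E}\bigl[\operatorname{Var}(\mathbf{q}_{t+1}^i\mid\tilde{\mathbf{e}}_{t+1}^i)\bigm|\mathbf{p}_t^i,\mathbf{\Delta}_t^i\bigr]
\;\leq\; \theta\,\mathbb{E}\bigl[\|\mathbf{p}_t^i-\mathbf{\Delta}_t^i-\tilde{\mathbf{e}}_{t+1}^i\|^2\bigm|\mathbf{p}_t^i,\mathbf{\Delta}_t^i\bigr]
\;\leq\; \theta\delta\,\|\mathbf{p}_t^i-\mathbf{\Delta}_t^i\|^2,
\]
where the outer-variance term of the decomposition vanishes because $\mathbb{E}[\mathbf{q}_{t+1}^i\mid\tilde{\mathbf{e}}_{t+1}^i,\cdot]=\mathbf{p}_t^i-\mathbf{\Delta}_t^i-\tilde{\mathbf{e}}_{t+1}^i$ is a $\tilde{\mathbf{e}}_{t+1}^i$-measurable function of known quantities, the first inequality uses Assumption \ref{as.3}, and the second uses Assumption \ref{as.4} applied to ${\cal Q}$ with input $\mathbf{p}_t^i-\mathbf{\Delta}_t^i$. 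A further application of Assumption \ref{as.4}, now on $\mathbf{p}_t^i$ itself, gives $\mathbb{E}[\|\mathbf{r}_t^i\|^2\mid\mathbf{p}_t^i]\leq\delta^2\theta\|\mathbf{p}_t^i\|^2$. This is the precise origin of the $\delta^2\theta$ factor in Theorem \ref{thm.c3_v2}; in \iname-v1 the same chain delivers $\delta\theta^2$ because the second compression there is the unbiased ${\cal C}$ rather than the biased ${\cal Q}$.

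The remainder mirrors the proof of Theorem \ref{thm.c2}. I would write the $L$-smoothness descent inequality for $f(\tilde{\mathbf{x}}_{t+1})$, expand the inner product by $2\langle a,b\rangle = \|a\|^2+\|b\|^2-\|a-b\|^2$ together with Assumption \ref{as.1} to extract $-\tfrac{\eta}{2}\|\nabla f(\mathbf{x}_t)\|^2$ plus an $L^2\|\tilde{\mathbf{x}}_t-\mathbf{x}_t\|^2$ error, and bound
\[
\mathbb{E}\bigl[\|\tilde{\mathbf{x}}_{t+1}-\tilde{\mathbf{x}}_t\|^2\bigr] \;=\; \eta^2\mathbb{E}[\|\bar{\mathbf{g}}_t\|^2] \;+\; \tfrac{1}{N^2}\sum_i\mathbb{E}\bigl[\|\mathbf{r}_t^i\|^2\bigr],
\]
where the cross term drops by $\mathbb{E}[\mathbf{r}_t^i\mid\mathcal{F}_t,\{\mathbf{g}_t^j\}]=0$ and the workers' compressions are mutually independent. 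Young's inequality on $\|\mathbf{p}_t^i\|^2$ combined with Assumption \ref{as.2} converts the $\delta^2\theta\|\mathbf{p}_t^i\|^2$ bound into a $\delta^2\theta\,\eta^2(\sigma^2+G^2)$ contribution, while $\mathbb{E}[\|\tilde{\mathbf{e}}_t^i+\mathbf{q}_t^i\|^2]$ (the same thing as $\|\tilde{\mathbf{x}}_t-\mathbf{x}_t\|^2$ in disguise after a Jensen step) is controlled by a geometric recursion with amplification $\delta(1+\delta\theta)$, derived from the same total-variance calculation but now without the centering subtraction. Telescoping from $t=0$ to $T-1$, dividing by $\eta T$, and taking $\eta=\mathcal{O}(1/[L(\sqrt{T/N}+T^{1/3})])$ to balance the SGD noise $\eta\sigma^2/N$ against the compression residual $\eta^2\delta^2\theta(\sigma^2+G^2)$ delivers the stated rate.

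The main obstacle I anticipate is the bookkeeping of the three-layer compression chain rather than any deep new inequality. A naïve decomposition of $\mathbb{E}[\|\tilde{\mathbf{e}}_{t+1}^i+\mathbf{q}_{t+1}^i\|^2]$ that fails to exploit the conditional centering of $\mathbf{r}_t^i$ on $\mathbf{p}_t^i-\mathbf{\Delta}_t^i$ would silently introduce an extra $(1+\theta)$ factor and spoil the advertised $\delta^2\theta$ scaling, while the independence across workers has to be invoked at just the right conditioning level so that the $1/N$ factor survives in the variance of $\bar{\mathbf{r}}_t$. Keeping the middle ${\cal Q}$-layer's bias harmless in expectation via the outer ${\cal C}$-layer, while simultaneously paying only the $\theta\delta$ variance price, is the technical kernel that distinguishes this proof from a direct rerun of Theorem \ref{thm.c2}.
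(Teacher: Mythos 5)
Your proposal is correct and follows essentially the same route as the paper: define the combined residual $\tilde{\mathbf{e}}_{t+1}^i+\mathbf{q}_{t+1}^i$ as a single unbiased compressor of $\mathbf{p}_t^i-\mathbf{\Delta}_t^i$ with variance factor $\delta\theta$, pick up the second $\delta$ by applying Assumption \ref{as.4} to ${\cal Q}(\mathbf{p}_t^i)$, and rerun the virtual-iterate analysis of Theorem \ref{thm.c2} with the two supporting lemmas adjusted accordingly. The only cosmetic difference is that you derive the key bound $\mathbb{E}[\|\mathbf{r}_t^i\|^2\mid\mathbf{p}_t^i,\mathbf{\Delta}_t^i]\leq\delta\theta\|\mathbf{p}_t^i-\mathbf{\Delta}_t^i\|^2$ directly via the law of total variance, whereas the paper cites \citep[Theorem 3]{horvath2020better} for the same inequality; your constants in the error recursion are in fact marginally tighter, which is immaterial at the ${\cal O}(\cdot)$ level.
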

Proof can be found in Appendix \ref{sec:proofthmc3v2}.

\section{Using count-sketch as an efficient error-compressor}
\label{sec:error}
Previous sections have coped with contractive error feedback in its general form. In what follows, we will discuss the error compressor ${\cal C}$ in detail to gain more insights and practical guidance.
\subsection{Generalization properties}
The merits of an unbiased ${\cal C}$ over a biased one come from generalization properties. To see this point, consider overparameterized least square problems \citep{wilson2017} as an example
\begin{align}\label{eq.opls}
	\min _{ \mathbf{x}} f(\mathbf{x}) := \frac{1}{2}\| \mathbf{A} \mathbf{x} - \mathbf{y} \|^2	
\end{align}
where $\mathbf{A} \in \mathbb{R}^{n\times d}$ with $d \gg n$ is the data matrix and $\mathbf{y} \in \mathbb{R}^d$ denotes the labels. Since $d$ is larger than the number of data samples $n$, this overparameterized problem has multiple global minima with zero loss collected in the set ${\cal X}^*:= \{ \mathbf{x} | f(\mathbf{x}) = 0\}$. Define ${\cal R}(\mathbf{X})$ as the range of $\mathbf{X}$.
 Given the fact that any stochastic gradient of \eqref{eq.opls} lies in ${\cal R}(\mathbf{A}^\top)$, it is straightforward that SGD converges to a point in ${\cal R}(\mathbf{A}^\top)$ when initialized well. It is further shown that the solution SGD converges to has the smallest norm among ${\cal X}^*$, i.e., $\mathbf{x}^* = \argmin_{\mathbf{x} \in {\cal X}^*} = \mathbf{A}^\top (\mathbf{A}\mathbf{A}^\top)^{-1} \mathbf{y}$. This $\mathbf{x}^*$ is also known as the maximum margin solution, which leads to potential generalization benefits \citep{valiant1984, cortes1995}. In fact, for problem \eqref{eq.opls}, if the iterates of a converging algorithm lie in ${\cal R}(\mathbf{A}^\top)$, finding the maximum margin solution is ensured \citep[Lemma 9]{karimireddy2019}. 

EFSGD converges to a point that is sufficiently close to ${\cal R}(\mathbf{A}^\top)$ \citep{karimireddy2019}. This gives the potential generalization benefit when adopting biased gradient compressors ${\cal Q}$ compared with e.g., sign-SGD \citep{bernstein2018, bernstein2018b}. Next, we show that such a generalization merit is maintained in Alg. \ref{alg.c2} and its variants when ${\cal C}$ is unbiased.

\begin{theorem}\label{thm.gen}
	If Assumptions \ref{as.1} - \ref{as.4} hold, initialized at $\mathbf{x}_0  \in {\cal R}(\mathbf{A}^\top)$, \name~ ensures $\mathbb{E}\big[ \mathbf{x}_t -\! \frac{1}{N} \! \sum_i \mathbf{e}_t^i  \big] \in {\cal R}(\mathbf{A}^\top)$; moreover, \iname-v1 and v2 guarantee $\mathbb{E}\big[ \mathbf{x}_t \! - \! \frac{1}{N} \! \sum_i \mathbf{q}_t^i  - \! \frac{1}{N} \! \sum_i\tilde{\mathbf{e}}_t^i \big] \in {\cal R}(\mathbf{A}^\top)$.
\end{theorem}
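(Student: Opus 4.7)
The plan is to track a corrected iterate whose expected trajectory obeys a vanilla-SGD-like recursion that visibly lives in ${\cal R}(\mathbf{A}^\top)$. The key structural fact I would exploit is that for the least-squares objective \eqref{eq.opls}, every realizable stochastic gradient has the form $\mathbf{A}_S^\top(\mathbf{A}_S\mathbf{x}-\mathbf{y}_S)$ for some sample index set $S$, and therefore lies in ${\cal R}(\mathbf{A}^\top)$ pathwise (i.e.,\ almost surely, not merely in expectation). Since ${\cal R}(\mathbf{A}^\top)$ is a linear subspace, it is closed under averaging across workers and under expectation, which are the only operations applied below.

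For \name, I would introduce the auxiliary sequence $\mathbf{z}_t := \mathbf{x}_t - \frac{1}{N}\sum_i \mathbf{e}_t^i$ and prove $\mathbb{E}[\mathbf{z}_t]\in{\cal R}(\mathbf{A}^\top)$ by induction on $t$. The base case is immediate since $\mathbf{e}_0^i=\mathbf{0}$ and $\mathbf{x}_0\in{\cal R}(\mathbf{A}^\top)$ by assumption. For the inductive step, let ${\cal F}_t$ denote the sigma-algebra of the history up through computing $\mathbf{\Delta}_t^i$, substitute lines~5--9 of Alg.~\ref{alg.c2} into $\mathbf{z}_{t+1}$, and take conditional expectation. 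Assumption~\ref{as.3} gives $\mathbb{E}[{\cal C}(\mathbf{p}_t^i-\mathbf{\Delta}_t^i)\mid{\cal F}_t]=\mathbf{p}_t^i-\mathbf{\Delta}_t^i$, so the compressed chunks $\mathbf{\Delta}_t^i$ cancel out of the combination, collapsing the update to
\begin{equation*}
\mathbb{E}[\mathbf{z}_{t+1}\mid{\cal F}_t] \;=\; \mathbf{z}_t - \frac{\eta}{N}\sum_{i=1}^N \mathbf{g}_t^i.
\end{equation*}
Taking full expectation and combining the inductive hypothesis with $\mathbb{E}[\mathbf{g}_t^i]\in{\cal R}(\mathbf{A}^\top)$ closes the induction.

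For \iname-v1 and v2, the same recipe works with the corrected iterate $\mathbf{z}_t := \mathbf{x}_t - \frac{1}{N}\sum_i(\tilde{\mathbf{e}}_t^i+\mathbf{q}_t^i)$. The only place the two versions could in principle diverge is line~10 vs.\ line~12: v1 builds $\tilde{\mathbf{e}}_{t+1}^i$ from the unbiased ${\cal C}$, while v2 uses the possibly biased ${\cal Q}$. However, because $\mathbf{q}_{t+1}^i={\cal C}(\mathbf{p}_t^i-\mathbf{\Delta}_t^i-\tilde{\mathbf{e}}_{t+1}^i)$, the tower property together with Assumption~\ref{as.3} yields
\begin{equation*}
\mathbb{E}\!\left[\tilde{\mathbf{e}}_{t+1}^i+\mathbf{q}_{t+1}^i\,\big|\,{\cal F}_t\right] \;=\; \mathbf{p}_t^i-\mathbf{\Delta}_t^i,
\end{equation*}
so any bias carried by $\tilde{\mathbf{e}}_{t+1}^i$ in v2 is cancelled by $\mathbf{q}_{t+1}^i$. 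The rest of the induction is then identical to the \name case.

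There is no genuine obstacle beyond (i) spotting the right corrected iterate $\mathbf{z}_t$ so that the $\mathbf{\Delta}_t^i$ terms annihilate under conditional expectation, and (ii) noticing the $\tilde{\mathbf{e}}+\mathbf{q}$ cancellation that makes the \iname argument agnostic to whether ${\cal C}$ or ${\cal Q}$ is used in line~10/12. The one detail worth stating carefully is that $\mathbf{g}_t^i\in{\cal R}(\mathbf{A}^\top)$ holds pathwise, not just in expectation, so the subspace membership is preserved by the unconditional expectation without needing any additional regularity beyond the existence assumed in Assumption~\ref{as.2}.
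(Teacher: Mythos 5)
Your proposal is correct and follows essentially the same route as the paper: both define the corrected iterate $\mathbf{z}_t = \mathbf{x}_t - \frac{1}{N}\sum_i \mathbf{e}_t^i$ (resp. subtracting $\tilde{\mathbf{e}}_t^i+\mathbf{q}_t^i$ for \iname), use the unbiasedness of ${\cal C}$ to reduce the expected update to a plain SGD step, and conclude from $\mathbf{x}_0,\mathbf{g}_t^i \in {\cal R}(\mathbf{A}^\top)$. The paper unrolls the recursion to $t=0$ where you phrase it as an induction, and it handles \iname by setting $\mathbf{e}_{t+1}^i := \tilde{\mathbf{e}}_{t+1}^i+\mathbf{q}_{t+1}^i$ exactly as in your $\tilde{\mathbf{e}}+\mathbf{q}$ cancellation — these are only presentational differences.
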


Note that for \name, we have $\| \mathbb{E} [ \frac{1}{N}\sum_{i=1}^N \mathbf{e}_t^i   ] \|^2 \leq \mathbb{E} [  \|\frac{1}{N}\sum_{i=1}^N \mathbf{e}_t^i  \|^2 ] = {\cal O} (\eta^2)$. This implies that $\mathbf{x}_t$ is close to the maximum margin solution at convergence. We are unable to obtain such a property for a biased ${\cal C}$. The potential generalization merits justify the unbiased choice for ${\cal C}$. 

\subsection{Choices of error compressor} 
Recall that a prudent error compressor ${\cal C}$ for \name~should: (i) satisfy Assumption \ref{as.3}; and, (ii) have a lightweight implementation to avoid runtime overhead. Two representative examples, quantization and count sketch, will be discussed in detail in this subsection. 


\subsubsection{Quantization}
Quantization is a large family of compression schemes. For the ease of presentation, here we only describe the basic version \citep{alistarh2017}. Given $s$ uniformly distributed quantization levels in $[0,1]$, this approach quantizes the $i$-the entry of $\mathbf{e}$ as
\begin{align}\label{eq.quantization}
	{\cal C}([\mathbf{e}]_i) = \| \mathbf{e} \| \cdot {\rm sign}([\mathbf{e}]_i) \cdot \xi_i^s([\mathbf{e}]_i)
\end{align}
where $\xi_i^s(\cdot)$ is a random variable designed to ensure the unbiasedness of ${\cal C}([\mathbf{e}]_i)$. In particular, let integer $l \in [0,s]$ such that $\frac{|[\mathbf{e}]_i |}{\|\mathbf{e}\|} \in [\frac{l}{s}, \frac{l+1}{s}]$, and then $\xi_i^s([\mathbf{e}]_i)$ is set to $\frac{l+1}{s}$ with probability $\frac{|[\mathbf{e}]_i |}{\|\mathbf{e}\|} s - l$ and $\frac{l}{s}$ otherwise. Assumption 3 is naturally satisfied according to the proposition below.

\begin{proposition}
	\citep{alistarh2017} For any $\mathbf{e}$, the quantization in \eqref{eq.quantization} satisfies: i) $\mathbb{E}[{\cal C}(\mathbf{e})] = \mathbf{e}$; ii) $\mathbb{E}[ \| {\cal C}(\mathbf{e}) - \mathbf{e}\|^2] \leq \min\{ \frac{d}{s^2}, \frac{\sqrt{d}}{s} \} \| \mathbf{e} \|^2 $; and, iii) there exists an encoding scheme that requires $\tilde{\cal O}(s^2 + s\sqrt{d})$ bits in expectation to represent ${\cal C}(\mathbf{e})$. 
\end{proposition}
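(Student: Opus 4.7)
The plan is to handle the three claims separately, all of which follow by direct calculation using the explicit form of $\xi_i^s$.

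For part (i), I would first observe that by the definition of $\xi_i^s$, when $\frac{|[\mathbf{e}]_i|}{\|\mathbf{e}\|} \in [\frac{l}{s}, \frac{l+1}{s}]$, one has $\mathbb{E}[\xi_i^s([\mathbf{e}]_i)] = \frac{l+1}{s}\big(\frac{|[\mathbf{e}]_i|}{\|\mathbf{e}\|}s - l\big) + \frac{l}{s}\big(1 - \frac{|[\mathbf{e}]_i|}{\|\mathbf{e}\|}s + l\big) = \frac{|[\mathbf{e}]_i|}{\|\mathbf{e}\|}$. Plugging this into \eqref{eq.quantization} immediately gives $\mathbb{E}[{\cal C}([\mathbf{e}]_i)] = \|\mathbf{e}\|\cdot \mathrm{sign}([\mathbf{e}]_i) \cdot \frac{|[\mathbf{e}]_i|}{\|\mathbf{e}\|} = [\mathbf{e}]_i$. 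Assembling coordinate-wise yields $\mathbb{E}[{\cal C}(\mathbf{e})] = \mathbf{e}$, and since the coordinates are quantized with independent randomness, this also decouples the second moment calculation in part (ii).

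For part (ii), I would compute the coordinate-wise variance of the Bernoulli-like random variable $\xi_i^s$. Setting $p_i := \frac{|[\mathbf{e}]_i|}{\|\mathbf{e}\|}s - l \in [0,1]$, one gets $\mathrm{Var}(\xi_i^s) = \frac{p_i(1-p_i)}{s^2}$, so $\mathrm{Var}({\cal C}([\mathbf{e}]_i)) = \|\mathbf{e}\|^2 \cdot \frac{p_i(1-p_i)}{s^2}$. The first bound follows from $p_i(1-p_i) \leq \frac{1}{4}$ and summing over the $d$ coordinates (giving $\frac{d}{4s^2}\|\mathbf{e}\|^2 \leq \frac{d}{s^2}\|\mathbf{e}\|^2$). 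The second bound uses the sharper inequality $p_i(1-p_i) \leq p_i \leq s\cdot \frac{|[\mathbf{e}]_i|}{\|\mathbf{e}\|}$, which after summation gives $\sum_i \mathrm{Var}({\cal C}([\mathbf{e}]_i)) \leq \frac{\|\mathbf{e}\|\cdot \|\mathbf{e}\|_1}{s}$; an application of Cauchy--Schwarz ($\|\mathbf{e}\|_1 \leq \sqrt{d}\|\mathbf{e}\|$) then yields $\frac{\sqrt{d}}{s}\|\mathbf{e}\|^2$. Taking the minimum of the two bounds concludes part (ii).

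Part (iii) is where the real work lies, and I would essentially quote the QSGD encoding argument. The encoding transmits $\|\mathbf{e}\|$ (a single float, $O(32)$ bits), and for each nonzero coordinate its sign (one bit) together with the integer level $\ell_i \in \{0,1,\ldots,s\}$ assigned by $\xi_i^s$. The nonzero levels are encoded with Elias' recursive code, whose length scales like $\log \ell_i + O(\log \log \ell_i)$. The main obstacle is bounding the expected number of nonzero coordinates and the expected sum of logs of the transmitted integers; here one uses that $\mathbb{E}[|\{i : \xi_i^s \neq 0\}|] \leq \sum_i \min\{1, p_i s\}$ together with Cauchy--Schwarz applied to $\sum_i \frac{|[\mathbf{e}]_i|}{\|\mathbf{e}\|}$, which yields the $s\sqrt{d}$ contribution, while the $s^2$ contribution comes from concavity of $\log$ combined with the constraint that each transmitted level is at most $s$. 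Balancing these two contributions and folding in logarithmic factors gives the claimed $\tilde{\cal O}(s^2 + s\sqrt{d})$ expected bit count.

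The hardest step is part (iii), since parts (i) and (ii) are essentially direct moment computations, whereas part (iii) requires a careful sparsity-and-magnitude argument about the random levels together with a non-trivial coding scheme. Since the proposition is attributed to \citep{alistarh2017}, my plan is to carry out parts (i) and (ii) in full and to defer to their Appendix for the encoding bound in part (iii), noting only the two balancing estimates above.
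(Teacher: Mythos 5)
The paper does not prove this proposition at all --- it is quoted verbatim from \citep{alistarh2017} --- and your computations for (i) and (ii) are exactly the standard moment calculations from that source (unbiasedness of the two-point randomized rounding, the variance bound via $p_i(1-p_i)\leq\min\{1/4,\,p_i\}$ and Cauchy--Schwarz), with (iii) correctly deferred to the QSGD encoding lemma. Your proposal is correct and takes essentially the same route as the cited reference, which is all the paper itself relies on.
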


When setting quantization level $s = {\cal O}(1)$, the memory saving is impressive since it only requires ${\cal O}(\sqrt{d})$ bits for a $d$-dimensional vector at the cost of additional implementation burden of e.g., Elias encoding. Stochastic quantization is also suitable for the error compressor ${\cal C}$ in \iname-v1 since it can ensure $\theta<1$ when $s = {\cal O}(\sqrt{d})$. 
 
Other quantization approaches can also be applied as the error compressor. For instance, one can use int-quantization \citep{mishchenko2021}, where a fp32 number is compressed into int8 or int16; and non-uniform quantization \citep{ramezani2021} is also a valid choice. 

\subsubsection{Count sketch}\label{sec.count_sketch}
While quantization is straightforward to come up with, it relies on bit operations for compression rather than working on float numbers directly. Therefore, we introduce count sketch \citep{charikar2002} in this subsection, whose implementation is simply incrementing entries in a matrix. 

Count sketch compresses a vector $\mathbf{e}$ by maintaining a matrix $\mathbf{S}$ of size $v \times w < d$, i.e., ${\cal C}(\mathbf{e}) := \mathbf{S}$. Both $v$ and $w$ are chosen according to some desirable accuracy guarantees discussed shortly. Count sketch relies on hash functions $h_i: \{1, 2, \ldots , d \} \mapsto \{ 1, 2, \ldots ,w \}$ and random sign functions $s_i:  \{1, 2, \ldots, d \} \mapsto \{ +1, -1 \}$ in $i$-th row of $\mathbf{S}$. Count sketches leverage $[\mathbf{e}]_p$ to update each row $i$ of $\mathbf{S}$ by incrementing $h_i(p)$-th column by $s_i(p) [\mathbf{e}]_p$. Compression of $\mathbf{e}$ can be obtained by updating $\mathbf{S}$ with $\{ (p , [\mathbf{e}]_p) \}_{p=1}^d$; see Fig. \ref{fig.cs} (a). To decompress and estimate $[\mathbf{e}]_p$ from $\mathbf{S}$, one needs to calculate ${\rm Median} ( s_i(p) \cdot [\mathbf{S}]_{i, h_i(p)} , \forall i )$. Replacing median with mean also works, but typically median performs better. Count sketch satisfies Assumption \ref{as.3} as shown in the following proposition.


\begin{figure}[t]
\centering
		\hspace{-0.3cm}
		\includegraphics[width=.45\textwidth]{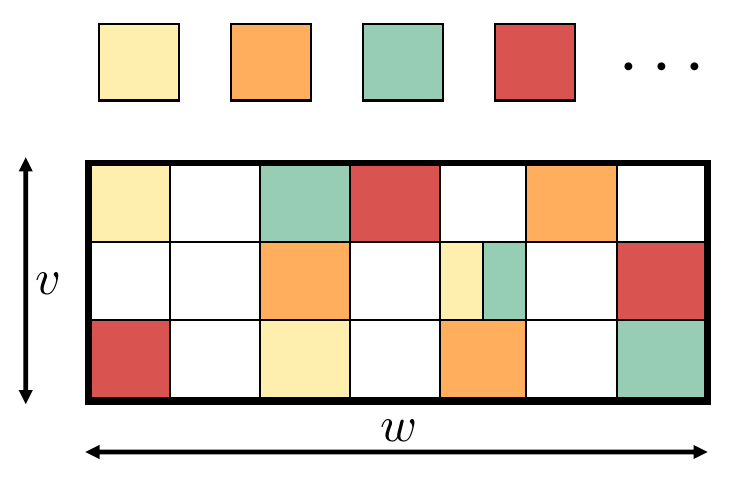}
		\hspace{0.1cm}
		\includegraphics[width=.45\textwidth]{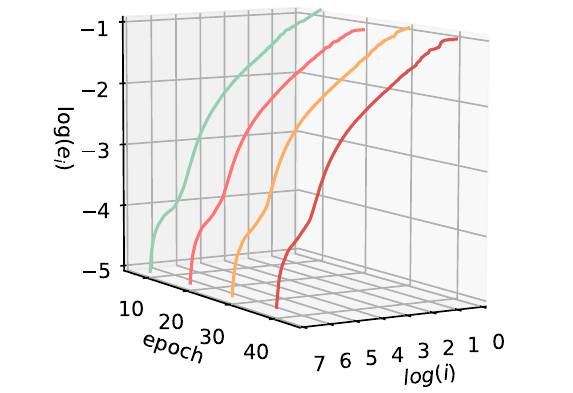}
		\hspace{-0.2cm}
		\includegraphics[width=.45\textwidth]{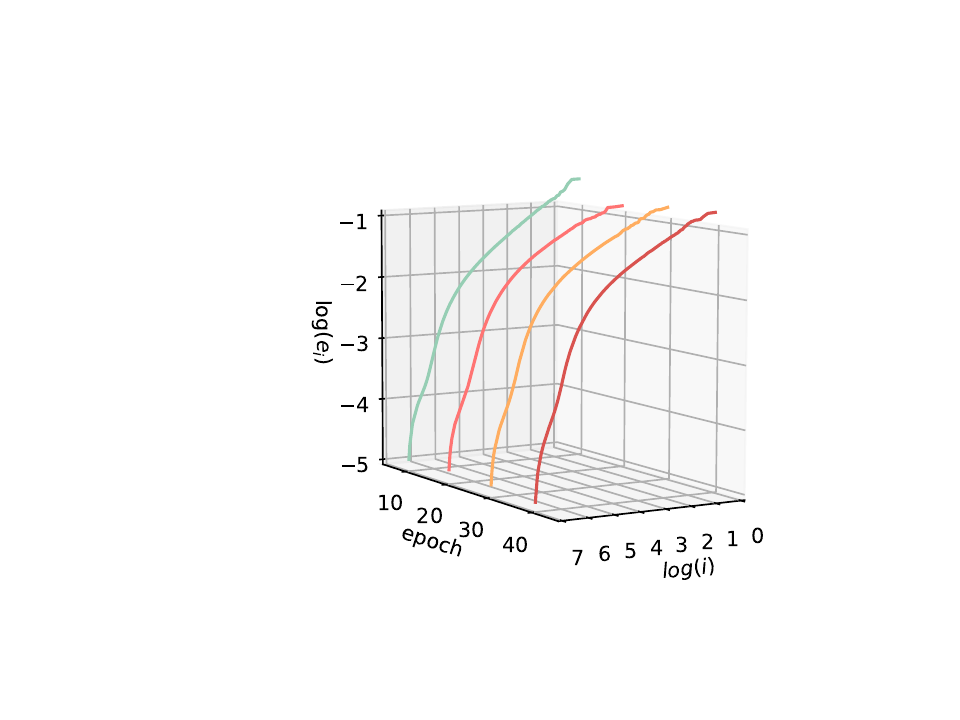}
	\caption{(a) An illustration of count sketch compressing the first $4$ entries of $\mathbf{e}_t^i$. A collision is at $[\mathbf{S}]_{2,5}$ (index starts with $1$); (b) magnitude of $\mathbf{e}_t^i$ in EFSGD; (c) magnitude of $\mathbf{e}_t^i$ in partial EFSGD.} 
	 \label{fig.cs}
\end{figure}

\begin{proposition}\label{prop.cs}
	\citep{charikar2002} Count sketch is unbiased, i.e., $\mathbb{E}[{\cal C}(\mathbf{e})] = \mathbf{e}$. With $w = \Theta(\frac{1}{\epsilon^2} )$ and $v = \Theta(\log \frac{d}{p})$, count sketch ensures $ \| \mathbf{e} - {\cal C}(\mathbf{e}) \|_{\infty}	\leq \epsilon \| \mathbf{e} \| $ with probability at least $1 - p$.
\end{proposition}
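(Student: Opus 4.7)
The plan is to handle the two claims separately. The decoded value at coordinate $p$ is $\hat{e}_p := \text{Median}_i\big(s_i(p) \cdot [\mathbf{S}]_{i, h_i(p)}\big)$, and expanding a single row gives
\begin{align*}
s_i(p) \cdot [\mathbf{S}]_{i, h_i(p)} = [\mathbf{e}]_p + \sum_{q \neq p} s_i(p) s_i(q) [\mathbf{e}]_q \, \mathbf{1}\{h_i(q) = h_i(p)\},
\end{align*}
so the noise at row $i$ is a Rademacher-weighted sum over colliding entries. Unbiasedness of a single row then follows because $s_i(p)s_i(q)$ has mean zero for $q \neq p$ and is independent of the hash $h_i$. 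To promote this to the median, I would use that the noise term is symmetric about $0$, hence each row-estimator has distribution symmetric about $[\mathbf{e}]_p$; for odd $v$, the median of independent symmetric random variables is itself symmetric about the same point, yielding $\mathbb{E}[\hat{e}_p] = [\mathbf{e}]_p$, and hence $\mathbb{E}[{\cal C}(\mathbf{e})] = \mathbf{e}$.

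For the $\ell_\infty$ bound I would first control one row via Chebyshev and then amplify via the median. A direct variance calculation gives
\begin{align*}
\text{Var}\big(s_i(p) \cdot [\mathbf{S}]_{i, h_i(p)}\big) = \sum_{q \neq p} [\mathbf{e}]_q^2 \, \Pr[h_i(q) = h_i(p)] \leq \frac{\|\mathbf{e}\|^2}{w},
\end{align*}
using pairwise independence of $h_i$ over $w$ buckets and the fact that the Rademacher cross products have mean zero. Chebyshev then gives $\Pr\big\{ |s_i(p) [\mathbf{S}]_{i, h_i(p)} - [\mathbf{e}]_p| > \epsilon \|\mathbf{e}\| \big\} \leq 1/(w\epsilon^2)$, and choosing $w = \Theta(1/\epsilon^2)$ with a large enough constant drives the per-row failure probability below some fixed $\alpha < 1/2$, say $1/3$.

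Next I would amplify via the median of $v$ independent rows. The event $\{|\hat{e}_p - [\mathbf{e}]_p| > \epsilon \|\mathbf{e}\|\}$ implies that at least $v/2$ of the row estimates are ``bad''; since the rows are independent with per-row bad probability at most $1/3$, a standard Chernoff bound yields that this has probability at most $e^{-\Omega(v)}$. Taking $v = \Theta(\log(d/p))$ pushes the per-coordinate failure probability below $p/d$, and a union bound over the $d$ coordinates gives $\|{\cal C}(\mathbf{e}) - \mathbf{e}\|_\infty \leq \epsilon\|\mathbf{e}\|$ with probability at least $1-p$.

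The main obstacle is bookkeeping rather than a deep conceptual one: the cleanest unbiasedness argument hinges on the symmetry-of-the-median observation (and implicitly an odd $v$), and the constants in the Chebyshev-then-Chernoff chain must be threaded so that the per-row failure probability is strictly below $1/2$ before the median amplification kicks in. Everything else reduces to standard count-sketch manipulations, which is why this proposition is attributed to \citep{charikar2002}.
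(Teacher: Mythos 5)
The paper does not prove this proposition: it is imported verbatim from \citep{charikar2002} (with the unbiasedness attributed to \citep{spring2019}), so there is no in-paper argument to compare against. Your proposal is the standard Charikar--Chen--Farach-Colton proof --- row-wise decomposition into signal plus Rademacher-weighted collision noise, variance bound $\|\mathbf{e}\|^2/w$ via pairwise independence, Chebyshev to get per-row failure probability below $1/3$ at $w=\Theta(1/\epsilon^2)$, Chernoff amplification of the median over $v$ independent rows, and a union bound over the $d$ coordinates --- and it is correct. Two small points. First, your symmetry argument for unbiasedness of the median is the right idea and is in fact the only honest way to get $\mathbb{E}[{\cal C}(\mathbf{e})]=\mathbf{e}$ for $v>1$ (the median of unbiased estimators is not unbiased in general); but writing the noise as $s_i(p)\cdot W$ with $W=\sum_{q\neq p}s_i(q)[\mathbf{e}]_q\mathbf{1}\{h_i(q)=h_i(p)\}$, symmetry requires $s_i(p)$ to be a fair sign \emph{independent of} $W$, which holds for fully independent sign functions but not automatically under the limited (pairwise or four-wise) independence usually assumed for the hash families; this should be stated as a hypothesis. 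Also, the odd-$v$ restriction is unnecessary: the average-of-two-middle-values median of an exchangeable symmetric family is still symmetric about the same center. Second, note that the paper's recommended setting is $v=1$, where the median is trivial and unbiasedness is immediate from your first display, so the delicate part of your argument is only needed for the general statement.
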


Proposition \ref{prop.cs} \citep{spring2019} suggests that count sketch is unbiased. Using inequality $\| \mathbf{a}\|_\infty \leq \| \mathbf{a} \|\leq \sqrt{d} \| \mathbf{a} \|_\infty $, it is not difficult to verify $ \| \mathbf{e} - {\cal C}(\mathbf{e}) \|	\leq \epsilon \sqrt{d} \| \mathbf{e} \|$. Although count sketch satisfies a high probability variant of Assumption \ref{as.3} with $\theta =\epsilon^2 d$, it works well in our experimental studies with the recommendation of $v= 1$ or $v=3$. Hence, this error compressor is more practical relative to the theoretical bounds. Another useful property of count sketch is the linearity, i.e., ${\cal C}(\mathbf{e}_1 + \mathbf{e}_2) = {\cal C}(\mathbf{e}_1) + {\cal C}(\mathbf{e}_2)$. This is helpful since (i) it opens the possibility to squeeze $\tilde{\mathbf{e}}_{t+1}$ and $\mathbf{q}_{t+1}$ in \iname-v1 to further reduce the memory consumption by adding these two count sketches; and (ii) it eases error reset \citep{basu2019, xie2020}, which synchronizes the local errors every a few iterations among all workers. The second merit is because count sketch is small in size and allreducable, and more on this can be found in Section \ref{apdx.sec.add_num}. 

Empirically, count sketch performs best when the compressed vector follows power law\footnote{The magnitude-index plot is roughly a line in log-log scale.}. We train ResNet18 on CIFAR10, and plot the magnitude of $\mathbf{e}_t^i$ on the first worker at the end of epochs $\{10, 20, 30, 40\}$ in Fig. \ref{fig.cs}(b). We observe that power law is not satisfied. However, in Fig. \ref{fig.cs}(c) we find that the error vectors in a variant of EFSGD, termed partial EFSGD \citep{abdi2020} (see Alg. \ref{alg.c2_mn} in Section \ref{apdx.add}), follow broken power law, that is, a piece-wise linear function in the log-log plot. Partial EFSGD adds part of the compression error, i.e., $(1-\beta)\mathbf{e}_t^i$ for $\beta\in [0,1)$, to stochastic gradients before compressing. As we shall see in numerical experiments, the update in partial EFSGD is helpful to save more memory when working with count sketch.  

\subsection{Partial \name/EFSGD}\label{apdx.add}
To utilize memory in the most efficient manner when adopting count sketches as the error compressor, we introduce a variant \name, termed partial \name~in Alg. \ref{alg.c2_mn}. Partial \name~is adapted from partial EFSGD \citep{abdi2020}, which can be recovered when ${\cal C}$ in line 9 is an identity mapping. The main difference with \name~is the introduction of the scaler $\beta\in [0,1)$, highlighted in blue. Partial EFSGD/\name~adds \textit{part} of the compression error, i.e., $(1-\beta)\mathbf{e}_t^i$, to the stochastic gradient before compressing; see line 5. The remained local error $\beta\mathbf{e}_t^i$ is kept and included in the update of $\mathbf{e}_{t+1}^i$ in line 9 for the use of next iteration. The convergence of partial \name is straight forward when combining our proof with \citep{abdi2020}. 

\begin{algorithm}[H]
    \caption{Partial EFSGD / partial \name}\label{alg.c2_mn}
    \begin{algorithmic}[1]
    	\State \textbf{Initialize:} $\mathbf{x}_0 \in \mathbb{R}^d, \mathbf{e}_0^i = \mathbf{0} \in \mathbb{R}^d, \forall i, \eta$, $\beta \in [0, 1)$
    	\For {$t=0,1,\dots,T-1$}
    		\State $\text{assert} ~\mathbf{x}_t = \mathbf{x}_t^i$ for every worker $i$
    		\For {worker $i = 1, \dots N$ in parallel}
    			\State $ \mathbf{p}_t^i = \eta \mathbf{g}_t^i	+ (1-\beta)\mathbf{e}_t^i $ 
				\State $\mathbf{\Delta}_t^i = {\cal Q}(\mathbf{p}_t^i)$ 
				\State $\mathbf{\Delta}_t = \text{Aggregate} (\mathbf{\Delta}_t^i, \forall i)$
				\State $\mathbf{x}_{t+1} = \mathbf{x}_t - \mathbf{\Delta}_t$
				\State $\mathbf{e}_{t+1}^i ={\cal C}( \beta \mathbf{e}_t^i + \mathbf{p}_t^i - \mathbf{\Delta}_t^i)$ 
			\EndFor
		\EndFor
	\end{algorithmic}
\end{algorithm}

The partial \name~update in line 9 is helpful when ${\cal C}$ is count sketch. In particular, we can rewrite line 9 as $\mathbf{e}_{t+1}^i = {\cal C}(\mathbf{e}_t^i - (1-\beta) \mathbf{e}_t^i + \mathbf{p}_t^i - \mathbf{\Delta}_t^i)$. This means that the compression error in $\mathbf{e}_t^i$ is also scaled down or subtracted. This is helpful for suppressing the accumulation of compression error. 

Partial \name~benefits from the linearity of count sketch since the decompression of $\mathbf{e}_t^i$ can be avoided when updating $\mathbf{e}_{t+1}^i$ in line 9. One can simply compress $\mathbf{p}_t^i-\mathbf{\Delta}_t^i$ and add it to the scaled $\mathbf{e}_t^i$. Moreover, auxiliary variables in global memory can be avoided by taking advantage of the shared memory of the GPU. Compared to a nonlinear error compressor ${\cal C}$, count sketch further reduces the burden on runtime.

\section{Numerical results}
\label{sec:num}
In this section, we conduct experimental studies to validate the proposed algorithms. Since our major goal is to push the limit of memory saving, we will first focus on \name, and experiments of \iname~can be found in Section \ref{apdx.sec.add_num}. Overall, the experiments are designed to answer three research questions (RQs): 
\begin{itemize}
    \item RQ1: How should we choose hyperparameters in \name? Since quantization is well-known in community, we place more emphasis on the less understood count sketch.
    \item RQ2: When the gradient is aggressively compressed, how much memory can be saved?
    \item RQ3: How will the proposed method perform on different learning tasks?
\end{itemize}
Although allreducable gradient compressors are desirable, we also include numerical results for scaled-sign, which requires AllGather for communication, to mimic parameter-server scenarios in RQ1. Testing various gradient compressors is also helpful to demonstrate the generality of \name. We follow common practice where both gradient and error compressors are applied in their layerwise form. Our implementation utilizes \texttt{torch.DistributedDataParallel} so that the bucket trick can be leveraged to overlap gradient computation and communication, and NCCL is chosen as communication backend.

\subsection{RQ1: guidance on hyperparameter choice}\label{sec.scaledsign}
In this subsection, we focus on the parameter choices of count sketch and quantization in \name~ using scaled-sign gradient compressor $
	{\cal Q}(\mathbf{g}) = \frac{\| \mathbf{g} \|_1}{d} {\rm sign}(\mathbf{g})$ as an example.
About $97\%$ communication overhead can be reduced. As AllGather is required for communication, the peak memory consumption is increased. Such a gradient compressor is adopted to simulate the parameter-server setting, where there is enough memory capacity on the server side. We test this setting with 4 GPUs on an Amazon EC2 p3.8xlarge instance. When using allreducable gradient compressors, the general guidance for parameter choices is similar, although the numbers might change. Most of experiments are tested on CIFAR10,\footnote{https://www.cs.toronto.edu/~kriz/cifar.html} which consists of 60000 32x32 images in 10 classes with 50000 training samples and 10000 test ones. Standard data augmentation and preprocessing techniques are employed \citep{he2016deep}. 

\begin{table}[t]
\centering 
\caption{$\beta$ vs. sketch size for convergence}\label{tab.beta_vs_size}
 \begin{tabular}{ c*{9}{|c}} 
    \hline
    \hline
  $\beta$ & $0.1$  & $0.2$  & $0.3$  &$0.4$  & $0.5$  & $0.6$  & $0.7$  & $0.8$  & $0.9$  \\ \hline
 smallest sketch size & $0.9$  & $0.8$  & $0.7$  & $0.6$  & $0.4$  & $0.4$  & $0.3$  & $0.2$  & $0.1$  \\ \hline
\end{tabular} 
\end{table}

\begin{table}[t]
\centering 
\caption{$\beta$ vs. sketch size for convergence for unscaled random block $k$ gradient compressor}\label{tab.beta_vs_size2}
 \begin{tabular}{ c*{9}{|c}} 
    \hline
    \hline
  $\beta$             & $0.1$  & $0.2$  & $0.3$  &$0.4$  & $0.5$  & $0.6$  & $0.7$  & $0.8$  & $0.9$  \\ \hline
 smallest sketch size & $-$  & $0.9$  & $0.9$  & $0.8$  & $0.7$  & $0.5$  & $0.3$  & $0.3$  & $0.1$  \\ \hline
\end{tabular} 
\end{table}

We start with comparing contractive error feedback for vanilla and partial EFSGD using count sketches. With $v=1$, the sketch size has to be $0.95$x model size to ensure a good convergence for compressing the error vector in vanilla EFSGD. On the other hand, the error vector of partial EFSGD follows the broken power law, and hence is more suitable for count sketches. We list the choice of $\beta$ and the smallest sketch size for convergence in Table. \ref{tab.beta_vs_size}. Note that only $\{0.9, 0.8, 0.7,$ $0.6,0.5,0.4,0.3, 0.2, 0.1 \}$ are tested, though there is no barrier to choose other values. It is observed that a larger $\beta$ leads to more saved memory. This is possibly because that a larger $\beta$ helps to suppress the error accumulation in count sketches; we include the choice of $\beta$ and the smallest sketch size for convergence in Table. \ref{tab.beta_vs_size2} when the gradient compressor ${\cal Q}$ is unscaled random-block-$k$, which is allreducable. We choose $k$ such that $90\%$ of communication overhead is reduced. Similar to scaled-sign, it is observed that a larger $\beta$ leads to more saved memory. Hence, we recommend to choose $\beta=0.9$ or even larger.

Next we focus on choices for $v$ and $w$ in count sketches. We implement the partial EFSGD variant 
\begin{figure}{t} 
	\centering
	\includegraphics[width=.45\textwidth]{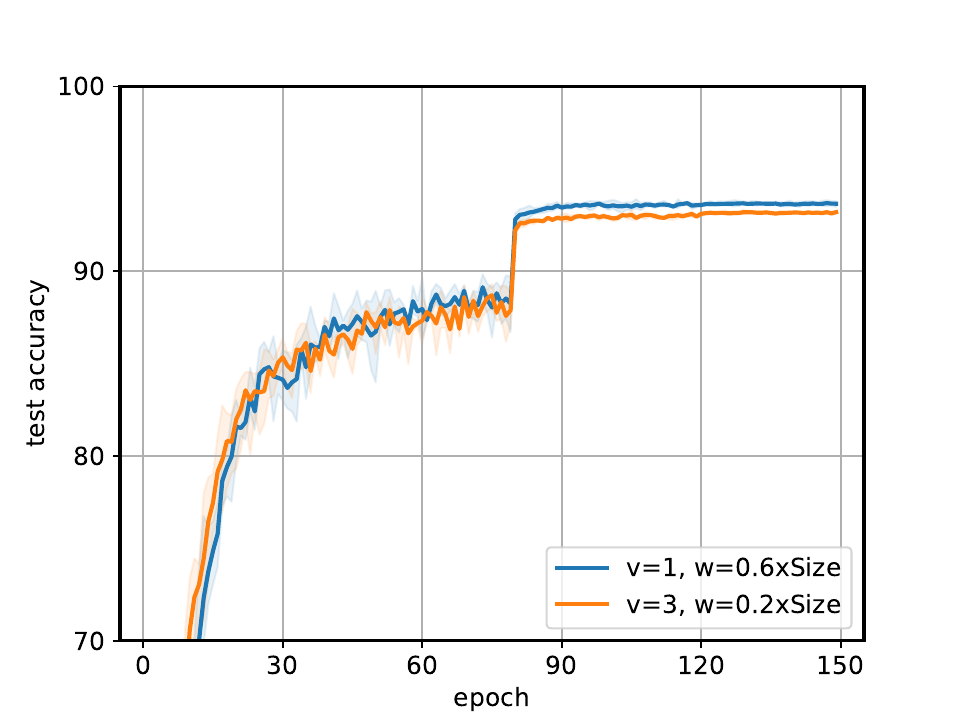} 
	\caption{\name~with different count sketches.}
	 \label{fig.cs_choice}
\end{figure}
of \name~(see Alg. \ref{alg.c2_mn} in Section \ref{apdx.add}) with memory budget as $0.6$x model size. Two specific sketches, $v=1, w = 0.6$x model size and $v=3, w = 0.2$x model size, are tested. It can be seen in Fig. \ref{fig.cs_choice} that the first sketch with $v=1$ performs better. This suggests that a larger $w$, leading to less collision of hash functions, is more numerically beneficial compared with an increased $v$. Similar observations also appear on allreducable compressors. Hnece,  $v=1$ is adopted in most of our experiments. 

\begin{figure}[H]
	\centering
		\includegraphics[width=.45\textwidth]{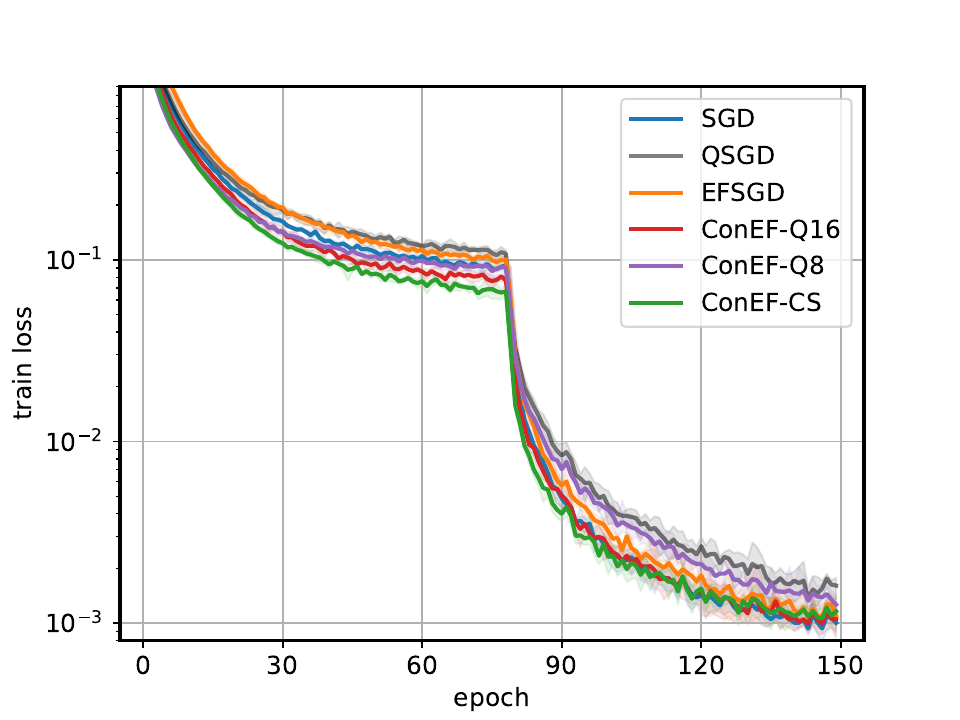}
		\hspace{-0.3cm}
		\includegraphics[width=.45\textwidth]{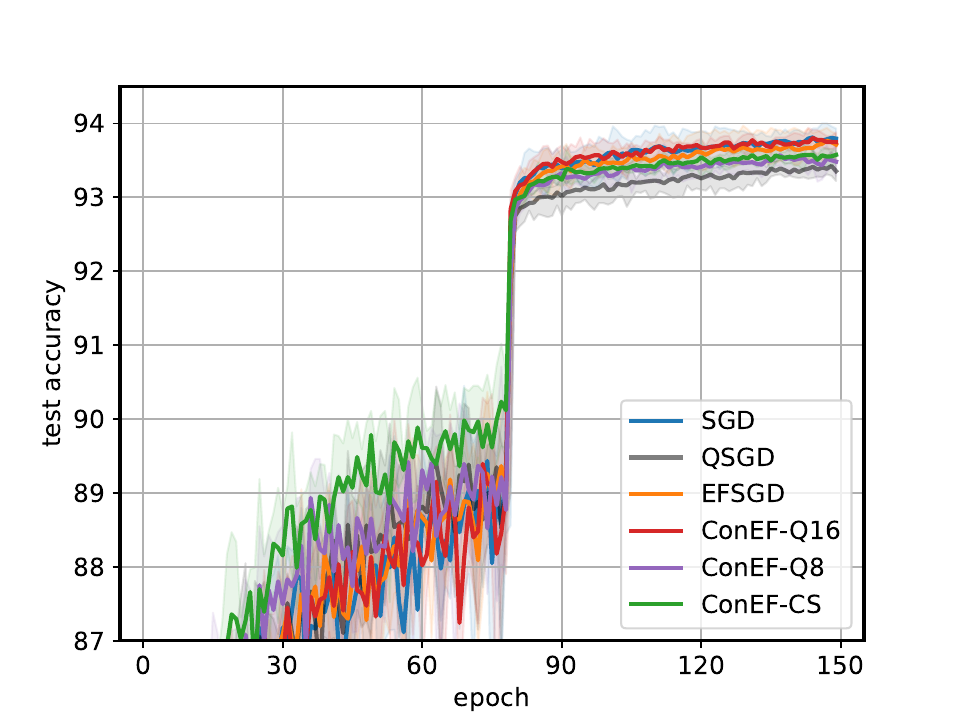}
		\hspace{-0.4cm}
		\includegraphics[width=.45\textwidth]{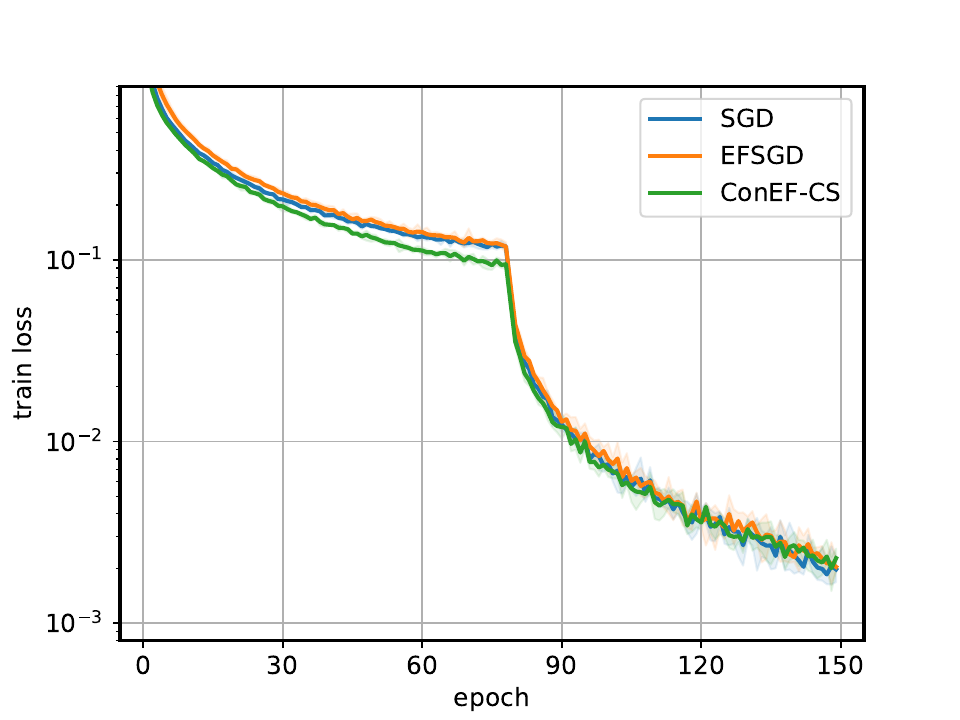}
		\hspace{-0.4cm}
		\includegraphics[width=.45\textwidth]{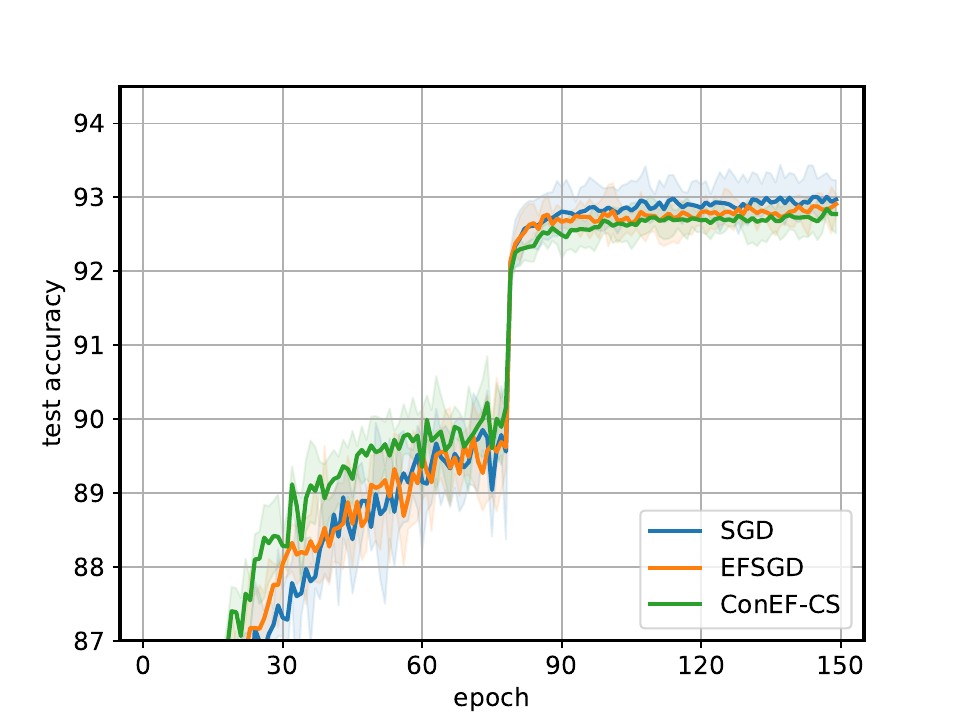}
	\caption{Performance of \name~on ResNet18 and MobileNetv2. From left to right: (a) train loss and (b) test accuracy on ResNet18; and (c) train loss and (d) test accuracy on MobileNetv2. } 
	 \label{fig.scaledsign}
\end{figure}

After clarifying the parameter choices, the effectiveness of \name~in parameter-server settings is tested with ResNet18 on CIFAR10 for $150$ epochs as shown in Fig. \ref{fig.scaledsign}. We use weight decay $10^{-4}$ and momentum $0.9$. We decrease the learning rate by a factor of $10$ at epochs $80$ and $120$. Overall batchsize $128$ ($32$ per worker). And the initial learning rate for SGD is tuned from $\{0.1, 0.075, 0.05, 0.025\}$, with final choice of $0.1$. EFSGD, \name, and QSGD use the same learning rate. In \name-CS we use $\beta=0.9$.. We use \name-CS, \name-Q16 and \name-Q8 to denote Alg. \ref{alg.c2} with error compressors ${\cal C}$ being count sketch, 16-level quantization, and 8-level quantization, respectively. Note that when ${\cal C}$ is a stochastic quantizer, we follow common practice to replace the $\ell_2$ norm in \eqref{eq.quantization} with max-norm for superior performance \citep{alistarh2017}. It is observed that on ResNet18, \name-Q16 achieves a test accuracy that is almost the same as SGD and EFSGD, demonstrating EFSGD uses unnecessary memory. On the other hand, 44Mb is saved on a 50Mb model with count sketch, while only incurring a slight drop of $0.24$ on test accuracy compared with SGD. To further validate the potential of \name~in smart phone based applications, we test MobileNetv2 \citep{sandler2018mobilenetv2}, a more tailored model for the targeted setting. We set $w=0.1$x model size to save 8Mb on a 14Mb model compared with EFSGD, while almost losing no test accuracy. Another interesting observation is that the convergence of \name-CS tends to be faster in the first $80$ epochs before decreasing the learning rate. This might be useful in federated learning with IoT devices where each worker may not participate in training for too many iterations.


\subsection{RQ2: Memory saving with aggressively compressed gradients}
Next we focus on allreducable gradient compressors. We use 16 GPUs on 4 Amazon EC2 p3.8xlarge instances for experiments. 

To investigate the performance of \name~under aggressively compressed gradients, we choose powerSGD \citep{vogels2019}. PowerSGD finds a rank-$r$ approximation to the gradient matrix $\mathbf{G}$. It is allreducable and different levels of communication efficiency can be achieved by tuning $r$. This is a challenging setting for \name~given that (i) the performance of the heuristic powerSGD highly depends on $\mathbf{e}_t^i$, and (ii) $\mathbf{e}_t^i$ plays a more important role when the gradient is aggressively compressed.

ResNet18 is trained on CIFAR10 with $r=4$ in powerSGD, which reduces $98.5\%$ communication overhead. The model is trained for $150$ epochs with weight decay $10^{-4}$ and  momentum $0.9$. We decrease the learning rate by a factor of $10$ at epochs $80$ and $120$. We focus on small batchsize setting with $16$ data sample per worker ($256$ in total). The initial learning rate for SGD is tuned from $\{0.1, 0.075, 0.05, 0.025\}$, finally chosen as $0.1$. EFSGD, \name, and hEFSGD use the same learning rate. \name-CS with $60\%$ memory saving uses $\beta= 0.85$; while its  $90\%$ memory saving relies on $\beta = 0.9$. Unlike \citep{vogels2019}, a small batchsize, e.g., $16$ per worker, is considered here for reducing memory requirement \citep{krause2016multiplicative}. Mixed precision trick is also adopted to facilitate memory saving, where we use fp16 numbers on count sketches instead of fp32 \citep{micikevicius2018}. Two more benchmarks are compared with. \textit{qSGD} \citep{vogels2019}: SGD with an unbiased and allreducable gradient compressor that has the same communication overhead as powerSGD. In particular, the gradient matrix $\mathbf{G} \in \mathbb{R}^{n\times m}$ is approximated by $ (\mathbf{G} \mathbf{U}) \mathbf{U}^\top$ for a random matrix $\mathbf{U} \in \mathbb{R}^{m \times r}$. Small-sized matrices $\mathbf{G} \mathbf{U}$ and $\mathbf{U}^\top$ are communicated. We term such a scheme as qSGD to distinguish it with QSGD \citep{alistarh2017}. Another benchmark is \textit{hEFSGD}, which is a heuristic method by keeping the top-$k$ elements of $\mathbf{e}_t^i$ in EFSGD for memory saving. 
\begin{figure}[H]
	\centering
	\includegraphics[width=.45\textwidth]{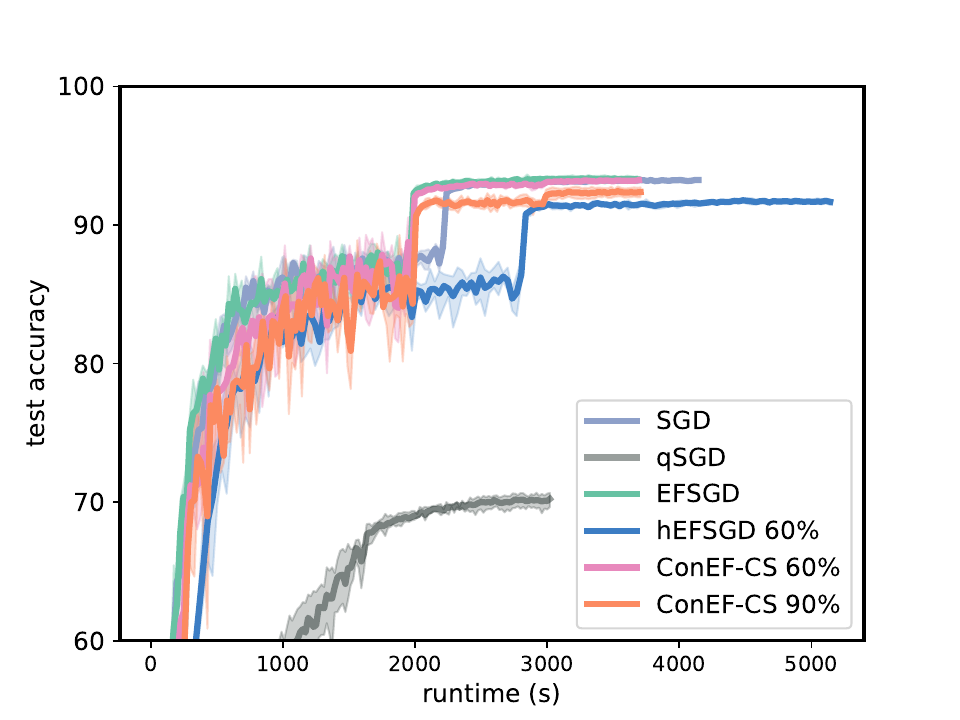} 
	\caption{\name~with a powerSGD gradient compressor.}
	 \label{fig.powersgd}
\end{figure}

The results can be found in Fig. \ref{fig.powersgd}, where we mark the percentage of saved memory with  \name in the figure legend. There are several observations. \name-CS has a similar runtime as EFSGD, and both are $10\%$ faster than SGD. This demonstrates the benefit of reduced communication overhead in distributed training. \name-CS significantly outperforms qSGD, which confirms the necessity of additional memory to enable a good convergence in memory constraint setting with small batchsizes. When $60\%$ memory is saved, \name-CS also outperforms hEFSGD by a large margin in terms of both runtime and test accuracy, validating the generalization merits of an unbiased error compressor (cf. Theorem \ref{thm.gen}). \name-CS has a comparable performance with EFSGD, and more saved memory leads to larger accuracy loss. 

\subsection{RQ3: memory saving on various learning tasks}
In this subsection, \name~is tested on different learning tasks. The unscaled random block gradient compressor is adopted, where a continuous block of $k$ coordinates are chosen in a uniformly random fashion to improve $90\%$ communication efficiency. When using the same random seed among GPUs, AllReduce is readily applied. Such a gradient compressor is more suitable for \name, since the error vector $\mathbf{e}_t^i$ has $d-k$ zeros. 
\begin{figure}[h]
\centering
		\hspace{-0.1cm}
		\includegraphics[width=.45\textwidth]{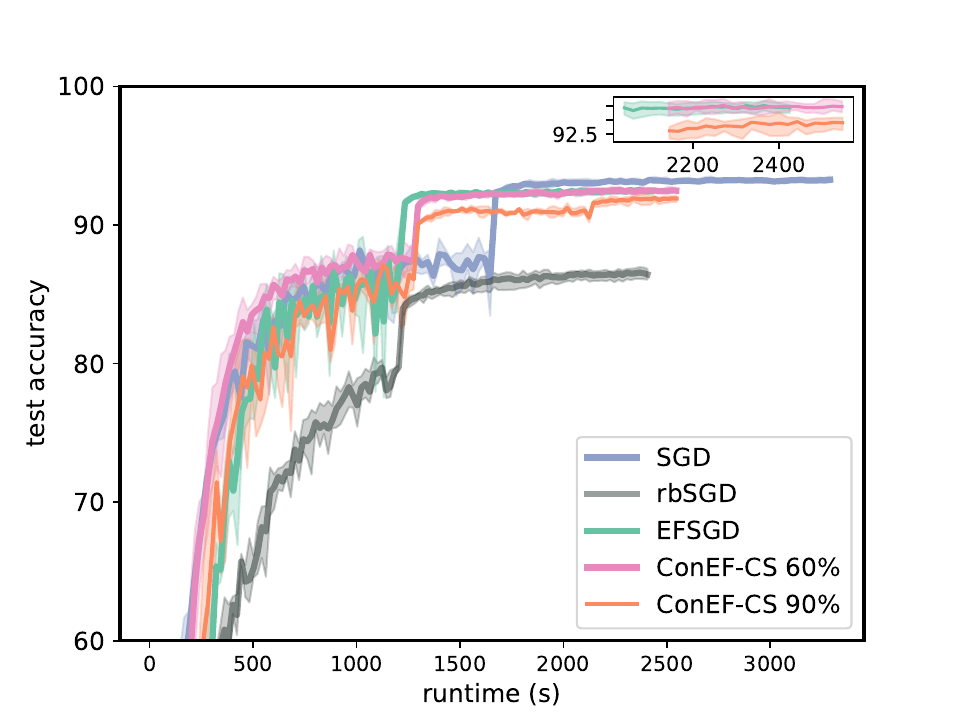}
		\hspace{-0.3cm}
		\includegraphics[width=.45\textwidth]{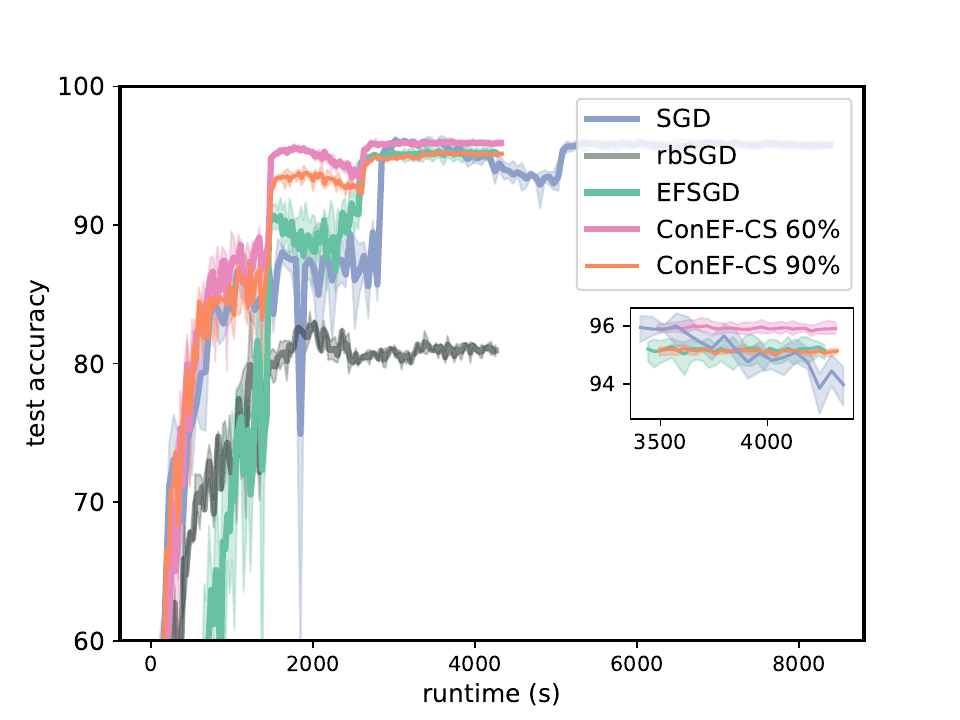}
		\hspace{-0.35cm}
		\includegraphics[width=.45\textwidth]{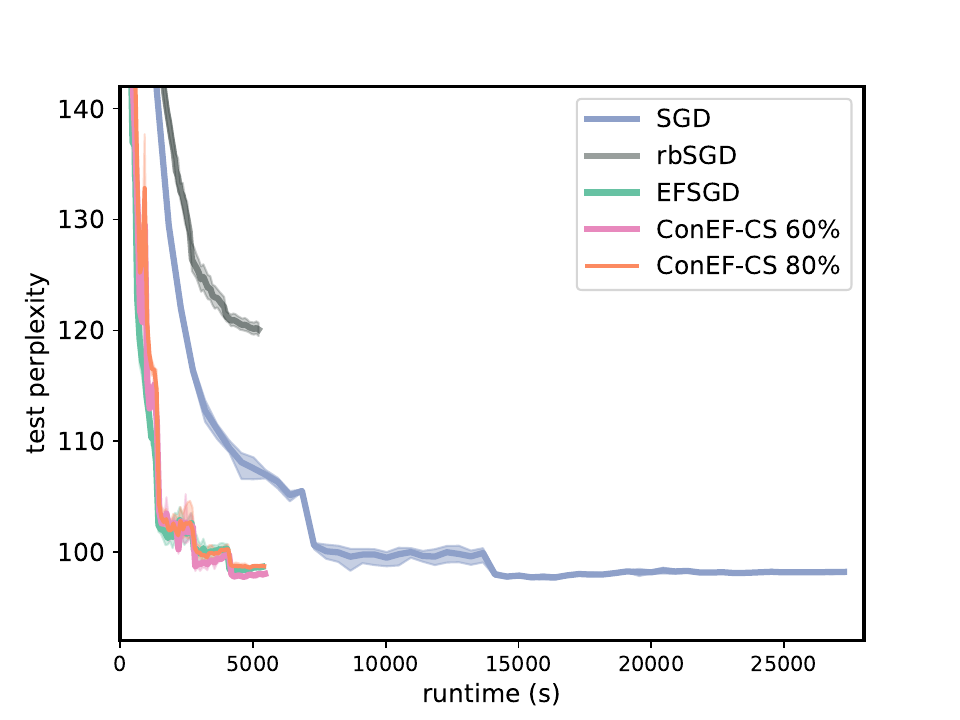}
	\caption{Performance of \name~with an unscaled random block gradient compressor on: ResNet-18, WideResNet-28-10, and LSTM (left to right).} 
	 \label{fig.randomblock}
\end{figure}

\begin{table}[h]
\centering 
\caption{Numerical results for ResNet18 }\label{tab.apdx.num1}
 \begin{tabular}{ c*{3}{|c}} 
    \hline
    \hline
    Algorithm   & test accuracy  &  runtime (min)  &  memory saving (MB) \\ \hline
    SGD         &       93.20    &      54.83      &        -                        \\ \hline
    rbSGD       &       86.58    &      40.12      &        48.2                        \\ \hline
    EFSGD       &       92.52    &      40.41      &       0                     \\ \hline
    \name 60\%  &       92.53    &      42.45      &        31.5                \\ \hline
    \name 90\%  &       92.02    &      42.45      &        44.1                 \\ \hline
\end{tabular} 
\end{table}

\begin{table}[h]
\centering 
\caption{Numerical results for WRN-28-10}\label{tab.apdx.num2}
 \begin{tabular}{ c*{3}{|c}} 
    \hline
    \hline
    Algorithm   & test accuracy  &  runtime (min)  &  memory saving (MB) \\ \hline
    SGD         &       96.12    &      139.70      &        -                        \\ \hline
    rbSGD       &       82.92    &      70.88       &        122.1                       \\ \hline
    EFSGD       &       95.28    &      71.13       &        0                   \\ \hline
    \name 60\%  &       96.00    &      72.11       &        76.2                       \\ \hline
    \name 90\%  &       95.23    &      71.98       &        101.5                 \\ \hline
\end{tabular} 
\end{table}

\begin{table}[h]
\vspace{-0.3cm}
\centering 
\caption{Numerical results for LSTM}\label{tab.apdx.num3}
 \begin{tabular}{ c*{3}{|c}} 
    \hline
    \hline
    Algorithm   & perplexity  &  runtime (hour)  &  memory saving (MB) \\ \hline
    SGD         &       92.72    &      7.57      &        -                        \\ \hline
    rbSGD       &       120.07   &      1.45      &       262.5                       \\ \hline
    EFSGD       &       98.29    &      1.49      &       0                    \\ \hline
    \name 60\%  &       97.76    &      1.51      &       180.3                 \\ \hline
    \name 80\%  &       98.59    &      1.50      &       220.2                   \\ \hline
\end{tabular} 
\vspace{-0.2cm}
\end{table}

Three different training tasks are considered: (i) ResNet-18 on CIFAR10; (ii) WideResNet-28-10 on CIFAR10; and (iii) LSTM for language modeling on Wikitest-2 \citep{merity2016pointer}. To demonstrate the necessity of additional memory, an unbiased variant of random block gradient compressor enabled by scaling with importance sampling is also implemented. We term the corresponding method as \textit{rbSGD}. A setting with small batchsizes is still the main focus here. These results can be found in Fig. \ref{fig.randomblock} and Tables \ref{tab.apdx.num1}, \ref{tab.apdx.num2} and \ref{tab.apdx.num3}.

\textbf{ResNet18 on CIFAR10.} We train this model for $120$ epochs and decrease the learning rate by $10$ at epochs $60$ and $100$. Momentum is chosen as $0.9$ and weight decay is set to $10^{-4}$. To save memory, we focus on small batch setting with batchsize $16$ per GPU (hence $256$ in total). The learning rate for SGD is tuned from $\{ 0.05, 0.1, 0.2 \}$, and $0.2$ is chosen. We use the same learning rate in EFSGD and \name-CS. For \name-CS with $60\%$ saved memory, we choose $\beta = 0.9$; and for \name-CS with $90\%$ memory saving we use $\beta=0.95$. For rbSGD, we test step sizes from $\{ 0.05. 0.1, 0.2 \}$ and $0.1$ is chosen.

\textbf{WideResNet28-10 on CIFAR10.} The model is trained $150$ epochs. Learning rate is decreased by $5$ on epochs $50$, $90$, and $120$. We set weight decay as $5 \times 10^{-4}$ and momentum $0.9$. Nesterov momentum is also adopted. A small batchsize $16$ per GPU ($256$ in total) is considered for memory saving. The learning rate for SGD is tuned from $\{ 0.05, 0.1, 0.2 \}$, where $0.1$ performs the best. EFSGD and \name-CS adopt the same learning rate. For \name-CS with $60\%$ saved memory, we choose $\beta = 0.75$; and for \name-CS with $90\%$ memory saving we use $\beta=0.995$. For rbSGD, we test step sizes from $\{ 0.001, 0.005, 0.05. 0.1\}$ and $0.005$ is the final choice.

\textbf{LSTM on Wikitext-2.} We use a 2-layer LSTM with $672$ hidden units and $672$-dimension word embeddings. We set BPTT as $35$, and clip the gradient norm to $0.25$. The momentum is chosen as $0.9$, aided with Nesterov momentum as well. Dropout rate is chosen as $0.5$. We also consider a small batchsize setting, e.g., $2$ per GPU ($32$ in total). The model is trained for $60$ epochs where the learning rate is decreased by $4$ at epochs $15, 30$, and $45$. The initial learning rate for SGD is tuned from $\{0.5,  1, 2, 4 \}$, and $2$ is chosen. The same learning rate schedule is adopted for EFSGD and \name-CS. For \name-CS with $60\%$ saved memory, we choose $\beta = 0.8$; and for \name-CS with $80\%$ memory saving we use $\beta=0.9$. For rbSGD, we test step sizes from $\{0.5,  1, 2, 4 \}$ and $1$ is chosen.

On ResNet-18, it is observed that EFGSD and \name~save $22\%$ and $21\%$ training time compared with SGD. The percentage of time saving improves over powerSGD even with a smaller compression ratio here. This is because more time is spent on the complicated encoding in powerSGD. It is observed that \name~significantly outperforms rbSGD, once again demonstrating the need of additional memory when training with a small batchsize of $16$. Compared with EFSGD, \name-CS drops $0.04$ and $0.5$ on test accuracy to save $60\%$ and $90\%$ memory, respectively. 

For WideResNet-28-10, EFSGD and \name~are $49.0\%$ and $48.6\%$ faster than SGD. Algorithms with (contractive) error feedback again considerably outperform rbSGD. EFSGD does not catch up with the test accuracy of \name-CS with $60\%$ memory saving at a difference of $0.6$, suggesting that EFSGD consumes memory inefficiently. \name-CS with $90\%$ memory saving only drops $0.1$ test accuracy compared with EFSGD. In light of the results on ResNet-18, a larger model (WideResNet) appears to be more robust when used jointly with contractive error feedback.

Next we focus on language modeling with LSTM on WikiText-2. All tested algorithms reduce about $78\%$ training time of SGD. rbSGD fails to match the performance of both \name-CS and EFSGD. When $60\%$ memory is saved, \name-CS has a $0.3$ lower perplexity compared to EFSGD. \name-CS with $80\%$ memory saved performs slightly worse than EFSGD with 0.2 higher test perplexity.

We also train a transformer for machine translation on Multi30K \citep{W16-3210} (see Section \ref{apdx.sec.add_num}). Adam based heuristic methods with (contractive) error feedback are adopted. In such a case, even $90\%$ memory saved \name-CS has a slightly better performance than EF-Adam. 

\subsection{Additional Experiments}
\label{apdx.sec.add_num}
Performance of \iname-v1 and v2 can be found in Fig. \ref{fig.randomk_c3}. In this experiment, another allreducable gradient compressor, unscaled random-$k$, is adopted to reduce $90\%$ communication. Both versions of \iname outperform \name, validating our findings in Theorems \ref{thm.c3} and \ref{thm.c3_v2}. We also present below more implementation details and numerical results regarding random-$k$ gradient compressor, where we observe that \name~can reduce $95\%$ additional memory of EFSGD on ResNet18.

\begin{figure}[H] 
	\centering
	\includegraphics[width=.45\textwidth]{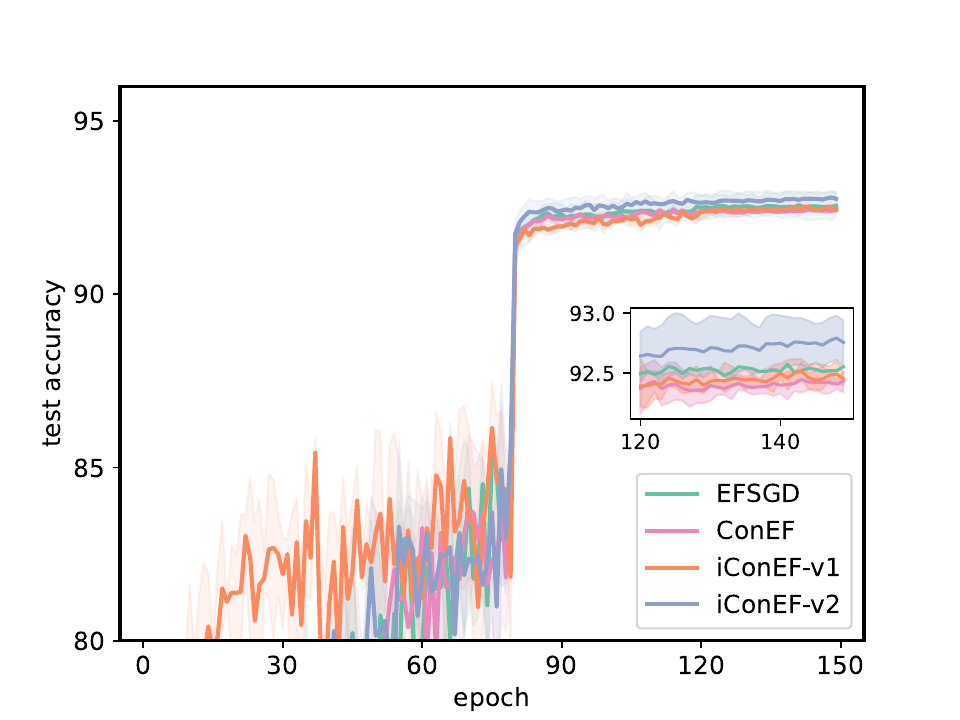} 
	\caption{\iname~with a random $k$ gradient compressor.} 
	\label{fig.randomk_c3}
\end{figure}

\textbf{Adam type algorithms on a transformer.} A single layer transformer for English-Germen machine translation is trained on Multi30k dataset \citep{W16-3210}. In particular, we use a $512$ dimensional embedding layer. Both encoder and decoder have $3$ layers with hidden dimension $512$ per layer and $8$-head attention is adopted. The initial learning rate for ADAM and other algorithms is set as $10^{-4}$. The transformer is trained for $30$ epochs, where the learning rate is decreased by $4$ at the end of epochs $15$ and $25$. Batchsize is set to $128$ and dropout ratio is set as $0.1$. The biased gradient compressor is chosen as unscaled random block $k$ to reduce $90\%$ communication overhead. See Fig. \ref{fig.transformer_random_block} for details.


Adam with an allreducable and unbiased gradient compressor (rb-Adam) struggles in terms of training loss. EF-Adam converges even slower than its \name~counterpart. This once again demonstrates that EF-Adam does not take full advantage of memory. \name~with $60\%$ and $90\%$ memory saved obtain similar training loss both slightly larger than Adam, but there is a gap on validation.

Regarding validation loss, rb-Adam fails to match with other tested algorithms. With $90\%$ memory saved, \name-CS still outperforms EF-Adam slightly. \name-CS with $60\%$ saved memory has a comparable performance to vanilla Adam, and both are much better than EF-Adam. These observations suggest that \name is an improved alternative of error feedback to endow Adam type optimizers with communication and memory efficiency in distributed training. Lastly, the validation gap between \name~ with $60\%$ and $90\%$ memory saved intuitively makes sense, since as shown in Theorem \ref{thm.gen}, the best generalization error can be achieved when subtracting averaged local error vectors from current model. The count sketch in \name-CS introduces larger compression error when more memory is saved (small sketch size), thus losing more on its generalization behavior.

\textbf{Random-$k$ gradient compressor with more memory saving.}
We also test the unscaled random-$k$ gradient compressor on ResNet-18 with CIFAR10, where $90\%$ communication burden is reduced. As shown in Fig. \ref{fig.randomk}, EFSGD has a test accuracy of $93.42$, while \name-CS with $w = 0.05$ ($95\%$ memory saving) and $w = 0.02$ ($98\%$ memory saving) achieve a test accuracy of $92.70$ and $92.68$. Hence, it is possible to have an aggressive memory saving with less than $1$ loss on test accuracy. In addition, it is observed that the compressed error vector is helpful to get a faster convergence initially. But after decreasing the step size, it has negative impact resulting in the accuracy drop.

\begin{figure*}[!t]
\minipage{0.45\textwidth}
	\centering
	\vspace*{13pt}
	 \includegraphics[width=\textwidth]{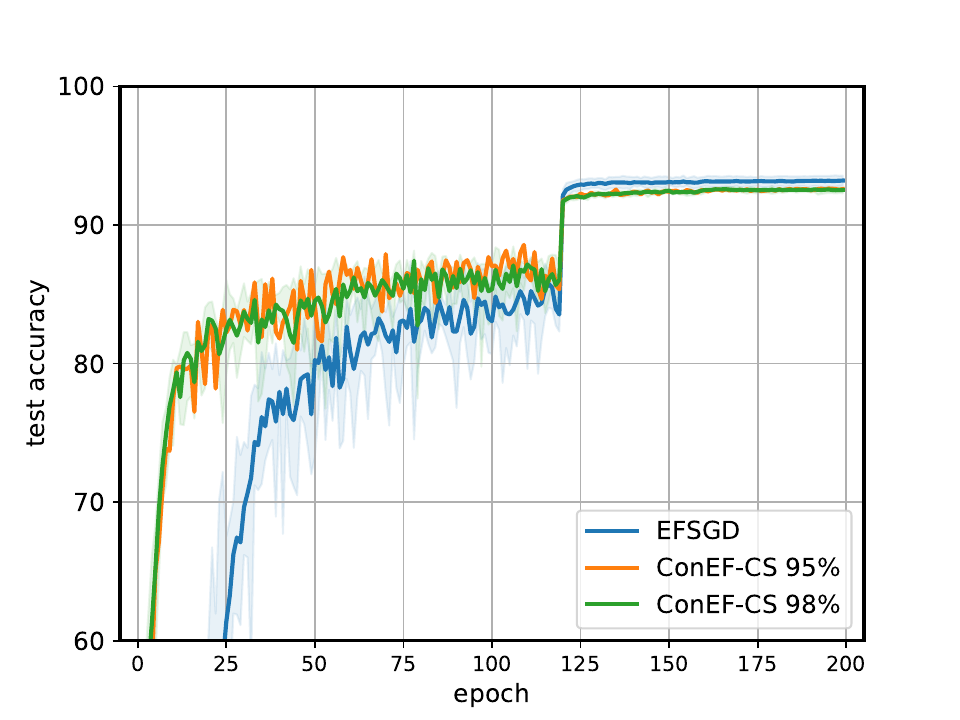}
     \vspace{-13pt}
     \caption{More aggressively saved memory with random-$k$ gradient compressor.} \label{fig.randomk}
\endminipage \hfill
\minipage{0.45\textwidth}
	\centering
	\vspace{0.5cm}
	 \includegraphics[width=\textwidth]{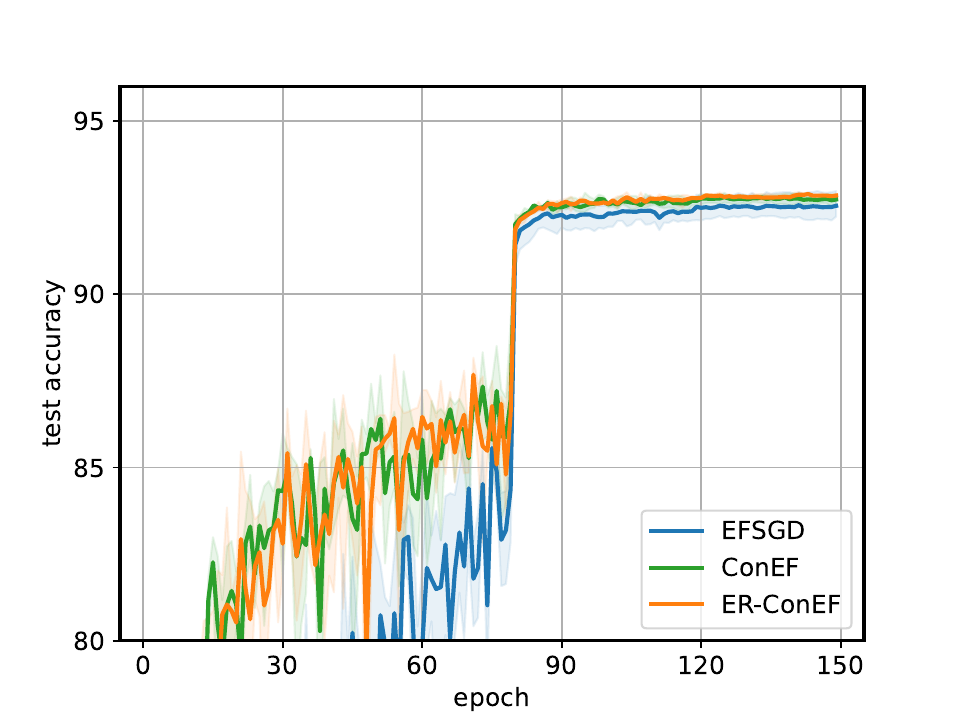}
     \vspace{-10pt}
     \caption{Test of \name-CS with error reset using random-$k$ gradient compressor.} \label{fig.error_reset}
\endminipage
\end{figure*}

\textbf{\iname~with a random-$k$ gradient compressor.} To validate the improvement of \iname~over \name, we train a ResNet-18 on CIFAR10, and the results are reported in Fig. \ref{fig.randomk_c3}. A random-$k$ gradient compressor reducing $90\%$ communication overhead is adopted. For \name, we use a $s=64$-level stochastic quantizer as the error compressor. \iname-v2 is considered in this case, and we use top-$k$ compressor ($5\%$ elements) as the first error compressor, and $s=64$-level stochastic quantizer as the second error compressor. As shown in Fig. \ref{fig.randomk_c3}, although \name~has a similar performance as EFSGD, \iname-v2 outperforms both of them, validating its efficiency. \iname-v1 is also included in Fig. \ref{fig.randomk_c3}. In this case, both error compressors are chosen as $s=256$-level quantizer. \iname-v1 converges much faster comparing with EFSGD initially, and it has a similar test accuracy after learning rate is decreased.

\textbf{\name-CS can be used easily with error reset (ER).} As discussed in Section 4.2.2, error reset  \citep{basu2019, xie2020} can be applied directly when the error compressor ${\cal C}$ is count sketch. Thanks to the fact that count sketch is allreducable and small in size, ER can be done through allreducing the compressed error among all workers every a few iterations, which in our case is $512$. ER slightly improves the test accuracy of \name-CS by $0.1$ as observed in Fig. \ref{fig.error_reset}. More specifically, we train a ResNet-18 on CIFAR10 with unscaled random-$k$ gradient compressor to reduce $90\%$ communication overhead, and choose $\beta=0.9$ with a count sketch that saves $80\%$ memory. As shown here, both \name~and ER-\name~outperform EFSGD, and ER-\name~has the best test accuracy. 

\begin{figure}[t]
\centering
	\begin{tabular}{cc}
		\hspace{-0.4cm}
		\includegraphics[width=.45\textwidth]{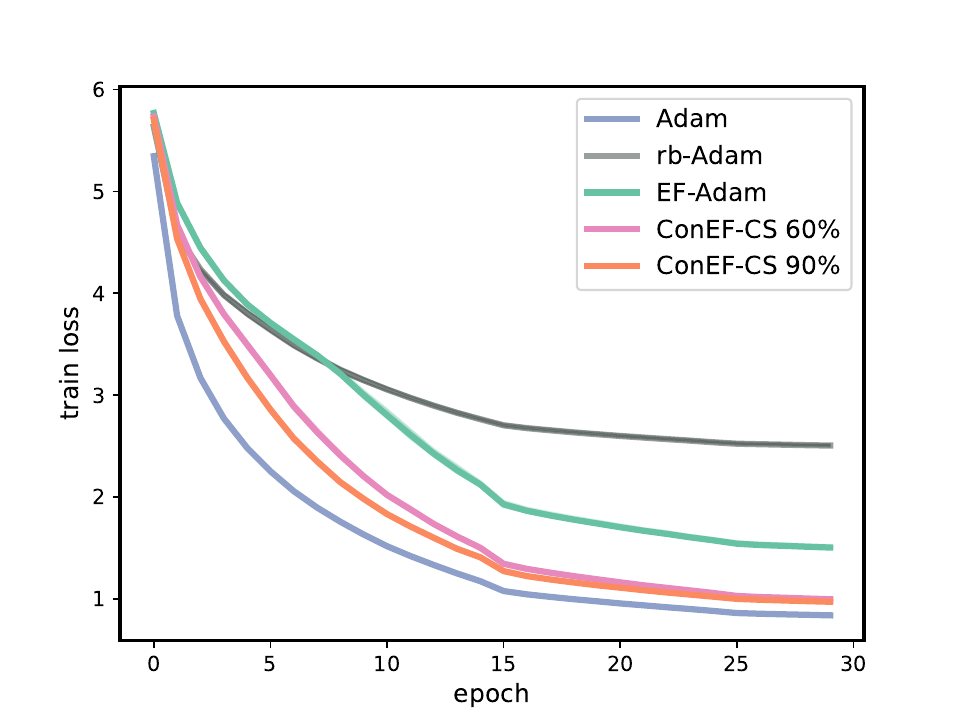}&
		\hspace{-0.4cm}
		\includegraphics[width=.45\textwidth]{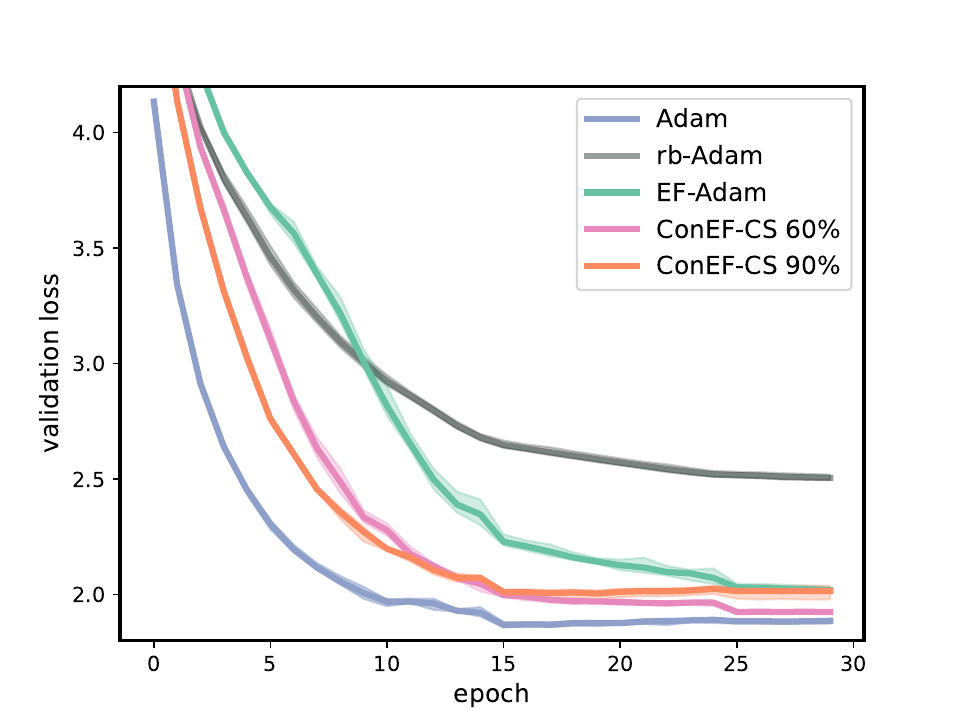}
		\\ \hspace{-0.6cm} (a) train & \hspace{-0.5cm} (b) validation
	\end{tabular}
	\caption{Performance of \name~with unscaled random block gradient compressor on a transformer.} 
	 \label{fig.transformer_random_block}
\end{figure}

Experiments are repeated 3 times and averaged results are reported. In the implementation of count sketches, the tensor trick can be employed to avoid computing too many hashes, which improves the runtime. In the tensor trick,  instead of using element-wise compressing, we view the error vector as a matrix, and treat one row or column as an ``entry'' of the compressed vector, resulting in a tensor sketch (matrix when $v=1$). More specifically, suppose the error vector lives in $\mathbb{R}^d$, whose matrix view is $\mathbb{R}^{n\times m}$ with $d = mn$. It takes $n$ (resp. $m$) hash computation in a row- (resp. column-) tensor trick, while $mn$ hashes are needed without this trick. Differences on test accuracy between row- or column-based tensor tricks are not observed; see Fig. \ref{fig.cs_row_col}.

\begin{figure}{t}
	\centering
	\includegraphics[width=.45\textwidth]{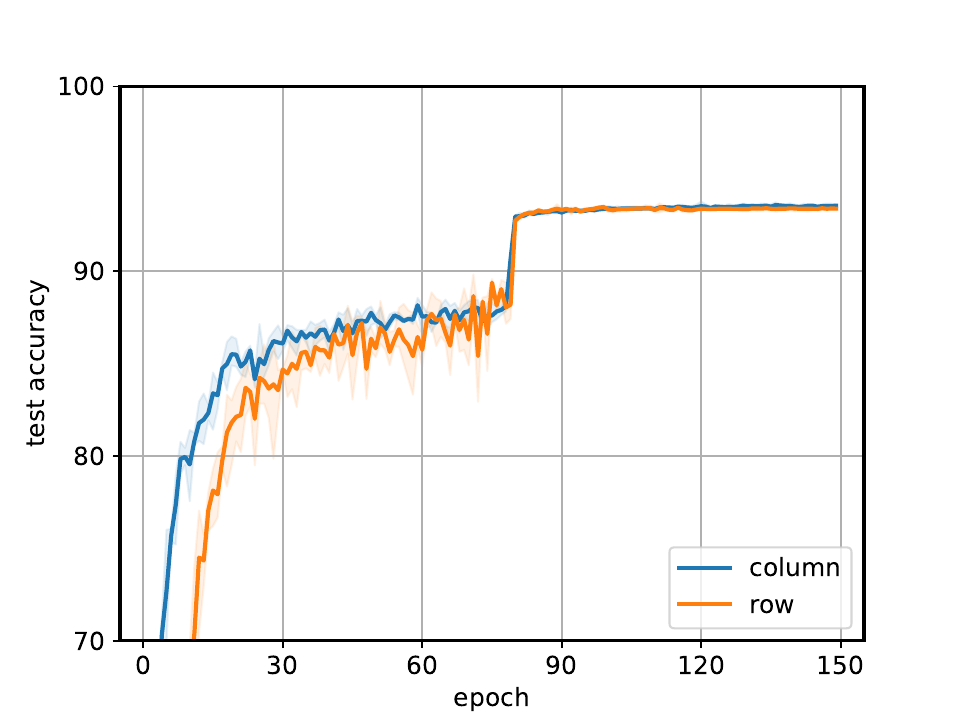} 
	\caption{\name~with row or column hashes has similar performance.}
	 \label{fig.cs_row_col}
\end{figure}


\section{Conclusion and future work}
\label{sec:conclusion}
In this work, we introduce Contractive error feedback (\name) as an effective way to alleviate memory concerns in communication efficient methods. The key insight is that the error vector can be compressed and an effective error compressor, such as count sketch can help save memory with almost no accuracy drop. We evaluate \name on various image classification, language modeling and machine translation tasks and observe $80\% - 90\%$ of additional memory savings compared to EFSGD. Furthermore, \name uses almost the same runtime of EFSGD while obtaining 1.3x -- 5x speedup over SGD. Our next step will focus on saving memory more aggressively. In this paper, we demonstrate the feasibility and convergence of \name to clear up the theoretical barrier of integrating \name to popular memory efficient frameworks such as ZeRO-3. The implementation of \name to ZeRO-3 is left for future work.

\section*{Ethics Statement}
The main topic of this work is speeding up distributed training on devices with limited memory. Since this work focuses on an algorithmic perspective, the impact would be mainly scientific rather than ethical.

\section*{Reproducibility Statement}
Due to space constraints we include the proofs for all Theorems presented in the Appendix. Detailed choices for hyperparameters and additional experiments are included in the Experimental Section. Publicly available code will be provided after review process to ensure anonymity.


\bibliographystyle{plainnat}
\bibliography{conef_arxiv}

\begin{thebibliography}{45}
\providecommand{\natexlab}[1]{#1}
\providecommand{\url}[1]{\texttt{#1}}
\expandafter\ifx\csname urlstyle\endcsname\relax
  \providecommand{\doi}[1]{doi: #1}\else
  \providecommand{\doi}{doi: \begingroup \urlstyle{rm}\Url}\fi

\bibitem[Abdi and Fekri(2020)]{abdi2020}
Afshin Abdi and Faramarz Fekri.
\newblock Quantized compressive sampling of stochastic gradients for efficient
  communication in distributed deep learning.
\newblock In \emph{Proc. of the AAAI Conf. on Artificial Intelligence},
  volume~34, pages 3105--3112, 2020.

\bibitem[Alistarh et~al.(2017)Alistarh, Grubic, Li, Tomioka, and
  Vojnovic]{alistarh2017}
Dan Alistarh, Demjan Grubic, Jerry Li, Ryota Tomioka, and Milan Vojnovic.
\newblock {QSGD}: Communication-efficient {SGD} via gradient quantization and
  encoding.
\newblock \emph{Proc. of Neural Information Processing Systems}, 30:\penalty0
  1709--1720, 2017.

\bibitem[Alistarh et~al.(2019)Alistarh, Hoefler, Johansson, Kririrat,
  Konstantinov, and Renggli]{alistarh2019}
Dan Alistarh, Torsten Hoefler, Mikael Johansson, Sarit Kririrat, Nikola
  Konstantinov, and Cedric Renggli.
\newblock The convergence of sparsified gradient methods.
\newblock \emph{Proc. of Neural Information Processing Systems}, pages
  5973--5983, 2019.

\bibitem[Basu et~al.(2019)Basu, Data, Karakus, and Diggavi]{basu2019}
Debraj Basu, Deepesh Data, Can Karakus, and Suhas Diggavi.
\newblock Qsparse-local-{SGD}: Distributed sgd with quantization,
  sparsification and local computations.
\newblock \emph{Proc. of Neural Information Processing Systems}, 32, 2019.

\bibitem[Bernstein et~al.(2018{\natexlab{a}})Bernstein, Wang, Azizzadenesheli,
  and Anandkumar]{bernstein2018}
Jeremy Bernstein, Yu-Xiang Wang, Kamyar Azizzadenesheli, and Animashree
  Anandkumar.
\newblock {signSGD}: Compressed optimization for non-convex problems.
\newblock In \emph{Proc. of Intl. Conf. on Machine Learning}, pages 560--569.
  PMLR, 2018{\natexlab{a}}.

\bibitem[Bernstein et~al.(2018{\natexlab{b}})Bernstein, Zhao, Azizzadenesheli,
  and Anandkumar]{bernstein2018b}
Jeremy Bernstein, Jiawei Zhao, Kamyar Azizzadenesheli, and Anima Anandkumar.
\newblock {signSGD} with majority vote is communication efficient and fault
  tolerant.
\newblock In \emph{Proc. of Intl. Conf. on Learning Representations},
  2018{\natexlab{b}}.

\bibitem[Charikar et~al.(2002)Charikar, Chen, and Farach-Colton]{charikar2002}
Moses Charikar, Kevin Chen, and Martin Farach-Colton.
\newblock Finding frequent items in data streams.
\newblock In \emph{Intl. Colloquium on Automata, Languages, and Programming},
  pages 693--703. Springer, 2002.

\bibitem[Cortes and Vapnik(1995)]{cortes1995}
Corinna Cortes and Vladimir Vapnik.
\newblock Support-vector networks.
\newblock \emph{Machine learning}, 20\penalty0 (3):\penalty0 273--297, 1995.

\bibitem[Dosovitskiy et~al.(2020)Dosovitskiy, Beyer, Kolesnikov, Weissenborn,
  Zhai, Unterthiner, Dehghani, Minderer, Heigold, Gelly,
  et~al.]{dosovitskiy2020image}
Alexey Dosovitskiy, Lucas Beyer, Alexander Kolesnikov, Dirk Weissenborn,
  Xiaohua Zhai, Thomas Unterthiner, Mostafa Dehghani, Matthias Minderer, Georg
  Heigold, Sylvain Gelly, et~al.
\newblock An image is worth 16x16 words: Transformers for image recognition at
  scale.
\newblock In \emph{Proc. of Intl. Conf. on Learning Representations}, 2020.

\bibitem[Elliott et~al.(2016)Elliott, Frank, Sima'an, and Specia]{W16-3210}
Desmond Elliott, Stella Frank, Khalil Sima'an, and Lucia Specia.
\newblock Multi30k: Multilingual english-german image descriptions.
\newblock In \emph{Proc. of the 5th Workshop on Vision and Language}, pages
  70--74. Association for Computational Linguistics, 2016.

\bibitem[Faghri et~al.(2020)Faghri, Tabrizian, Markov, Alistarh, Roy, and
  Ramezani-Kebrya]{faghri2020}
Fartash Faghri, Iman Tabrizian, Ilia Markov, Dan Alistarh, Daniel~M Roy, and
  Ali Ramezani-Kebrya.
\newblock Adaptive gradient quantization for data-parallel {SGD}.
\newblock \emph{Proc. of Neural Information Processing Systems}, 33:\penalty0
  3174--3185, 2020.

\bibitem[He et~al.(2016)He, Zhang, Ren, and Sun]{he2016deep}
Kaiming He, Xiangyu Zhang, Shaoqing Ren, and Jian Sun.
\newblock Deep residual learning for image recognition.
\newblock In \emph{Proc. of the IEEE Conf. on Computer Vision and Pattern
  Recognition}, pages 770--778, 2016.

\bibitem[Horv{\'a}th and Richt{\'a}rik(2020)]{horvath2020better}
Samuel Horv{\'a}th and Peter Richt{\'a}rik.
\newblock A better alternative to error feedback for communication-efficient
  distributed learning.
\newblock \emph{arXiv preprint arXiv:2006.11077}, 2020.

\bibitem[Horvath et~al.(2019)Horvath, Ho, Horvath, Sahu, Canini, and
  Richt{\'a}rik]{horvath2019}
Samuel Horvath, Chen-Yu Ho, Ludovit Horvath, Atal~Narayan Sahu, Marco Canini,
  and Peter Richt{\'a}rik.
\newblock Natural compression for distributed deep learning.
\newblock \emph{arXiv preprint arXiv:1905.10988}, 2019.

\bibitem[Karimireddy et~al.(2019)Karimireddy, Rebjock, Stich, and
  Jaggi]{karimireddy2019}
Sai~Praneeth Karimireddy, Quentin Rebjock, Sebastian Stich, and Martin Jaggi.
\newblock Error feedback fixes sign{SGD} and other gradient compression
  schemes.
\newblock In \emph{Proc. of Intl. Conf. on Machine Learning}, pages 3252--3261.
  PMLR, 2019.

\bibitem[Krause et~al.(2016)Krause, Lu, Murray, and
  Renals]{krause2016multiplicative}
Ben Krause, Liang Lu, Iain Murray, and Steve Renals.
\newblock Multiplicative {LSTM} for sequence modelling.
\newblock \emph{arXiv preprint arXiv:1609.07959}, 2016.

\bibitem[Lin et~al.(2018)Lin, Han, Mao, Wang, and Dally]{lin2018}
Yujun Lin, Song Han, Huizi Mao, Yu~Wang, and Bill Dally.
\newblock Deep gradient compression: Reducing the communication bandwidth for
  distributed training.
\newblock In \emph{Proc. of Intl. Conf. on Learning Representations}, 2018.

\bibitem[Merity et~al.(2016)Merity, Xiong, Bradbury, and
  Socher]{merity2016pointer}
Stephen Merity, Caiming Xiong, James Bradbury, and Richard Socher.
\newblock Pointer sentinel mixture models.
\newblock \emph{arXiv preprint arXiv:1609.07843}, 2016.

\bibitem[Micikevicius et~al.(2018)Micikevicius, Narang, Alben, Diamos, Elsen,
  Garcia, Ginsburg, Houston, Kuchaiev, Venkatesh, et~al.]{micikevicius2018}
Paulius Micikevicius, Sharan Narang, Jonah Alben, Gregory Diamos, Erich Elsen,
  David Garcia, Boris Ginsburg, Michael Houston, Oleksii Kuchaiev, Ganesh
  Venkatesh, et~al.
\newblock Mixed precision training.
\newblock In \emph{Proc. of Intl. Conf. on Learning Representations}, 2018.

\bibitem[Mishchenko et~al.(2021)Mishchenko, Wang, Kovalev, and
  Richt{\'a}rik]{mishchenko2021}
Konstantin Mishchenko, Bokun Wang, Dmitry Kovalev, and Peter Richt{\'a}rik.
\newblock {IntSGD}: Floatless compression of stochastic gradients.
\newblock \emph{arXiv preprint arXiv:2102.08374}, 2021.

\bibitem[Peng et~al.(2019)Peng, Zhu, Chen, Bao, Yi, Lan, Wu, and
  Guo]{peng2019generic}
Yanghua Peng, Yibo Zhu, Yangrui Chen, Yixin Bao, Bairen Yi, Chang Lan, Chuan
  Wu, and Chuanxiong Guo.
\newblock A generic communication scheduler for distributed dnn training
  acceleration.
\newblock In \emph{Proc. of ACM Symposium on Operating Systems Principles},
  pages 16--29, 2019.

\bibitem[Raffel et~al.(2019)Raffel, Shazeer, Roberts, Lee, Narang, Matena,
  Zhou, Li, and Liu]{raffel2019exploring}
Colin Raffel, Noam Shazeer, Adam Roberts, Katherine Lee, Sharan Narang, Michael
  Matena, Yanqi Zhou, Wei Li, and Peter~J Liu.
\newblock Exploring the limits of transfer learning with a unified text-to-text
  transformer.
\newblock \emph{arXiv preprint arXiv:1910.10683}, 2019.

\bibitem[Rajbhandari et~al.(2020)Rajbhandari, Rasley, Ruwase, and
  He]{rajbhandari2020zero}
Samyam Rajbhandari, Jeff Rasley, Olatunji Ruwase, and Yuxiong He.
\newblock Zero: Memory optimizations toward training trillion parameter models.
\newblock In \emph{Proc. Intl. Conf. for High Performance Computing,
  Networking, Storage and Analysis}, pages 1--16. IEEE, 2020.

\bibitem[Ramesh et~al.(2021)Ramesh, Pavlov, Goh, Gray, Voss, Radford, Chen, and
  Sutskever]{ramesh2021zero}
Aditya Ramesh, Mikhail Pavlov, Gabriel Goh, Scott Gray, Chelsea Voss, Alec
  Radford, Mark Chen, and Ilya Sutskever.
\newblock \blue{Zero-shot text-to-image generation}.
\newblock \emph{arXiv:2102.12092}, 2021.

\bibitem[Ramezani-Kebrya et~al.(2021)Ramezani-Kebrya, Faghri, Markov, Aksenov,
  Alistarh, and Roy]{ramezani2021}
Ali Ramezani-Kebrya, Fartash Faghri, Ilya Markov, Vitalii Aksenov, Dan
  Alistarh, and Daniel~M Roy.
\newblock {NUQSGD}: Provably communication-efficient data-parallel {SGD} via
  nonuniform quantization.
\newblock \emph{Journal of Machine Learning Research}, 22\penalty0
  (114):\penalty0 1--43, 2021.

\bibitem[Richt{\'a}rik et~al.(2021)Richt{\'a}rik, Sokolov, and
  Fatkhullin]{richtarik2021}
Peter Richt{\'a}rik, Igor Sokolov, and Ilyas Fatkhullin.
\newblock {EF}21: A new, simpler, theoretically better, and practically faster
  error feedback.
\newblock \emph{arXiv preprint arXiv:2106.05203}, 2021.

\bibitem[Sandler et~al.(2018)Sandler, Howard, Zhu, Zhmoginov, and
  Chen]{sandler2018mobilenetv2}
Mark Sandler, Andrew Howard, Menglong Zhu, Andrey Zhmoginov, and Liang-Chieh
  Chen.
\newblock Mobilenetv2: Inverted residuals and linear bottlenecks.
\newblock In \emph{Proceedings of the IEEE conference on computer vision and
  pattern recognition}, pages 4510--4520, 2018.

\bibitem[Seide et~al.(2014)Seide, Fu, Droppo, Li, and Yu]{seide2014}
Frank Seide, Hao Fu, Jasha Droppo, Gang Li, and Dong Yu.
\newblock 1-bit stochastic gradient descent and its application to
  data-parallel distributed training of speech dnns.
\newblock In \emph{Proc. of Annual Conf. of the Intl. Speech Communication
  Association}, 2014.

\bibitem[Shoeybi et~al.(2019)Shoeybi, Patwary, Puri, LeGresley, Casper, and
  Catanzaro]{shoeybi2019megatron}
Mohammad Shoeybi, Mostofa Patwary, Raul Puri, Patrick LeGresley, Jared Casper,
  and Bryan Catanzaro.
\newblock Megatron-lm: Training multi-billion parameter language models using
  model parallelism.
\newblock \emph{arXiv preprint arXiv:1909.08053}, 2019.

\bibitem[Sohoni et~al.(2019)Sohoni, Aberger, Leszczynski, Zhang, and
  R{\'e}]{sohoni2019low}
Nimit~Sharad Sohoni, Christopher~Richard Aberger, Megan Leszczynski, Jian
  Zhang, and Christopher R{\'e}.
\newblock Low-memory neural network training: A technical report.
\newblock \emph{arXiv preprint arXiv:1904.10631}, 2019.

\bibitem[Spring et~al.(2019)Spring, Kyrillidis, Mohan, and
  Shrivastava]{spring2019}
Ryan Spring, Anastasios Kyrillidis, Vijai Mohan, and Anshumali Shrivastava.
\newblock Compressing gradient optimizers via count-sketches.
\newblock In \emph{Proc. of Intl. Conf. on Machine Learning}, pages 5946--5955.
  PMLR, 2019.

\bibitem[Stich et~al.(2018)Stich, Cordonnier, and Jaggi]{stich2018}
Sebastian~U Stich, Jean-Baptiste Cordonnier, and Martin Jaggi.
\newblock Sparsified {SGD} with memory.
\newblock \emph{Proc. of Neural Information Processing Systems}, 31:\penalty0
  4447--4458, 2018.

\bibitem[Stich(2019)]{stich2019}
Sebastian~Urban Stich.
\newblock Local {SGD} converges fast and communicates little.
\newblock In \emph{Proc. of Intl. Conf. on Learning Representations}, number
  CONF, 2019.

\bibitem[Valiant(1984)]{valiant1984}
Leslie~G Valiant.
\newblock A theory of the learnable.
\newblock \emph{Communications of the ACM}, 27\penalty0 (11):\penalty0
  1134--1142, 1984.

\bibitem[Vogels et~al.(2019)Vogels, Karimireddy, and Jaggi]{vogels2019}
Thijs Vogels, Sai~Praneeth Karimireddy, and Martin Jaggi.
\newblock {PowerSGD}: Practical low-rank gradient compression for distributed
  optimization.
\newblock \emph{Proc. of Neural Information Processing Systems}, 32:\penalty0
  14259--14268, 2019.

\bibitem[Wang et~al.(2018)Wang, Sievert, Liu, Charles, Papailiopoulos, and
  Wright]{wang2018}
Hongyi Wang, Scott Sievert, Shengchao Liu, Zachary Charles, Dimitris
  Papailiopoulos, and Stephen Wright.
\newblock {ATOMO}: Communication-efficient learning via atomic sparsification.
\newblock \emph{Proc. of Neural Information Processing Systems}, 31:\penalty0
  9850--9861, 2018.

\bibitem[Wangni et~al.(2018)Wangni, Wang, Liu, and Zhang]{wangni2018}
Jianqiao Wangni, Jialei Wang, Ji~Liu, and Tong Zhang.
\newblock Gradient sparsification for communication-efficient distributed
  optimization.
\newblock \emph{Proc. of Neural Information Processing Systems}, 31:\penalty0
  1299--1309, 2018.

\bibitem[Wen et~al.(2017)Wen, Xu, Yan, Wu, Wang, Chen, and Li]{wen2017}
Wei Wen, Cong Xu, Feng Yan, Chunpeng Wu, Yandan Wang, Yiran Chen, and Hai Li.
\newblock {TernGrad}: Ternary gradients to reduce communication in distributed
  deep learning.
\newblock \emph{Proc. of Neural Information Processing Systems}, 30, 2017.

\bibitem[Wilson et~al.(2017)Wilson, Roelofs, Stern, Srebro, and
  Recht]{wilson2017}
Ashia~C Wilson, Rebecca Roelofs, Mitchell Stern, Nathan Srebro, and Benjamin
  Recht.
\newblock The marginal value of adaptive gradient methods in machine learning.
\newblock In \emph{Proc. of Neural Information Processing Systems}, pages
  4151--4161, 2017.

\bibitem[Wu et~al.(2018)Wu, Huang, Huang, and Zhang]{wu2018error}
Jiaxiang Wu, Weidong Huang, Junzhou Huang, and Tong Zhang.
\newblock Error compensated quantized sgd and its applications to large-scale
  distributed optimization.
\newblock In \emph{Proc. of Intl. Conf. on Machine Learning}, pages 5325--5333.
  PMLR, 2018.

\bibitem[Xie et~al.(2020)Xie, Zheng, Koyejo, Gupta, Li, and Lin]{xie2020}
Cong Xie, Shuai Zheng, Oluwasanmi~O Koyejo, Indranil Gupta, Mu~Li, and Haibin
  Lin.
\newblock {CSER}: Communication-efficient {SGD} with error reset.
\newblock \emph{Proc. of Neural Information Processing Systems}, 33, 2020.

\bibitem[Xu et~al.(2021)Xu, Huo, and Huang]{xu2021step}
An~Xu, Zhouyuan Huo, and Heng Huang.
\newblock \blue{Step-Ahead Error Feedback for Distributed Training with
  Compressed Gradient}.
\newblock In \emph{Proceedings of the AAAI Conference on Artificial
  Intelligence}, volume~35, pages 10478--10486, 2021.

\bibitem[Xu et~al.(2020)Xu, Ho, Abdelmoniem, Dutta, Bergou, Karatsenidis,
  Canini, and Kalnis]{xu2020compressed}
Hang Xu, Chen-Yu Ho, Ahmed~M Abdelmoniem, Aritra Dutta, El~Houcine Bergou,
  Konstantinos Karatsenidis, Marco Canini, and Panos Kalnis.
\newblock Compressed communication for distributed deep learning: Survey and
  quantitative evaluation.
\newblock Technical report, 2020.

\bibitem[Yu et~al.(2018)Yu, Lin, Narra, Li, Li, Kim, Schwing, Annavaram, and
  Avestimehr]{yu2018gradiveq}
Mingchao Yu, Zhifeng Lin, Krishna Narra, Songze Li, Youjie Li, Nam~Sung Kim,
  Alexander Schwing, Murali Annavaram, and Salman Avestimehr.
\newblock Gradiveq: Vector quantization for bandwidth-efficient gradient
  aggregation in distributed cnn training.
\newblock \emph{Proc. of Neural Information Processing Systems}, 31:\penalty0
  5123--5133, 2018.

\bibitem[Zheng et~al.(2019)Zheng, Huang, and Kwok]{zheng2019}
Shuai Zheng, Ziyue Huang, and James Kwok.
\newblock Communication-efficient distributed blockwise momentum {SGD} with
  error-feedback.
\newblock \emph{Proc. of Neural Information Processing Systems}, 32:\penalty0
  11450--11460, 2019.

\end{thebibliography}


\newpage
\appendix
\begin{center}
{\LARGE \bf Appendix}
\end{center}

\section{Additional details}
\label{sec:additionaldetails}

\subsection{Biased gradient compressors are more suitable for small batchsizes}\label{apdx.sec.ub_b}
Suppose that $\mathbf{g}$ is a stochastic approximation of $\nabla f(\mathbf{x})$ satisfying Assumption \ref{as.2}. We show that compressors satisfying Assumption \ref{as.4}, typically biased, are more robust to the large variance ($\sigma^2$) in the small batchsize settings. Consider gradient compressor ${\cal Q}_1$ satisfying Assumption \ref{as.4} with $\delta<1$. The compression error of ${\cal Q}_1$ can be calculated as
\begin{align}\label{apdx.eq.biased}
	\mathbb{E}[\| {\cal Q}_1(\mathbf{g}) - \mathbf{g} \|^2 ] \leq \delta \mathbb{E}[\| \mathbf{g} \|^2 ] \leq  \delta \mathbb{E}[\| \nabla f(\mathbf{x}) \|^2] + \delta \sigma^2.
\end{align}

Let ${\cal Q}_2$ denote an unbiased compressor satisfying Assumption \ref{as.3}. Typically to get a good compression ratio, we have $\theta>1$. The compression error is bounded by
\begin{align}\label{apdx.eq.unbiased}
	\mathbb{E}[\| {\cal Q}_2(\mathbf{g}) - \mathbf{g} \|^2 ] \leq \theta \mathbb{E}[\| \mathbf{g} \|^2 ] \leq  \theta \mathbb{E}[\| \nabla f(\mathbf{x}) \|^2] + \theta \sigma^2.
\end{align}
Given $\theta>1$, the bound in \eqref{apdx.eq.unbiased} amplifies the variance; while the bound in  \eqref{apdx.eq.biased} shrinks the variance. Therefore, a contractive compressor satisfying Assumption \ref{as.4}, which is typically biased, is more suitable for a small batchsize setting due to the large $\sigma^2$. Although some of unbiased gradient compressors, such as QSGD working in ``dense regime'' \citep{alistarh2017} and natural dithering \citep{horvath2019}, ensure $\theta<1$, unfortunately they are not allreducable, and hence are not suitable for our setting.

Other works such as \citep{stich2018} also observe the variance blow-up in unbiased gradient compressors.

\subsection{Applying \name to ZeRO}
ZeRO \citep{rajbhandari2020zero} can be used to address memory issues for training large models in a distributed fashion. Mathematically, ZeRO is equivalent to Adam or other chosen optimizers, while the memory issues are addressed by partitioning optimizer states, gradients, and even model parameters among different workers. Here we consider the case of ZeRO-3, where all aforementioned parts are partitioned among different workers. 

Being most memory efficient, ZeRO-3 increases the communication overhead. Therefore, it is helpful to use gradient compression to speed up training. However, EF based approach cannot be applied here due to the requirement of a model-sized local error vector that is burdensome to GPU memory. In addition, because of the partitioned gradients, one cannot employ the memory trick in \citep{ramesh2021zero} where one puts local error vector to gradient buffer to save space. 

We believe \name is helpful to reduce the runtime of ZeRO-3 while respect constraints on GPUs memory. In this paper, we first demonstrate the feasibility and convergence of \name to clear up the theoretical barrier of integrating \name to ZeRO-3. The detailed implementation is left for future work.

\section{Proof of Theorem \ref{thm.c2}}
\label{sec:proofthmc2}
\begin{proof}
    For convenience, define the compression error of ${\cal C}$ on worker $i$ as $\bm{\zeta}_t^i = \mathbf{e}_{t+1}^i - \mathbf{p}_t^i + \mathbf{\Delta}_t^i$. Following Assumption \ref{as.3}, we have
    \begin{align}
        \mathbb{E} \big[ \bm{\zeta}_t^i | \mathbf{p}_t^i, \mathbf{\Delta}_t^i \big]  = \mathbf{0}, \label{eq.apdx.zeta_mean} \\
        \mathbb{E} \big[ \| \bm{\zeta}_t^i \|^2 | \mathbf{p}_t^i, \mathbf{\Delta}_t^i  \big]  = \mathbb{E}\big[ \| {\cal C}(\mathbf{p}_t^i - \mathbf{\Delta}_t^i)  - \mathbf{p}_t^i + & \mathbf{\Delta}_t^i \|^2 | \mathbf{p}_t^i, \mathbf{\Delta}_t^i  \big]  \leq \theta \|\mathbf{p}_t^i  - \mathbf{\Delta}_t^i \|^2. \label{eq.apdx.zeta_var}
    \end{align}
    Next, we rewrite local error $\mathbf{e}_{t+1}^i$ using $\bm{\zeta}_t^i$ as
    \begin{align*}
        \mathbf{e}_{t+1}^i & = \mathbf{p}_t^i - \mathbf{\Delta}_t^i + \bm{\zeta}_t^i \stackrel{(a)}{=} \eta \mathbf{g}_t^i + \mathbf{e}_t^i - \mathbf{\Delta}_t^i + \bm{\zeta}_t^i
    \end{align*}
    where (a) follows from $\mathbf{p}_t^i = \eta \mathbf{g}_t^i + \mathbf{e}_t^i$. Then summing over $N$ workers, we have
    \begin{align*}   
    	& ~~~~ \frac{1}{N}\sum_{i=1}^N \mathbf{e}_{t+1}^i  \\
    	& = \frac{\eta}{N}\sum_{i=1}^N \mathbf{g}_t^i +\frac{1}{N}\sum_{i=1}^N \mathbf{e}_t^i - \frac{1}{N} \sum_{i=1}^N \mathbf{\Delta}_t^i + \frac{1}{N}\sum_{i=1}^N \bm{\zeta}_t^i \\
        & \stackrel{(b)}{=}  \frac{\eta}{N}\sum_{i=1}^N \mathbf{g}_t^i +\frac{1}{N}\sum_{i=1}^N \mathbf{e}_t^i + \mathbf{x}_{t+1} - \mathbf{x}_t + \frac{1}{N}\sum_{i=1}^N \bm{\zeta}_t^i
    \end{align*}
    where (b) is because $\frac{1}{N}\sum_{i=1}^N\mathbf{\Delta}_t^i = \mathbf{\Delta}_t = \mathbf{x}_t - \mathbf{x}_{t+1}$. Rearranging the terms and define $\mathbf{z}_t := \mathbf{x}_t -\frac{1}{N}\sum_{i=1}^N \mathbf{e}_t^i$, we arrive at
    \begin{align}\label{apdx.z_iter}
        \mathbf{z}_{t+1} = \mathbf{z}_t - \frac{\eta}{N}\sum_{i=1}^N  \mathbf{g}_t^i - \frac{1}{N}\sum_{i=1}^N \bm{\zeta}_t^i.
    \end{align}
    
    The analysis will rely on \eqref{apdx.z_iter}. By Assumption \ref{as.1}, we have
    \begin{align}\label{eq.apdx.smooth}
        & ~~~~ f(\mathbf{z}_{t+1}) - f(\mathbf{z}_t) \\
        & \leq \langle \nabla f(\mathbf{z}_t) , \mathbf{z}_{t+1} - \mathbf{z}_t \rangle + \frac{L}{2} \| \mathbf{z}_{t+1} - \mathbf{z}_t  \|^2 \nonumber \\
        & = - \frac{1}{N}\sum_{i=1}^N  \langle \nabla f(\mathbf{z}_t) , \bm{\zeta}_t^i \rangle - \frac{\eta}{N}\sum_{i=1}^N  \langle \nabla f(\mathbf{z}_t) , \mathbf{g}_t^i \rangle + \frac{L}{2} \Big{\|} \frac{\eta}{N}\sum_{i=1}^N \mathbf{g}_t^i + \frac{1}{N}\sum_{i=1}^N \bm{\zeta}_t^i  \Big{\|}^2 \nonumber \\ 
        & = - \frac{1}{N}\sum_{i=1}^N \langle \nabla f(\mathbf{z}_t) , \bm{\zeta}_t^i \rangle - \frac{\eta}{N}\sum_{i=1}^N \langle \nabla f(\mathbf{z}_t) - \nabla f(\mathbf{x}_t) , \mathbf{g}_t^i \rangle  \nonumber \\
        & ~~~~~~~~~~~~~~~~ - \frac{\eta}{N}\sum_{i=1}^N \langle \nabla f(\mathbf{x}_t) , \mathbf{g}_t^i \rangle + \frac{L}{2} \Big{\|} \frac{\eta}{N}\sum_{i=1}^N \mathbf{g}_t^i + \frac{1}{N}\sum_{i=1}^N \bm{\zeta}_t^i  \Big{\|}^2. \nonumber
    \end{align}
    
    Given \eqref{eq.apdx.zeta_mean} and taking expectation w.r.t. the randomness of ${\cal Q}$, we arrive at $\mathbb{E}[\bm{\zeta}_t^i | \mathbf{p}_t^i ] = \mathbb{E}[\bm{\zeta}_t^i | \mathbf{g}_t^i, \mathbf{e}_t^i ] = \mathbf{0}$. Then conditioning on $\mathbf{x}_t$ and taking expectation w.r.t. the randomness of $\mathbf{g}_t^i$, we have $\mathbb{E}[\bm{\zeta}_t^i | \mathbf{x}_t, \mathbf{e}_t^i ] = \mathbf{0}$. Hence, we have 
    \begin{align}
        \mathbb{E}\big[ \langle \nabla f(\mathbf{z}_t) , \bm{\zeta}_t^i \rangle | &  \mathbf{x}_t, \mathbf{e}_t^i  \big] = \langle \nabla f(\mathbf{z}_t), \mathbb{E}[\bm{\zeta}_t^i | \mathbf{x}_t, \mathbf{e}_t^i ] \rangle = 0  \nonumber \\
        & \Rightarrow \mathbb{E}\big[ \langle \nabla f(\mathbf{z}_t) , \bm{\zeta}_t^i \rangle  \big] = 0. \label{eq.apdx.term1}
    \end{align}
    
    We also have
    \begin{align*}
        \mathbb{E}\big[ \langle \nabla f(\mathbf{z}_t) - \nabla f(\mathbf{x}_t) , \mathbf{g}_t^i \rangle | \mathbf{x}_t, \mathbf{e}_t^i \big] =  \mathbb{E}\big[ \langle \nabla f(\mathbf{z}_t) - \nabla f(\mathbf{x}_t) , \nabla f(\mathbf{x}_t) \rangle | \mathbf{x}_t, \mathbf{e}_t^i \big].
    \end{align*}
    Then taking expectation w.r.t. $\mathbf{x}_t$ and $\mathbf{e}_t^i$, we have
    \begin{align}\label{eq.apdx.term2}
        - \eta \mathbb{E}\big[ \langle \nabla f(\mathbf{z}_t) - \nabla f(\mathbf{x}_t) , \mathbf{g}_t^i \rangle \big] & = - \eta \mathbb{E}\big[ \langle \nabla f(\mathbf{z}_t) - \nabla f(\mathbf{x}_t) , \nabla f(\mathbf{x}_t) \rangle \big] \\
        & \stackrel{(c)}{\leq} \frac{\eta}{2} \mathbb{E} \big[\| \nabla f(\mathbf{x}_t) \|^2 \big] +  \frac{\eta }{2} \mathbb{E}\big[ \|  \nabla f(\mathbf{z}_t) - \nabla f(\mathbf{x}_t) \|^2 \big] \nonumber \\
        & \stackrel{(d)}{\leq} \frac{\eta}{2} \mathbb{E} \big[\| \nabla f(\mathbf{x}_t) \|^2 \big] + \frac{\eta L^2}{2} \mathbb{E}\Big[ \Big{\|} \frac{1}{N}\sum_{i=1}^N \mathbf{e}_t^i  \Big{\|}^2 \Big] \nonumber \\
        & \stackrel{(e)}{\leq} \frac{\eta}{2} \mathbb{E} \big[\| \nabla f(\mathbf{x}_t) \|^2 \big] + \frac{\eta^3 L^2}{2}\frac{c}{(1-c)\lambda} (\sigma^2 + G^2)\nonumber
    \end{align}
    where (c) is by Young's inequality; (d) uses Assumption \ref{as.1} and $\mathbf{z}_t = \mathbf{x}_t -\frac{1}{N}\sum_{i=1}^N \mathbf{e}_t^i$; and (e) comes from Lemma \ref{lemma.Re}.

    Next, we have
    \begin{align}\label{eq.apdx.term3}
        \eta \mathbb{E} \big[ \langle \nabla f(\mathbf{x}_t), \mathbf{g}_t^i  \rangle \big] =  \eta \mathbb{E} \Big[ \mathbb{E} \big[ \langle \nabla f(\mathbf{x}_t), \mathbf{g}_t^i \rangle | \mathbf{x}_t \big] \Big] = \eta \mathbb{E} \big[\| \nabla f(\mathbf{x}_t) \|^2 \big].
    \end{align}

    To proceed, we have 
    \begin{align}\label{eq.apdx.term4}
    	& ~~~~ \frac{L}{2} \mathbb{E} \Big[ \Big{\|} \frac{\eta}{N}\sum_{i=1}^N \mathbf{g}_t^i + \frac{1}{N}\sum_{i=1}^N \bm{\zeta}_t^i  \Big{\|}^2 \Big] \\
    	& \leq L \eta^2 \mathbb{E} \Big[ \Big{\|} \frac{1}{N}\sum_{i=1}^N \mathbf{g}_t^i  - \nabla f(\mathbf{x}_t) + \nabla f(\mathbf{x}_t)\Big{\|}^2 \Big] + L \mathbb{E} \Big[ \Big{\|} \frac{1}{N}\sum_{i=1}^N \bm{\zeta}_t^i  \Big{\|}^2 \Big] \nonumber \\
    	& \leq L \eta^2 \mathbb{E} \big[ \| \nabla f(\mathbf{x}_t)\|^2 \big] + L \eta^2  \mathbb{E} \Big[ \Big{\|} \frac{1}{N}\sum_{i=1}^N \mathbf{g}_t^i  - \nabla f(\mathbf{x}_t) \Big{\|}^2 \Big]   + L \mathbb{E} \Big[ \Big{\|} \frac{1}{N}\sum_{i=1}^N \bm{\zeta}_t^i  \Big{\|}^2 \Big]  \nonumber \\
    	& \leq L \eta^2 \mathbb{E} \big[ \| \nabla f(\mathbf{x}_t)\|^2 \big] + \frac{ L \eta^2 \sigma^2}{N} + L \mathbb{E} \Big[ \Big{\|} \frac{1}{N}\sum_{i=1}^N \bm{\zeta}_t^i  \Big{\|}^2 \Big]  \nonumber \\
    	& \leq L \eta^2 \mathbb{E} \big[ \| \nabla f(\mathbf{x}_t)\|^2 \big] + \frac{ L \eta^2 \sigma^2}{N} + \frac{1 - c+ c/\lambda}{1 - c} \frac{2 L \delta\theta \eta^2 }{N} (\sigma^2 + G^2) \nonumber
    \end{align}
    where the last inequality comes from Lemma \ref{lemma.zeta}.
    
    Taking expectation on both sides of \eqref{eq.apdx.smooth}, and plugging \eqref{eq.apdx.term1}, \eqref{eq.apdx.term2}, \eqref{eq.apdx.term3}, and \eqref{eq.apdx.term4} in, we have
   	\begin{align*}
   		& \mathbb{E} \big[ f(\mathbf{z}_{t+1}) - f(\mathbf{z}_t) \big]  \leq 
   		(- \frac{\eta}{2} + L \eta^2 )\mathbb{E}\big[ \| \nabla f(\mathbf{x}_t) \|^2 \big]  \\
   		& ~~~~~~~~~~  + \frac{c}{(1-c)\lambda} \frac{L^2 \eta^3 (\sigma^2 + G^2)}{2} + \frac{\eta^2 \sigma^2 L}{N} + \frac{1 - c+ c/\lambda}{1 - c}  \frac{2\delta\theta \eta^2 L (\sigma^2 + G^2)}{N}. \nonumber
   	\end{align*}
	Rearranging the terms and dividing both sides with $\eta$, we arrive at
	\begin{align*}
   		& (\frac{1}{2} - L \eta )\mathbb{E}\big[ \| \nabla f(\mathbf{x}_t) \|^2 \big] \leq\frac{ \mathbb{E} \big[ f(\mathbf{z}_t)  - f(\mathbf{z}_{t+1}) \big] }{\eta}
   		  \\
   		& ~~~~~~~~~~ + \frac{c}{(1-c)\lambda} \frac{L^2 \eta^2 (\sigma^2 + G^2)}{2} + \frac{\eta \sigma^2 L}{N} + \frac{1 - c+ c/\lambda}{1 - c}  \frac{2\delta\theta \eta L (\sigma^2 + G^2)}{N}.\nonumber
   	\end{align*}

	Summing over $t$, and using the facts $\mathbf{z}_0 = \mathbf{x}_0$ and $f(\mathbf{z}_T)\geq f^*$, we arrive at
	\begin{align*}
   		& (\frac{1}{2} - L \eta ) \frac{1}{T}\sum_{t=0}^{T-1}\mathbb{E}\big[ \| \nabla f(\mathbf{x}_t) \|^2 \big] \leq\frac{ \big( f(\mathbf{x}_0) - f^*  \big) }{\eta T}
   		  \\
   		& ~~~~~~~~~~ + \frac{c}{(1-c)\lambda} \frac{L^2 \eta^2 (\sigma^2 + G^2)}{2} + \frac{\eta \sigma^2 L}{N} + \frac{1 - c+ c/\lambda}{1 - c}  \frac{2\delta\theta \eta L (\sigma^2 + G^2)}{N}.\nonumber
   	\end{align*}
   	Let $\eta = \min \big\{ \frac{1}{4L}, \frac{1}{L ( \sqrt{\frac{T}{N}} + T^{1/3} )} \big\}$. In this case, we have $\frac{1}{2} - L\eta \geq \frac{1}{4}$, $\eta \leq (\frac{1}{L T^{1/3}}$ and $\eta \leq \frac{\sqrt{N}}{L\sqrt{T}}$. Using these three inequalities, we have

   	\begin{align*}
   		& \frac{1}{4T}\sum_{t=0}^{T-1}\mathbb{E}\big[ \| \nabla f(\mathbf{x}_t) \|^2 \big] \leq \frac{ 4L\big( f(\mathbf{x}_0) - f^* ) }{ T} + \frac{ L\big( f(\mathbf{x}_0) - f^* ) }{ \sqrt{NT}} + \frac{ L\big( f(\mathbf{x}_0) - f^* ) }{ T^{2/3}}
   		  \\
   		& ~~~~~~~~~~ + \frac{c}{(1-c)\lambda} \frac{ \sigma^2 + G^2 }{2 T^{2/3}} + \frac{\sigma^2}{\sqrt{NT}} + \frac{1 - c+ c/\lambda}{1 - c}  \frac{ 2 \delta\theta  (\sigma^2 + G^2)}{\sqrt{NT}} .\nonumber
   	\end{align*}
   	If we consider a sufficiently large $T$, i.e, $T > N^3$, we have $\sqrt{NT} < T^{2/3}$. Plugging this inequality in and viewing both $\lambda$ and $c$ as constants, we have
   		\begin{align*}
   			\frac{1}{T}\sum_{t=0}^{T-1}\mathbb{E}\big[ \| \nabla f(\mathbf{x}_t) \|^2 \big] \leq {\cal O} \bigg( \frac{  L \big( f(\mathbf{x}_0) - f^*  \big) }{\sqrt{NT}} \bigg) + {\cal O} \bigg( \frac{ \sigma^2 + G^2 }{\sqrt{NT}} \bigg) +{\cal O}  \bigg(  \frac{ \delta\theta  (\sigma^2 + G^2)}{\sqrt{NT}}\bigg).\nonumber
   	\end{align*}
   	The proof is thus completed.
\end{proof}

\begin{lemma}\label{lemma.Re}
    Suppose that there exist $c\in (0,1)$ and $\lambda>0$ such that $\delta = \frac{c}{(1+\lambda)(1 + \sqrt{\theta})^2}$. It is guaranteed to have
    \begin{align*}
        \mathbb{E} \Big[ \Big\| \frac{1}{N} \sum_{i=1}^N \mathbf{e}_{t+1}^i \Big\|^2 \Big] \leq  \frac{c}{(1-c)\lambda} \eta^2 (\sigma^2 + G^2) , \forall t \geq 0.
    \end{align*}
\end{lemma}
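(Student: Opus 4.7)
The plan is to prove the lemma by induction on $t$, tracking the per-worker quantity $A_t := \frac{1}{N}\sum_i \mathbb{E}\|\mathbf{e}_t^i\|^2$ and using Jensen's inequality $\|\frac{1}{N}\sum_i \mathbf{e}_t^i\|^2 \leq \frac{1}{N}\sum_i \|\mathbf{e}_t^i\|^2$ at the end to transfer the bound to the norm of the average. The base case $t=0$ is trivial since $\mathbf{e}_0^i = \mathbf{0}$. For the inductive step, I expand $\mathbf{e}_{t+1}^i = \mathbf{p}_t^i - \mathbf{\Delta}_t^i + \bm{\zeta}_t^i$ using the compression error $\bm{\zeta}_t^i$ from \eqref{eq.apdx.zeta_mean}--\eqref{eq.apdx.zeta_var}. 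Since $\bm{\zeta}_t^i$ is zero-mean given $(\mathbf{p}_t^i, \mathbf{\Delta}_t^i)$, the cross term vanishes when taking conditional expectation, leaving
\begin{align*}
\mathbb{E}\big[\|\mathbf{e}_{t+1}^i\|^2 \,\big|\, \mathbf{p}_t^i, \mathbf{\Delta}_t^i\big] = \|\mathbf{p}_t^i - \mathbf{\Delta}_t^i\|^2 + \mathbb{E}\big[\|\bm{\zeta}_t^i\|^2 \,\big|\, \cdot\big] \leq (1+\theta)\|\mathbf{p}_t^i - \mathbf{\Delta}_t^i\|^2.
\end{align*}

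Next I would take expectation over ${\cal Q}$ and invoke Assumption \ref{as.4} to get $\mathbb{E}[\|\mathbf{p}_t^i - \mathbf{\Delta}_t^i\|^2 \mid \mathbf{p}_t^i] \leq \delta\|\mathbf{p}_t^i\|^2$, and then use the key algebraic observation $(1+\theta) \leq (1+\sqrt{\theta})^2$ together with the assumed relation $\delta(1+\sqrt{\theta})^2 = \frac{c}{1+\lambda}$ to collapse the prefactor into $\frac{c}{1+\lambda}$. To control $\mathbb{E}\|\mathbf{p}_t^i\|^2 = \mathbb{E}\|\eta \mathbf{g}_t^i + \mathbf{e}_t^i\|^2$, I would apply Young's inequality with exactly the parameter $\lambda$ from the hypothesis, so that
\begin{align*}
\mathbb{E}\|\mathbf{p}_t^i\|^2 \leq (1+\lambda)\,\mathbb{E}\|\mathbf{e}_t^i\|^2 + (1+1/\lambda)\,\eta^2 \mathbb{E}\|\mathbf{g}_t^i\|^2 \leq (1+\lambda)\,\mathbb{E}\|\mathbf{e}_t^i\|^2 + (1+1/\lambda)\,\eta^2(\sigma^2+G^2),
\end{align*}
where the last inequality uses $\mathbb{E}\|\mathbf{g}_t^i\|^2 \leq \sigma^2 + G^2$ from Assumption \ref{as.2}. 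Averaging over workers yields the contractive recursion $A_{t+1} \leq c\, A_t + \frac{c}{\lambda}\eta^2(\sigma^2+G^2)$, since $\frac{c(1+1/\lambda)}{1+\lambda} = \frac{c}{\lambda}$.

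Because $c \in (0,1)$, the recursion has the stable fixed point $A^\star = \frac{c}{(1-c)\lambda}\eta^2(\sigma^2+G^2)$, and since $A_0 = 0 \leq A^\star$, a straightforward induction gives $A_t \leq A^\star$ for every $t$. Combining with the initial Jensen step finishes the argument. The only delicate step is the matching of constants: one must choose the Young's parameter equal to the $\lambda$ appearing in the definition of $\delta$, and simultaneously use the inequality $(1+\theta)\delta \leq (1+\sqrt{\theta})^2\delta$, so that the contraction factor in the recursion is exactly $c < 1$ rather than something $\geq 1$. This algebraic alignment is really what forces the specific form $\delta = c/\big((1+\lambda)(1+\sqrt{\theta})^2\big)$ in the hypothesis, and is the main (mild) obstacle; everything else is routine.
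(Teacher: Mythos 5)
Your proof is correct and follows essentially the same route as the paper's: a per-worker recursion on $\frac{1}{N}\sum_{i}\mathbb{E}\big[\|\mathbf{e}_t^i\|^2\big]$ driven by the contractivity of ${\cal Q}$, Young's inequality with the parameter $\lambda$ from the hypothesis, a geometric-series (fixed-point) argument using $A_\lambda = c < 1$, and Jensen at the end to pass to the averaged error. The one minor and valid divergence is that you exploit the exact orthogonality $\mathbb{E}[\bm{\zeta}_t^i \mid \mathbf{p}_t^i, \mathbf{\Delta}_t^i] = \mathbf{0}$ to obtain the factor $(1+\theta)$ where the paper instead applies Young's inequality with $\alpha = \sqrt{\theta}$ to get $(1+\sqrt{\theta})^2$; your bound is tighter (it would even allow weakening the hypothesis to $\delta(1+\lambda)(1+\theta) \leq c$) before you deliberately loosen it to match the stated constant.
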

\begin{proof}
	To start with, we have
	\begin{align}\label{eq.apdx.s1}
		\mathbb{E} \Big[ \Big\| \frac{1}{N} \sum_{i=1}^N \mathbf{e}_{t+1}^i \Big\|^2 \Big] = \frac{1}{N^2} \mathbb{E} \Big[ \Big\|  \sum_{i=1}^N \mathbf{e}_{t+1}^i \Big\|^2 \Big] \leq \frac{1}{N}  \sum_{i=1}^N \mathbb{E} \big[ \| \mathbf{e}_{t+1}^i \|^2 \big].
	\end{align}

    Conditioning on $\mathbf{g}_t^i$, the randomness only comes from two compressors.
    \begin{align}\label{apdx.14}
        \mathbb{E} \big[ \| \mathbf{e}_{t+1}^i \|^2 \big | \mathbf{g}_t^i] & = \mathbb{E} \big[ \| \mathbf{p}_t^i - \mathbf{\Delta}_t^i + \bm{\zeta}_t^i  \|^2 |\mathbf{g}_t^i \big]  \\
        & \stackrel{(a)}{\leq} (1 + \alpha) \mathbb{E} \big[ \| \mathbf{p}_t^i - \mathbf{\Delta}_t^i  \|^2 |\mathbf{g}_t^i \big] + \Big(1 + \frac{1}{\alpha} \Big) \mathbb{E} \big[ \| \bm{\zeta}_t^i  \|^2 |\mathbf{g}_t^i \big] \nonumber \\
        & \stackrel{(b)}{\leq} \Big(1 + \alpha + \theta + \frac{\theta}{\alpha} \Big) \mathbb{E} \big[ \| \mathbf{p}_t^i - \mathbf{\Delta}_t^i  \|^2 |\mathbf{g}_t^i \big] \nonumber \\
        & \stackrel{(c)}{\leq} \big(1 + \sqrt{\theta} \big)^2 \mathbb{E} \big[ \| \mathbf{p}_t^i - \mathbf{\Delta}_t^i  \|^2 |\mathbf{g}_t^i \big] \nonumber \\
        & \stackrel{(d)}{\leq } \delta \big(1 + \sqrt{\theta} \big)^2 \mathbb{E} \big[ \| \mathbf{p}_t^i  \|^2 |\mathbf{g}_t^i \big] \nonumber \\
        & = \delta \big(1 + \sqrt{\theta} \big)^2  \mathbb{E} \big[ \| \eta \mathbf{g}_t	^i + \mathbf{e}_t^i \|^2 |\mathbf{g}_t^i \big] \nonumber \\
        & \stackrel{(e)}{\leq } \Big( 1 + \frac{1}{\lambda} \Big) \delta \big(1 + \sqrt{\theta} \big)^2 \eta^2 \| \mathbf{g}_t^i \|^2  + (1 + \lambda) \delta \big(1 + \sqrt{\theta} \big)^2 \mathbb{E} \big[ \| \mathbf{e}_t^i \|^2 |\mathbf{g}_t^i \big]  \nonumber
    \end{align}
    where (a) is by Young's inequality, i.e., $\| \mathbf{a} + \mathbf{b}\|^2 \leq (1+\alpha) \|\mathbf{a} \|^2 + (1 + \frac{1}{\alpha}) \| \mathbf{b}\|^2$ with $\alpha >0$ to be specified shortly; (b) is the result of \eqref{eq.apdx.zeta_var}; 
    (c) is by choosing $\alpha = \sqrt{\theta}$, which minimizes the coefficient; (d) results from Assumption \ref{as.4}; and (e) follows again from Young's inequality. 
    
    Then taking expectation w.r.t. $\mathbf{g}_t^i$, we arrive at 
    \begin{align*}
        \mathbb{E} \big[ \| \mathbf{e}_{t+1}^i \|^2 \big ] & \leq \underbrace{ \Big( 1 + \frac{1}{\lambda} \Big) \delta \big(1 + \sqrt{\theta} \big)^2 }_{:=B_{\lambda}}  \eta^2 \mathbb{E} \big[ \| \mathbf{g}_t^i \|^2 \big] + \underbrace{(1 + \lambda) \delta \big(1 + \sqrt{\theta} \big)^2 }_{:=A_{\lambda}} \mathbb{E} \big[ \| \mathbf{e}_t^i \|^2 \big]
     \end{align*}
     Summing over $i$, we get
     \begin{align} \label{apdx.need_to_use}
     	\frac{1}{N}\sum_{i=1}^N \mathbb{E} \big[ \| \mathbf{e}_{t+1}^i \|^2 \big ] & \leq  \frac{B_\lambda \eta^2}{N}\sum_{i=1}^N \mathbb{E} \big[ \| \mathbf{g}_t^i \|^2 \big] + \frac{A_\lambda}{N}\sum_{i=1}^N \mathbb{E} \big[ \| \mathbf{e}_t^i \|^2 \big] \\
        & \stackrel{(f)}{\leq} B_{\lambda} \eta^2 (\sigma^2 + G^2) + \frac{A_\lambda}{N}\sum_{i=1}^N \mathbb{E} \big[ \| \mathbf{e}_t^i \|^2 \big]
        \stackrel{(g)}{\leq} \frac{B_{\lambda}}{1-A_{\lambda}} \eta^2 (\sigma^2 + G^2). \nonumber
    \end{align}
    where (f) follows from Assumption \ref{as.2}; and (g) is obtained by recursively unrolling $\frac{1}{N}\sum_{i=1}^N \mathbb{E} \big[ \| \mathbf{e}_t^i \|^2 \big]$ and using the fact that $\mathbf{e}_0^i = \mathbf{0}$. Given the parameter choice $\delta = \frac{c}{(1+\lambda)(1 + \sqrt{\theta})^2}$ for some $c\in (0,1)$, we have $A_\lambda = c$ and $B_\lambda = \frac{c}{\lambda}$. The proof can be completed by using \eqref{eq.apdx.s1}.

\end{proof}
 
\begin{lemma}\label{lemma.zeta}
    Choosing the parameters the same as those in Lemma \ref{lemma.Re}, it is guaranteed to have
    \begin{align*}
        \mathbb{E} \Big[ \Big{\|} \frac{1}{N}\sum_{i=1}^N \bm{\zeta}_t^i  \Big{\|}^2 \Big]  \leq \frac{1 - c+ c/\lambda}{1 - c} \frac{2 \delta\theta \eta^2 (\sigma^2 + G^2)}{N}.
    \end{align*}
\end{lemma}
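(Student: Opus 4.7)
The plan is to exploit the tower property together with the independence of the error compressor ${\cal C}$ across workers to decompose $\mathbb{E}\big[\big\|\tfrac{1}{N}\sum_i \bm{\zeta}_t^i\big\|^2\big]$ into a single sum of per-worker variances, and then to bound each $\mathbb{E}[\|\bm{\zeta}_t^i\|^2]$ using the chain ${\cal C}$-contractivity $\to$ ${\cal Q}$-contractivity $\to$ gradient and error norm bounds already available from Lemma~\ref{lemma.Re}.

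First, condition on $\{(\mathbf{p}_t^i,\mathbf{\Delta}_t^i)\}_{i=1}^N$. By \eqref{eq.apdx.zeta_mean} we have $\mathbb{E}[\bm{\zeta}_t^i \mid \mathbf{p}_t^i,\mathbf{\Delta}_t^i]=\mathbf{0}$ for every worker, and since ${\cal C}$ is drawn with independent randomness on different workers, the $\bm{\zeta}_t^i$'s are conditionally independent and zero-mean. Hence the cross terms vanish and
\[
\mathbb{E}\Big[\Big\|\tfrac{1}{N}\sum_{i=1}^N\bm{\zeta}_t^i\Big\|^2\Big]=\frac{1}{N^2}\sum_{i=1}^N \mathbb{E}[\|\bm{\zeta}_t^i\|^2].
\]
Then I invoke \eqref{eq.apdx.zeta_var} to get $\mathbb{E}[\|\bm{\zeta}_t^i\|^2]\le \theta\,\mathbb{E}[\|\mathbf{p}_t^i-\mathbf{\Delta}_t^i\|^2]$ and Assumption~\ref{as.4} to further bound this by $\delta\theta\,\mathbb{E}[\|\mathbf{p}_t^i\|^2]=\delta\theta\,\mathbb{E}[\|\eta\mathbf{g}_t^i+\mathbf{e}_t^i\|^2]$.

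Next I apply the elementary inequality $\|\mathbf{a}+\mathbf{b}\|^2\le 2\|\mathbf{a}\|^2+2\|\mathbf{b}\|^2$ to split this into a gradient piece and an error piece. Assumption~\ref{as.2} together with $\mathbb{E}[\|\mathbf{g}_t^i\|^2]\le \sigma^2+G^2$ handles the gradient piece, contributing $2\delta\theta\eta^2(\sigma^2+G^2)/N$ after summing and dividing by $N^2$. For the error piece I recycle the key intermediate inequality \eqref{apdx.need_to_use} used inside the proof of Lemma~\ref{lemma.Re}, namely $\tfrac{1}{N}\sum_i\mathbb{E}[\|\mathbf{e}_t^i\|^2]\le \tfrac{c/\lambda}{1-c}\,\eta^2(\sigma^2+G^2)$, which holds under the parameter choice $\delta=c/((1+\lambda)(1+\sqrt{\theta})^2)$.

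Combining the two pieces yields
\[
\mathbb{E}\Big[\Big\|\tfrac{1}{N}\sum_{i=1}^N\bm{\zeta}_t^i\Big\|^2\Big]\le \frac{2\delta\theta\eta^2(\sigma^2+G^2)}{N}\Big(1+\frac{c/\lambda}{1-c}\Big)=\frac{1-c+c/\lambda}{1-c}\cdot\frac{2\delta\theta\eta^2(\sigma^2+G^2)}{N},
\]
which is the desired bound. The main subtle point is the first step: I must justify that the cross-worker covariances of $\bm{\zeta}_t^i$ vanish even though the $\mathbf{e}_t^i$'s (and therefore the $\mathbf{p}_t^i$'s) on different workers are not independent in general. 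The correct argument is conditional: conditioned on the sigma-algebra generated by all stochastic gradients and all previous compressor randomness up to time $t$, the current compression noises $\{\bm{\zeta}_t^i\}_i$ are independent zero-mean across $i$, so the unconditional cross terms still vanish by iterated expectation. Everything else is routine application of Young's inequality and the bounds already established.
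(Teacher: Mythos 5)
Your proof is correct and follows essentially the same route as the paper's: the cross terms are killed by the conditional zero-mean and independence of the per-worker compressor noise (yielding the $\frac{1}{N^2}\sum_i\mathbb{E}[\|\bm{\zeta}_t^i\|^2]$ decomposition), each variance is bounded via the chain $\theta \to \delta\theta$ using \eqref{eq.apdx.zeta_var} and Assumption \ref{as.4}, and the resulting $\|\eta\mathbf{g}_t^i+\mathbf{e}_t^i\|^2$ term is split by Young's inequality and controlled with Assumption \ref{as.2} and the intermediate bound \eqref{apdx.need_to_use} from Lemma \ref{lemma.Re}. Your explicit justification of the vanishing cross-covariances via conditioning on all gradients and prior compressor randomness is the right way to fill in what the paper states only tersely.
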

\begin{proof}
	Given \eqref{eq.apdx.zeta_mean}, it is not hard to see that for $i \neq j$, 
	\begin{align*}
	 	\mathbb{E} \big[ \langle \bm{\zeta}_t^i , \bm{\zeta}_t^j \rangle|  \mathbf{x}_t \big]  = 0
	\end{align*}
	which implies that
	\begin{align}\label{eq.apdx.need2}
		\mathbb{E} \Big[ \Big{\|} \frac{1}{N}\sum_{i=1}^N \bm{\zeta}_t^i  \Big{\|}^2 \Big] = \frac{1}{N^2} \sum_{i=1}^N \mathbb{E}\big[ \| \bm{\zeta}_t^i \|^2 \big].
	\end{align}

    Then we have from \eqref{eq.apdx.zeta_var} that
    \begin{align*}
        \mathbb{E} \big[ \| \bm{\zeta}_t^i \|^2 | \mathbf{p}_t^i, \mathbf{\Delta}_t^i  \big]  \leq \theta \|\mathbf{p}_t^i  - \mathbf{\Delta}_t^i \|^2
    \end{align*}
    Then taking expectation w.r.t. the randomness of ${\cal Q}$, and by Assumption \ref{as.4}, we have
    \begin{align*}
        \mathbb{E} \big[ \| \bm{\zeta}_t^i \|^2  | \mathbf{p}_t^i \big]   \leq \delta\theta \|\mathbf{p}_t^i \|^2  = \delta\theta  \|\eta \mathbf{g}_t^i	+ \mathbf{e}_t^i\|^2 .
    \end{align*}
    Finally, taking expectation w.r.t. $\mathbf{p}_t^i$, we have
    \begin{align*}
        \mathbb{E} \big[ \| \bm{\zeta}_t^i \|^2  \big]   \leq  \delta \theta \mathbb{E}\big[ \|\eta \mathbf{g}_t^i	+ \mathbf{e}_t^i \|^2  \big] \leq 2\delta\theta \eta^2 \mathbb{E}\big[ \| \mathbf{g}_t^i \|^2] 	+ 2\delta\theta  \mathbb{E}\big[ \| \mathbf{e}_t^i \|^2  \big].  
    \end{align*}
    
    Summing over $i$, we arrive at    
    \begin{align*}
    	\frac{1}{N} \sum_{i=1}^N \mathbb{E} \big[ \| \bm{\zeta}_t^i \|^2  \big] & \leq 2\delta\theta \eta^2 \frac{1}{N} \sum_{i=1}^N \mathbb{E}\big[ \| \mathbf{g}_t^i \|^2] 	+ 2\delta\theta  \frac{1}{N} \sum_{i=1}^N \mathbb{E}\big[ \| \mathbf{e}_t^i \|^2  \big]  \\
    	& \leq  2\delta\theta \eta^2 (\sigma^2 + G^2)  + 2 \delta \theta \frac{c}{(1-c)\lambda} \eta^2 (\sigma^2 + G^2) \\
        & \leq \frac{1 - c+ c/\lambda}{1 - c} 2 \delta\theta \eta^2 (\sigma^2 + G^2)
    \end{align*}
    where we use Assumption \ref{as.2} and \eqref{apdx.need_to_use} in Lemma \ref{lemma.Re} simultaneously in the second last inequality. The proof is completed by applying \eqref{eq.apdx.need2}.
\end{proof}


\section{Proof of Theorem \ref{thm.c3}}
\label{sec:proofthmc3}
The proof idea is to define $\mathbf{e}_{t+1}^i := \tilde{\mathbf{e}}_{t+1}^i + \mathbf{q}_{t+1}^i$ and to view $\mathbf{e}_{t+1}^i$ as a single error compressor so that we can reuse the proof of Theorem \ref{thm.c2}. To see this, define compression error as $\bm{\zeta}_t^i = \mathbf{e}_{t+1}^i - \mathbf{p}_t^i + \mathbf{\Delta}_t^i$. Then we have
\begin{align}\label{eq.apdx.zeta_mean_2}
		\mathbb{E} \big[ \bm{\zeta}_t^i | \mathbf{p}_t^i, \mathbf{\Delta}_t^i \big] & = \mathbb{E} \big[ \mathbf{e}_{t+1}^i | \mathbf{p}_t^i, \mathbf{\Delta}_t^i\big] - (\mathbf{p}_t^i - \mathbf{\Delta}_t^i) \\
		& = \mathbb{E} \big[ \tilde{\mathbf{e}}_{t+1}^i| \mathbf{p}_t^i, \mathbf{\Delta}_t^i \big] + \mathbb{E} \big[ {\cal C}(\mathbf{p}_t^i - \mathbf{\Delta}_t^i - \tilde{\mathbf{e}}_{t+1}^i ) | \mathbf{p}_t^i, \mathbf{\Delta}_t^i \big] - (\mathbf{p}_t^i - \mathbf{\Delta}_t^i )  = \mathbf{0} \nonumber.
	\end{align}
	where the last equation uses the fact $ \mathbb{E} \big[ {\cal C}(\mathbf{p}_t^i - \mathbf{\Delta}_t^i - \tilde{\mathbf{e}}_{t+1}^i ) | \mathbf{p}_t^i, \mathbf{\Delta}_t^i, \tilde{\mathbf{e}}_{t+1}^i  \big]  = \mathbf{p}_t^i - \mathbf{\Delta}_t^i - \tilde{\mathbf{e}}_{t+1}^i$. This means that $\mathbf{e}_{t+1}^i$ is still an unbiased estimator of $\mathbf{p}_t^i -\mathbf{\Delta}_t^i$. However, by adding the compression error back, the variance of $\bm{\zeta}_t^i$ can be reduced, i.e.,
	\begin{align}\label{eq.apdx.var_zeta_2}
		\mathbb{E} \big[ \|  \bm{\zeta}_t^i \|^2 | \mathbf{p}_t^i, \mathbf{\Delta}_t^i \big] & = \mathbb{E} \big[ \|  \tilde{\mathbf{e}}_{t+1}^i + {\cal C}(\mathbf{p}_t^i - \mathbf{\Delta}_t^i - \tilde{\mathbf{e}}_{t+1}^i) - \mathbf{p}_t^i + \mathbf{\Delta}_t^i \|^2 | \mathbf{p}_t^i, \mathbf{\Delta}_t^i \big]  \\
		& \leq \theta \mathbb{E} \big[ \|  \tilde{\mathbf{e}}_{t+1}^i  - (\mathbf{p}_t^i - \mathbf{\Delta}_t^i) \|^2 | \mathbf{p}_t^i, \mathbf{\Delta}_t^i \big] \nonumber \\
		& =  \theta \mathbb{E} \big[ \|  {\cal C}(\mathbf{p}_t^i - \mathbf{\Delta}_t^i)  - (\mathbf{p}_t^i - \mathbf{\Delta}_t^i) \|^2 | \mathbf{p}_t^i, \mathbf{\Delta}_t^i \big] \nonumber \\
		& \leq \theta^2 \| \mathbf{p}_t^i - \mathbf{\Delta}_t^i \|^2. \nonumber
	\end{align}
	Comparing with \eqref{eq.apdx.zeta_var}, one can see that $\theta$ dependence is indeed improved given an accurate ${\cal C}$ with $\theta<1$. Next, we modify Lemmas \ref{lemma.Re} and \ref{lemma.zeta} to finish the proof.

\begin{lemma}\label{lemma.Re2}
	Suppose that there exists $c\in (0,1)$ and $\lambda>0$ such that $\delta = \frac{c}{(1+\lambda)(1 + \theta)^2}$. It is guaranteed to have
	\begin{align*}
        \mathbb{E} \Big[ \Big\| \frac{1}{N} \sum_{i=1}^N \mathbf{e}_{t+1}^i \Big\|^2 \Big] \leq  \frac{c}{(1-c)\lambda} \eta^2 (\sigma^2 + G^2) , \forall t \geq 0.
    \end{align*}
\end{lemma}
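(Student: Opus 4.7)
The plan is to mirror the structure of Lemma \ref{lemma.Re} while exploiting the improved variance bound \eqref{eq.apdx.var_zeta_2} that the double-compression construction affords, namely $\mathbb{E}\big[\|\bm{\zeta}_t^i\|^2 | \mathbf{p}_t^i, \mathbf{\Delta}_t^i\big] \leq \theta^2 \|\mathbf{p}_t^i - \mathbf{\Delta}_t^i\|^2$ rather than $\theta \|\mathbf{p}_t^i - \mathbf{\Delta}_t^i\|^2$. Since \eqref{eq.apdx.zeta_mean_2} already shows that $\mathbf{e}_{t+1}^i - \mathbf{p}_t^i + \mathbf{\Delta}_t^i$ is zero-mean, one can interpret the composite update as a single (unbiased) error compressor with a tighter variance proxy.

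First I would write, as in \eqref{apdx.14}, $\mathbb{E}\big[\|\mathbf{e}_{t+1}^i\|^2 | \mathbf{g}_t^i\big] = \mathbb{E}\big[\|\mathbf{p}_t^i - \mathbf{\Delta}_t^i + \bm{\zeta}_t^i\|^2 | \mathbf{g}_t^i\big]$ and split it via Young's inequality into $(1+\alpha)\|\mathbf{p}_t^i - \mathbf{\Delta}_t^i\|^2 + (1+1/\alpha)\|\bm{\zeta}_t^i\|^2$ with free parameter $\alpha>0$. Substituting the $\theta^2$ bound produces a coefficient $1 + \alpha + \theta^2 + \theta^2/\alpha$ on $\|\mathbf{p}_t^i - \mathbf{\Delta}_t^i\|^2$. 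The key arithmetic step different from Lemma \ref{lemma.Re} is that minimizing this expression yields $\alpha = \theta$ (not $\sqrt{\theta}$), giving the tightened coefficient $(1+\theta)^2$ in place of $(1+\sqrt{\theta})^2$.

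From here the argument proceeds in lockstep with Lemma \ref{lemma.Re}. I would apply Assumption \ref{as.4} to replace $\|\mathbf{p}_t^i - \mathbf{\Delta}_t^i\|^2$ by $\delta\|\mathbf{p}_t^i\|^2$, then expand $\mathbf{p}_t^i = \eta\mathbf{g}_t^i + \mathbf{e}_t^i$ using a second Young's inequality with parameter $\lambda$ to separate the stochastic-gradient contribution from the recurrent error contribution. Taking expectation over $\mathbf{g}_t^i$ and averaging over workers yields
\begin{align*}
\frac{1}{N}\sum_{i=1}^N \mathbb{E}\big[\|\mathbf{e}_{t+1}^i\|^2\big] \leq \tilde{B}_\lambda \eta^2 (\sigma^2 + G^2) + \tilde{A}_\lambda \cdot \frac{1}{N}\sum_{i=1}^N \mathbb{E}\big[\|\mathbf{e}_t^i\|^2\big],
\end{align*}
with $\tilde{A}_\lambda = (1+\lambda)\delta(1+\theta)^2$ and $\tilde{B}_\lambda = (1 + 1/\lambda)\delta(1+\theta)^2$. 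The parameter choice $\delta = \frac{c}{(1+\lambda)(1+\theta)^2}$ makes $\tilde{A}_\lambda = c \in (0,1)$ and $\tilde{B}_\lambda = c/\lambda$, so unrolling the recursion from $\mathbf{e}_0^i = \mathbf{0}$ and applying Jensen's inequality $\|\frac{1}{N}\sum_i \mathbf{e}_{t+1}^i\|^2 \leq \frac{1}{N}\sum_i \|\mathbf{e}_{t+1}^i\|^2$ produces the claimed $\frac{c}{(1-c)\lambda}\eta^2(\sigma^2 + G^2)$ bound.

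I expect the only genuine obstacle to be verifying the optimization over $\alpha$: the replacement of $\sqrt{\theta}$ with $\theta$ in Young's minimizer is the entire source of the $\theta \to \theta^2$ improvement that eventually propagates to Theorem \ref{thm.c3}, and it must be justified carefully so that the downstream recursion still closes with a contraction factor strictly below one. Everything else is a mechanical rerun of the Lemma \ref{lemma.Re} computation with the updated constants.
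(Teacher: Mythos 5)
Your proposal is correct and follows essentially the same route as the paper's own proof: it reuses the Lemma \ref{lemma.Re} recursion verbatim, with the only substantive change being the substitution of the tightened variance bound \eqref{eq.apdx.var_zeta_2} and the corresponding Young's-inequality minimizer $\alpha=\theta$, which yields the coefficient $(1+\theta)^2$ and hence $\tilde{A}_\lambda=c$, $\tilde{B}_\lambda=c/\lambda$ exactly as in the paper. The closing steps (unrolling from $\mathbf{e}_0^i=\mathbf{0}$ and applying Jensen's inequality) also match the paper's argument.
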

\begin{proof}
	The proof is basically the same as Lemma \ref{lemma.Re} except for using \eqref{eq.apdx.var_zeta_2} rather than \eqref{eq.apdx.zeta_var} in \eqref{apdx.14} to tighten the bound. Hence, we only highlight the difference here. 
	\begin{align}
        \mathbb{E} \big[ \| \mathbf{e}_{t+1}^i \|^2 \big | \mathbf{g}_t^i] & = \mathbb{E} \big[ \| \mathbf{p}_t^i - \mathbf{\Delta}_t^i + \bm{\zeta}_t^i  \|^2 |\mathbf{g}_t^i \big]  \\
        & \leq (1 + \alpha) \mathbb{E} \big[ \| \mathbf{p}_t^i - \mathbf{\Delta}_t^i  \|^2 |\mathbf{g}_t^i \big] + \Big(1 + \frac{1}{\alpha} \Big) \mathbb{E} \big[ \| \bm{\zeta}_t^i  \|^2 |\mathbf{g}_t^i \big] \nonumber \\
        & \stackrel{(a)}{\leq} \Big(1 + \alpha + \theta ^2+ \frac{\theta^2}{\alpha} \Big) \mathbb{E} \big[ \| \mathbf{p}_t^i - \mathbf{\Delta}_t^i  \|^2 |\mathbf{g}_t^i \big] \nonumber \\
        & \stackrel{(b)}{\leq} \big(1 + \theta \big)^2 \mathbb{E} \big[ \| \mathbf{p}_t^i - \mathbf{\Delta}_t^i  \|^2 |\mathbf{g}_t^i \big] \nonumber \\
        &  \leq \delta \big(1 + \theta \big)^2 \mathbb{E} \big[ \| \mathbf{p}_t^i  \|^2 |\mathbf{g}_t^i \big] \nonumber \\
        & = \delta \big(1 + \theta \big)^2  \mathbb{E} \big[ \| \eta \mathbf{g}_t	^i + \mathbf{e}_t^i \|^2 |\mathbf{g}_t^i \big] \nonumber \\
        & \stackrel{(e)}{\leq } \Big( 1 + \frac{1}{\lambda} \Big) \delta \big(1 + \theta \big)^2 \eta^2 \| \mathbf{g}_t^i \|^2  + (1 + \lambda) \delta \big(1 + \theta \big)^2 \mathbb{E} \big[ \| \mathbf{e}_t^i \|^2 |\mathbf{g}_t^i \big]  \nonumber
    \end{align}
	where in (a) we uses \eqref{eq.apdx.var_zeta_2}; (b) chooses $\alpha = \theta$. 
\end{proof}

\begin{lemma}\label{lemma.zeta2}
    Choosing parameters the same as those in Lemma \ref{lemma.Re2}, it is guaranteed to have
	\begin{align*}
        \mathbb{E} \Big[ \Big{\|} \frac{1}{N}\sum_{i=1}^N \bm{\zeta}_t^i  \Big{\|}^2 \Big]  \leq \frac{1 - c+ c/\lambda}{1 - c} \frac{2 \delta \theta^2 \eta^2 (\sigma^2 + G^2)}{N}.
    \end{align*}
\end{lemma}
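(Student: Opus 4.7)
The plan is to follow the template of Lemma \ref{lemma.zeta} almost verbatim, swapping in the sharper variance bound \eqref{eq.apdx.var_zeta_2} and the updated per-worker error bound from Lemma \ref{lemma.Re2}. First I would invoke the unbiasedness of $\bm{\zeta}_t^i$ established in \eqref{eq.apdx.zeta_mean_2}, together with the independence of stochastic gradients across workers (and the independence of the two compressors used on different workers), to kill the cross terms:
\begin{equation*}
\mathbb{E}\Big[\Big\|\tfrac{1}{N}\sum_{i=1}^N \bm{\zeta}_t^i\Big\|^2\Big] = \tfrac{1}{N^2}\sum_{i=1}^N \mathbb{E}\big[\|\bm{\zeta}_t^i\|^2\big].
\end{equation*}

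Next I would bound each $\mathbb{E}[\|\bm{\zeta}_t^i\|^2]$ by successive conditioning. Conditioning on $(\mathbf{p}_t^i,\mathbf{\Delta}_t^i)$, the improved variance inequality \eqref{eq.apdx.var_zeta_2} yields $\mathbb{E}[\|\bm{\zeta}_t^i\|^2 \mid \mathbf{p}_t^i,\mathbf{\Delta}_t^i] \leq \theta^2 \|\mathbf{p}_t^i - \mathbf{\Delta}_t^i\|^2$; this is the one place where iConEF-v1 beats ConEF and produces the $\theta^2$ rather than $\theta$. Taking expectation over the randomness of ${\cal Q}$ and applying the contractivity Assumption \ref{as.4} gives $\mathbb{E}[\|\bm{\zeta}_t^i\|^2 \mid \mathbf{p}_t^i] \leq \delta\theta^2 \|\mathbf{p}_t^i\|^2$. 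Then expanding $\mathbf{p}_t^i = \eta\mathbf{g}_t^i + \mathbf{e}_t^i$ with the elementary inequality $\|\mathbf{a}+\mathbf{b}\|^2 \leq 2\|\mathbf{a}\|^2 + 2\|\mathbf{b}\|^2$ yields
\begin{equation*}
\mathbb{E}\big[\|\bm{\zeta}_t^i\|^2\big] \leq 2\delta\theta^2\eta^2\,\mathbb{E}\big[\|\mathbf{g}_t^i\|^2\big] + 2\delta\theta^2\,\mathbb{E}\big[\|\mathbf{e}_t^i\|^2\big].
\end{equation*}

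The last step is to average over workers and close the bound. Assumption \ref{as.2} controls the gradient term by $\sigma^2 + G^2$, and Lemma \ref{lemma.Re2} (the analog of Lemma \ref{lemma.Re} tuned for iConEF-v1) controls $\frac{1}{N}\sum_i \mathbb{E}[\|\mathbf{e}_t^i\|^2]$ by $\frac{c}{(1-c)\lambda}\eta^2(\sigma^2+G^2)$ under the parameter choice $\delta = \frac{c}{(1+\lambda)(1+\theta)^2}$. Summing the two contributions yields
\begin{equation*}
\tfrac{1}{N}\sum_{i=1}^N \mathbb{E}\big[\|\bm{\zeta}_t^i\|^2\big] \leq \tfrac{1-c+c/\lambda}{1-c}\,2\delta\theta^2\eta^2(\sigma^2+G^2),
\end{equation*}
and dividing by $N$ (from the cross-term cancellation) gives the claimed bound. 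There is no real obstacle here: the argument is a mechanical adaptation of Lemma \ref{lemma.zeta}, and the only subtlety is making sure that the hypothesis used by Lemma \ref{lemma.Re2} (namely $0 \leq \theta < 1$ and the $\delta = c/[(1+\lambda)(1+\theta)^2]$ scaling) is consistent with what we need here—which it is, since it is exactly the parameter regime assumed in Theorem \ref{thm.c3}.
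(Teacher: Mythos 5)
Your proposal is correct and follows exactly the route the paper intends: the paper's proof of this lemma is simply "same as Lemma \ref{lemma.zeta} but with \eqref{eq.apdx.var_zeta_2} in place of \eqref{eq.apdx.zeta_var}," which is precisely the substitution you carry out, including the correct use of the intermediate per-worker bound $\frac{1}{N}\sum_i \mathbb{E}[\|\mathbf{e}_t^i\|^2] \leq \frac{c}{(1-c)\lambda}\eta^2(\sigma^2+G^2)$ from the proof of Lemma \ref{lemma.Re2} (the analog of \eqref{apdx.need_to_use}) rather than the averaged version in its statement.
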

\begin{proof}
	The proof is exactly the same as Lemma \ref{lemma.zeta} except for using \eqref{eq.apdx.var_zeta_2} rather than \eqref{eq.apdx.zeta_var}. Hence it is omitted here.
\end{proof}

\textbf{Proof of Theorem \ref{thm.c3}.} The rest of proof is the same as Theorem \ref{thm.c2}, except for i) applying Lemma \ref{lemma.Re2} in \eqref{eq.apdx.term2}; and ii) applying Lemma \ref{lemma.zeta2} in \eqref{eq.apdx.term4}.


\section{Proof of Theorem \ref{thm.c3_v2}}
\label{sec:proofthmc3v2}
\begin{proof}
The basic idea is again to define $\mathbf{e}_{t+1}^i = \tilde{\mathbf{e}}_{t+1}^i + \mathbf{q}_{t+1}^i$ and to view $\mathbf{e}_{t+1}^i$ as a single error compressor so that we can reuse the proof of Theorem \ref{thm.c2}. Applying \citep[Theorem 3]{horvath2020better}, we have $\mathbf{e}_{t+1}^i$ is unbiased with bounded variance. In particular, we have
\begin{align}
	&\mathbb{E} \big[  \mathbf{e}_{t+1}^i - (\mathbf{p}_t^i - \mathbf{\Delta}_t^i)  | \mathbf{p}_t^i, \mathbf{\Delta}_t^i \big] = \mathbb{E} \big[ (\tilde{ \mathbf{e}}_{t+1}^i + \mathbf{q}_{t+1}^i ) - (\mathbf{p}_t^i - \mathbf{\Delta}_t^i)  | \mathbf{p}_t^i, \mathbf{\Delta}_t^i \big] = \mathbf{0} \\
	&  \mathbb{E} \big[ \| \mathbf{e}_{t+1}^i  - (\mathbf{p}_t^i - \mathbf{\Delta}_t^i) \|^2 | \mathbf{p}_t^i, \mathbf{\Delta}_t^i \big] = \mathbb{E} \big[ \| (\tilde{\mathbf{e}}_{t+1}^i + \mathbf{q}_{t+1}^i) - (\mathbf{p}_t^i - \mathbf{\Delta}_t^i) \|^2 | \mathbf{p}_t^i, \mathbf{\Delta}_t^i \big] \label{eq.c3_v2_var} \\
	&~~~~~~~~~~~~~~~~~~~~~~~~~~~~~~~~~~~~~~~~~~~~~~~~~~~~~~ \leq \delta \theta \| \mathbf{p}_t^i - \mathbf{\Delta}_t^i \|^2. \nonumber
\end{align}

Therefore, we can reuse the derivation in Theorem \ref{thm.c2}. The only change we need is to modify  Lemmas \ref{lemma.Re} and \ref{lemma.zeta} to Lemmas \ref{lemma.Re3} and \ref{lemma.zeta3}, respectively. After applying these modified lemmas, Theorem \ref{thm.c3_v2} directly follows.

\end{proof}

\begin{lemma}\label{lemma.Re3}
	Suppose that there exists $c\in (0,1)$ and $\lambda>0$ such that $\delta = \frac{c}{(1+\lambda)(1 + \sqrt{\theta})^2}$. It is guaranteed to have
    \begin{align*}
        \mathbb{E} \Big[ \Big\| \frac{1}{N} \sum_{i=1}^N \mathbf{e}_{t+1}^i \Big\|^2 \Big] \leq  \frac{c}{(1-c)\lambda} \eta^2 (\sigma^2 + G^2) , \forall t \geq 0.
    \end{align*}
\end{lemma}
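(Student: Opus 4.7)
My plan is to closely mirror the proof of Lemma \ref{lemma.Re}, since the roles of the unbiased compressor are analogous; the only genuine change is the variance bound on the aggregate error. Concretely, I will set $\bm{\zeta}_t^i := \mathbf{e}_{t+1}^i - (\mathbf{p}_t^i - \mathbf{\Delta}_t^i)$ and invoke the two facts already established just before the statement: $\mathbb{E}[\bm{\zeta}_t^i \mid \mathbf{p}_t^i, \mathbf{\Delta}_t^i] = \mathbf{0}$, and, via \eqref{eq.c3_v2_var}, $\mathbb{E}[\|\bm{\zeta}_t^i\|^2 \mid \mathbf{p}_t^i, \mathbf{\Delta}_t^i] \leq \delta\theta\,\|\mathbf{p}_t^i - \mathbf{\Delta}_t^i\|^2$. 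Note the prefactor is $\delta\theta$ rather than the $\theta$ that appeared in Lemma \ref{lemma.Re}; this is the single place where \iname-v2 differs and will ultimately allow me to keep the same normalization $\delta = c/[(1+\lambda)(1+\sqrt{\theta})^2]$.

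First I would expand $\mathbb{E}[\|\mathbf{e}_{t+1}^i\|^2 \mid \mathbf{g}_t^i] = \mathbb{E}[\|\mathbf{p}_t^i - \mathbf{\Delta}_t^i + \bm{\zeta}_t^i\|^2 \mid \mathbf{g}_t^i]$ with Young's inequality at scale $\alpha>0$, giving a prefactor $1 + \alpha + \delta\theta + \delta\theta/\alpha$ on $\|\mathbf{p}_t^i - \mathbf{\Delta}_t^i\|^2$. Optimizing in $\alpha$ yields the constant $(1+\sqrt{\delta\theta})^2$; since Assumption \ref{as.4} gives $\delta < 1$, this is bounded above by $(1+\sqrt{\theta})^2$, which is exactly the constant that appears in the hypothesis on $\delta$. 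From here I would apply Assumption \ref{as.4} to bound $\|\mathbf{p}_t^i - \mathbf{\Delta}_t^i\|^2 \leq \delta \|\mathbf{p}_t^i\|^2$, substitute $\mathbf{p}_t^i = \eta \mathbf{g}_t^i + \mathbf{e}_t^i$, and apply a second Young inequality with parameter $\lambda$ to split into $(1+1/\lambda)\eta^2\|\mathbf{g}_t^i\|^2 + (1+\lambda)\|\mathbf{e}_t^i\|^2$.

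This produces a one-step recursion of the form $\mathbb{E}[\|\mathbf{e}_{t+1}^i\|^2] \leq B_\lambda \eta^2 \mathbb{E}[\|\mathbf{g}_t^i\|^2] + A_\lambda \mathbb{E}[\|\mathbf{e}_t^i\|^2]$ with $A_\lambda = (1+\lambda)\delta(1+\sqrt{\theta})^2 = c$ and $B_\lambda = (1+1/\lambda)\delta(1+\sqrt{\theta})^2 = c/\lambda$. Averaging over workers, bounding $\frac{1}{N}\sum_i \mathbb{E}[\|\mathbf{g}_t^i\|^2] \leq \sigma^2 + G^2$ via Assumption \ref{as.2}, and unrolling the geometric recursion from the initialization $\mathbf{e}_0^i = \mathbf{0}$ (using $A_\lambda = c \in (0,1)$) gives $\frac{1}{N}\sum_i \mathbb{E}[\|\mathbf{e}_t^i\|^2] \leq \frac{c}{(1-c)\lambda}\eta^2(\sigma^2+G^2)$. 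A final application of Jensen's inequality, $\|\frac{1}{N}\sum_i \mathbf{e}_t^i\|^2 \leq \frac{1}{N}\sum_i \|\mathbf{e}_t^i\|^2$, closes the argument.

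The main (and really the only) subtlety is recognizing that the variance improvement from $\theta$ to $\delta\theta$ does not require tightening the parameter relationship between $\delta$, $c$, and $\lambda$: since $\delta<1$, the optimized Young constant $(1+\sqrt{\delta\theta})^2$ is dominated by $(1+\sqrt{\theta})^2$, so the same $\delta$ normalization as in Lemma \ref{lemma.Re} suffices and the recursion closes with exactly the same $A_\lambda, B_\lambda$. Once that observation is in place, the rest is bookkeeping identical to Lemma \ref{lemma.Re}.
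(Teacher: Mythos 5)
Your proposal is correct and follows essentially the same route as the paper's own proof: the same definition of $\bm{\zeta}_t^i$, the same Young-inequality expansion with optimized $\alpha=\sqrt{\delta\theta}$, the same observation that $(1+\sqrt{\delta\theta})^2\leq(1+\sqrt{\theta})^2$ because $\delta<1$, and the same recursion and unrolling as in Lemma \ref{lemma.Re}. No gaps.
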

\begin{proof}
    The proof is basically the same as Lemma \ref{lemma.Re}, hence we only highlight the different steps in \eqref{apdx.14} here. Define $\bm{\zeta}_t^i = \mathbf{e}_{t+1}^i - \mathbf{p}_t^i + \mathbf{\Delta}_t^i$, we have
    \begin{align}\label{eq.apdx.re3.tmp1}
        \mathbb{E} \big[ \| \mathbf{e}_{t+1}^i \|^2 \big | \mathbf{g}_t^i] & = \mathbb{E} \big[ \| \mathbf{p}_t^i - \mathbf{\Delta}_t ^i+ \bm{\zeta}_t^i  \|^2 |\mathbf{g}_t^i \big]  \\
        & \leq (1 + \alpha) \mathbb{E} \big[ \| \mathbf{p}_t^i - \mathbf{\Delta}_t^i  \|^2 |\mathbf{g}_t^i \big] + \Big(1 + \frac{1}{\alpha} \Big) \mathbb{E} \big[ \| \bm{\zeta}_t^i  \|^2 |\mathbf{g}_t^i \big] \nonumber \\
        & \stackrel{(a)}{\leq} \Big(1 + \alpha + \delta \theta + \frac{\delta \theta}{\alpha} \Big) \mathbb{E} \big[ \| \mathbf{p}_t^i - \mathbf{\Delta}_t^i  \|^2 |\mathbf{g}_t^i \big] \nonumber \\
        & \stackrel{(b)}{\leq} \big(1 + \sqrt{\delta \theta} \big)^2 \mathbb{E} \big[ \| \mathbf{p}_t^i - \mathbf{\Delta}_t^i  \|^2 |\mathbf{g}_t^i \big] \nonumber \\
        & \stackrel{(c)}{\leq} \big(1 + \sqrt{ \theta} \big)^2 \mathbb{E} \big[ \| \mathbf{p}_t^i - \mathbf{\Delta}_t^i  \|^2 |\mathbf{g}_t^i \big] \nonumber 
      \end{align}
    where (a) is the result of \eqref{eq.c3_v2_var}; (b) is by choosing $\alpha = \sqrt{\delta\theta}$; and (c) follows from $\delta<1$. 
\end{proof}

\begin{lemma}\label{lemma.zeta3}
    Choosing the parameters the same as those in Lemma \ref{lemma.Re3}, it is guaranteed in \iname-v2 to have
     \begin{align*}
        \mathbb{E} \Big[ \Big{\|} \frac{1}{N}\sum_{i=1}^N \bm{\zeta}_t^i  \Big{\|}^2 \Big]  \leq \frac{1 - c+ c/\lambda}{1 - c} \frac{2 \delta^2 \theta \eta^2 (\sigma^2 + G^2)}{N}.
    \end{align*}
\end{lemma}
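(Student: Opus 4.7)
\textbf{Proof Plan for Lemma \ref{lemma.zeta3}.} The strategy mirrors Lemmas \ref{lemma.zeta} and \ref{lemma.zeta2}: decouple the cross-worker terms using unbiasedness, then chain the two compressors in the right order to pick up a factor of $\delta^2\theta$ (instead of $\delta\theta^2$ as in v1).

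First I would observe that $\mathbf{e}_{t+1}^i$ is an unbiased estimator of $\mathbf{p}_t^i - \mathbf{\Delta}_t^i$: conditioning on $\mathbf{p}_t^i$ and $\mathbf{\Delta}_t^i$, the tower property together with the unbiasedness of ${\cal C}$ applied to $\mathbf{p}_t^i - \mathbf{\Delta}_t^i - \tilde{\mathbf{e}}_{t+1}^i$ gives $\mathbb{E}[\mathbf{q}_{t+1}^i + \tilde{\mathbf{e}}_{t+1}^i \mid \mathbf{p}_t^i,\mathbf{\Delta}_t^i] = \mathbf{p}_t^i - \mathbf{\Delta}_t^i$. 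Hence $\mathbb{E}[\bm{\zeta}_t^i \mid \mathbf{p}_t^i,\mathbf{\Delta}_t^i] = \mathbf{0}$, and by the mutual independence of $\mathbf{g}_t^i$ across workers (and the independence of the compressor randomness) the cross terms vanish, yielding
\begin{align*}
\mathbb{E}\Big[\Big\|\tfrac{1}{N}\sum_{i=1}^N \bm{\zeta}_t^i\Big\|^2\Big] = \tfrac{1}{N^2}\sum_{i=1}^N \mathbb{E}[\|\bm{\zeta}_t^i\|^2].
\end{align*}

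Next I would chain the two compressors. From \eqref{eq.c3_v2_var}, $\mathbb{E}[\|\bm{\zeta}_t^i\|^2 \mid \mathbf{p}_t^i,\mathbf{\Delta}_t^i] \leq \delta\theta\,\|\mathbf{p}_t^i - \mathbf{\Delta}_t^i\|^2$. Taking expectation over $\mathbf{\Delta}_t^i = {\cal Q}(\mathbf{p}_t^i)$ conditioned on $\mathbf{p}_t^i$ and invoking Assumption \ref{as.4} gives $\mathbb{E}[\|\mathbf{p}_t^i - \mathbf{\Delta}_t^i\|^2 \mid \mathbf{p}_t^i] \leq \delta\,\|\mathbf{p}_t^i\|^2$; composing with the previous bound yields $\mathbb{E}[\|\bm{\zeta}_t^i\|^2 \mid \mathbf{p}_t^i] \leq \delta^2\theta\,\|\mathbf{p}_t^i\|^2$. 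This is the step that generates the sharper $\delta^2$ dependence compared to Lemma \ref{lemma.zeta}. Substituting $\mathbf{p}_t^i = \eta \mathbf{g}_t^i + \mathbf{e}_t^i$, applying $\|a+b\|^2 \leq 2\|a\|^2 + 2\|b\|^2$, and taking total expectation gives $\mathbb{E}[\|\bm{\zeta}_t^i\|^2] \leq 2\delta^2\theta\eta^2\mathbb{E}\|\mathbf{g}_t^i\|^2 + 2\delta^2\theta\,\mathbb{E}\|\mathbf{e}_t^i\|^2$.

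Finally I would close the recursion by plugging in the standard bounds: Assumption \ref{as.2} gives $\mathbb{E}\|\mathbf{g}_t^i\|^2 \leq \sigma^2 + G^2$, and for the error vector I would reuse the averaged per-worker bound $\frac{1}{N}\sum_i \mathbb{E}\|\mathbf{e}_t^i\|^2 \leq \frac{c}{(1-c)\lambda}\eta^2(\sigma^2 + G^2)$, which is obtained as an intermediate step of Lemma \ref{lemma.Re3} in direct analogy with \eqref{apdx.need_to_use}. Averaging over $i$, multiplying by $1/N$, and combining constants produces the claimed factor $\tfrac{1-c+c/\lambda}{1-c}$.

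The main obstacle I anticipate is bookkeeping around the conditioning order in the second step: one must first condition on $(\mathbf{p}_t^i,\mathbf{\Delta}_t^i)$ to extract $\delta\theta$ from \eqref{eq.c3_v2_var}, then condition only on $\mathbf{p}_t^i$ to extract the second $\delta$ from Assumption \ref{as.4}. Inverting the order or conditioning on $\mathbf{p}_t^i$ alone throughout loses a $\delta$ factor and collapses the bound back to the Lemma \ref{lemma.zeta} rate. Once that is set correctly, the derivation is a direct transcription of the v1 argument, and the analogue of Lemma \ref{lemma.Re3} ensures the recursion for $\mathbb{E}\|\mathbf{e}_t^i\|^2$ remains uniformly bounded in $t$ under the stated parameter choice.
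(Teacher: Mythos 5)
Your proposal is correct and follows essentially the same route as the paper's proof: the cross terms vanish by the conditional unbiasedness of $\bm{\zeta}_t^i$, the bound $\delta\theta\|\mathbf{p}_t^i-\mathbf{\Delta}_t^i\|^2$ from \eqref{eq.c3_v2_var} is composed with Assumption \ref{as.4} to pick up the second $\delta$, and the recursion is closed via Assumption \ref{as.2} and the averaged error bound from Lemma \ref{lemma.Re3}. Your remark about the conditioning order (first on $(\mathbf{p}_t^i,\mathbf{\Delta}_t^i)$, then on $\mathbf{p}_t^i$ alone) is exactly the step the paper performs implicitly when passing from \eqref{eq.c3_v2_var} to $\mathbb{E}[\|\bm{\zeta}_t^i\|^2\mid\mathbf{p}_t^i]\leq\delta^2\theta\|\mathbf{p}_t^i\|^2$.
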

\begin{proof}
	The proof is almost the same as that of Lemma \ref{lemma.zeta}, hence we only highlight the differences.
	
    From \eqref{eq.c3_v2_var}, we have
    \begin{align*}
        \mathbb{E} \big[ \| \bm{\zeta}_t^i \|^2  | \mathbf{p}_t^i \big]   \leq \delta^2 \theta \|\mathbf{p}_t^i  \|^2  = \delta^2 \theta  \|\eta \mathbf{g}_t^i	+ \mathbf{e}_t^i \|^2 .
    \end{align*}
    Finally, taking expectation w.r.t. $\mathbf{p}_t$, we have
    \begin{align*}
        \mathbb{E} \big[ \| \bm{\zeta}_t^i \|^2  \big]   & \leq  \delta^2 \theta \mathbb{E}\big[ \|\eta \mathbf{g}_t^i	+ \mathbf{e}_t^i \|^2  \big] \leq 2\delta^2\theta \eta^2 \mathbb{E}\big[ \| \mathbf{g}_t^i \|^2] 	+ 2\delta^2\theta  \mathbb{E}\big[ \| \mathbf{e}_t^i \|^2  \big]  \\
        & \leq \frac{1 - c+ c/\lambda}{1 - c} 2 \delta^2 \theta \eta^2 (\sigma^2 + G^2). 
    \end{align*}
    where we use Assumption \ref{as.2} and Lemma \ref{lemma.Re3} simultaneously in the last inequality With this modification, Lemma \ref{lemma.zeta3} follows directly.
\end{proof}

\section{Proof of Theorem \ref{thm.gen}}
	We focus on \name. The proofs for \iname-v1 and \iname-v2 follow the same steps (simply by defining $\mathbf{e}_{t+1}^i = \tilde{\mathbf{e}}_{t+1}^i + \mathbf{q}_{t+1}^i$) and hence are omitted here. Recall that in \eqref{apdx.z_iter} we have $\mathbf{z}_{t+1} = \mathbf{z}_t - \frac{\eta}{N}\sum_{i=1}^N  \mathbf{g}_t^i - \frac{1}{N}\sum_{i=1}^N \bm{\zeta}_t^i$ with $\mathbf{z}_t = \mathbf{x}_t -\frac{1}{N}\sum_{i=1}^N \mathbf{e}_t^i$. This implies that 
	\begin{align*}
		\mathbf{x}_{t+1} - \frac{1}{N} \sum_{i=1}^N \mathbf{e}_{t+1}^i = \mathbf{x}_t - \frac{1}{N} \sum_{i=1}^N  \mathbf{e}_t^i - \frac{\eta}{N}\sum_{i=1}^N  \mathbf{g}_t^i - \frac{1}{N}\sum_{i=1}^N \bm{\zeta}_t^i.
	\end{align*}
	Conditioned on $\{\mathbf{x}_t, \{ \mathbf{e}_t^i\}_i, \{\mathbf{g}_t^i\}_i \}:={\cal P}_t $ and only take expectation w.r.t. the randomness of ${\cal C}$, we have
	\begin{align*}
		\mathbb{E}\Big[ \mathbf{x}_{t+1} - \frac{1}{N} \sum_{i=1}^N \mathbf{e}_{t+1}^i | {\cal P}_t \Big] &  = \mathbb{E} \Big[ \mathbf{x}_t - \frac{1}{N} \sum_{i=1}^N  \mathbf{e}_t^i - \frac{\eta}{N}\sum_{i=1}^N  \mathbf{g}_t^i \Big | {\cal P}_t \Big] - \mathbb{E}\Big[ \frac{1}{N}\sum_{i=1}^N  \bm{\zeta}_t^i \Big| {\cal P}_t \Big]  \\
		& = \mathbb{E} \Big[ \mathbf{x}_t - \frac{1}{N} \sum_{i=1}^N  \mathbf{e}_t^i - \frac{\eta}{N}\sum_{i=1}^N  \mathbf{g}_t^i \Big | {\cal P}_t \Big].
	\end{align*}
	Then we further take expectation w.r.t. $ {\cal P}_t $ and unroll $\mathbb{E} \big[ \mathbf{x}_t - \frac{1}{N} \sum_{i=1}^N \mathbf{e}_t^i \big]$, we can get
	\begin{align*}
		\mathbb{E}\Big[ \mathbf{x}_{t+1} - \frac{1}{N} \sum_{i=1}^N \mathbf{e}_{t+1}^i \Big]  = \mathbf{x}_0 - \eta \mathbb{E} \Big[  \sum_{\tau = 0}^ t \frac{1}{N} \sum_{i=1}^N \mathbf{g}_\tau^i \Big].
	\end{align*}
	
	Given that $\mathbf{x}_0 \in {\cal R}(\mathbf{A}^\top)$, and $\mathbf{g}_t^i \in {\cal R}(\mathbf{A}^\top), \forall t$, it is straightforward to see that $\mathbb{E}\big[  \mathbf{x}_{t+1} - \frac{1}{N} \sum_{i=1}^N \mathbf{e}_{t+1}^i  \big] \in {\cal R}(\mathbf{A}^\top)$. Hence the proof is completed.


\end{document}